\newdimen\NetTableWidth
\definecolor{Green}{HTML}{17891a}
\definecolor{DarkGreen}{HTML}{054802}
\definecolor{SmokeBlue}{HTML}{2F5E90}
\newcommand{\crlID}{element}
\newcommand{\equivIotaA}{\sim_{\iota}}
\newcommand{\iotaset}{\mathfrak{I}}
\renewcommand{\pa}[2]{{#1}_{\text{pa}({#2})}}
\newcommand{\nd}[2]{{#1}_{\text{nd}({#2})}}
\newcommand{\bigslant}[2]{{\raisebox{.2em}{$#1$}\left/\raisebox{-.2em}{$#2$}\right.}}
\newcommand{\kld}{D_{\text{KL}}}
\newtcolorbox{empheqboxed}{colback=Gray!20, 
 colframe=white,
 width=\textwidth,
 sharpish corners,
 top=1mm, 
 bottom=0pt,
 left=2pt,
 right=2pt
}
\title{Unifying Causal Representation Learning with the Invariance Principle}
\begin{document}
\doparttoc 
\faketableofcontents 

\makeatletter \renewcommand\AB@affilsepx{\, \,   \protect\Affilfont} \makeatother

\author{\textbf{Dingling Yao}}
\author{\textbf{Dario Rancati}}
\author{\textbf{Riccardo Cadei}}
\author{\textbf{Marco Fumero}}
\author{\textbf{Francesco Locatello}}
\affil{Institute of Science and Technology Austria}

\maketitle

\begin{abstract}
  \looseness=-1Causal representation learning (CRL) aims at recovering latent causal variables from high-dimensional observations to solve causal downstream tasks, such as predicting the effect of new interventions or more robust classification. 
  A plethora of methods have been developed, each tackling carefully crafted problem settings that lead to different types of identifiability. 
  These different settings are widely assumed to be important because they are often linked to different rungs of Pearl's causal hierarchy, even though this correspondence is not always exact.
    This work shows that instead of strictly conforming to this hierarchical mapping, many causal representation learning approaches methodologically align their representations with inherent data symmetries.
  Identification of causal variables is guided by invariance principles that are not necessarily causal. 
  This result allows us to unify many existing approaches in a single method that can mix and match different assumptions, including non-causal ones, based on the invariance relevant to the problem at hand. 
  It also significantly benefits applicability, which we demonstrate by improving treatment effect estimation on real-world high-dimensional ecological data. Overall, this paper clarifies the role of causal assumptions in the discovery of causal variables and shifts the focus to preserving data symmetries.
\end{abstract}

\section{Introduction}
Causal representation learning~\citep{scholkopf2021toward} posits that many high-dimensional perceptual data can be described through a simplified latent structure specified by a few low-dimensional causally-related variables.
Discovering hidden causal structures from data has been a long-standing goal across many scientific disciplines, spanning neuroscience~\citep{vigario1997independent,brown2001independent}, communication theory~\citep{ristaniemi1999performance,donoho2006compressed}, economics~\citep{angrist2009mostly} and social science~\citep{antonakis2011counterfactuals}. From the machine learning perspective, algorithms and models integrated with causal structure are often proven to be more robust at distribution shift~\citep{ahuja2022invarianceprinciplemeetsinformation,bareinboim2016causal,rojas2018invariant}, providing better out-of-distribution generalization results and reliable agent planning~\citep{fumero2024leveraging,seitzer2021causal,urpi2024causal}.
The general goal of CRL approaches is formulated as to 
provably identify ground-truth latent causal variables and their causal relations (up to certain ambiguities).
Many existing approaches in causal representation learning carefully formulate their problem settings to guarantee identifiability and justify the assumptions within the framework of Pearl's causal hierarchy, such as ``observational, interventional, or counterfactual CRL" \citep{von2024nonparametric,ahuja2023interventional,brehmer2022weakly,buchholz2023learning,zhang2024identifiability,varici2024general}. 
Yet, several emerging lines of work reveal that not all approaches adhere strictly to this framework; for instance, the problem setting of temporal CRL works~\citep{lachapelle2022disentanglement,lippe2022causal,lippe2022citris,lippe2023biscuit} does not always align straightforwardly with existing categories.
They often assume that an individual trajectory is ``intervened'' upon, but this is not an intervention in the traditional sense, as noise variables are not resampled. It is also not a counterfactual, as the value of non-intervened variables can change due to default dynamics. 
Similarly, domain generalization \citep{sagawa2019distributionally,krueger2021out,ahuja2022invarianceprinciplemeetsinformation} and certain multi-task learning approaches~\citep{lachapelle2022synergies,fumero2024leveraging} are sometimes framed as informally related to CRL. 
However, the precise relation to causality is not always clearly articulated. 
Consequently, a wide range of methods and empirical findings has emerged, although some of these approaches rely on assumptions that might be too narrowly tailored for practical, real-world applications. 
For example, \citet{cadei2024smoke} collected a dataset for estimating treatment effects from high-dimensional observations in real-world ecology experiments. Despite the clear causal focus of the benchmark, they note that even when multiple views and interventions are accessible, neither existing multiview nor interventional CRL methods are directly applicable due to mismatching assumptions.

\looseness=-1 This paper contributes a unified framework of many existing CRL works through the lens of invariance~\citep{peters2014causal, heinze2018invariant,arjovsky2020invariantriskminimization}.
We observe that \emph{many existing CRL approaches share methodological similarities, particularly in aligning the representation with known data symmetries}, while differing primarily in how the invariance principle is invoked.
Typically, the invariance principle is formulated implicitly within the assumed data-generating process. By making it explicit, we demonstrate that latent causal variable identification originates from various data subsets characterized by inherent equivalence relations, which are often known a priori—with a few exceptions.
This explicit formulation not only unifies disparate CRL methods but also offers clear practical advantages.
First, it helps clarify the alignment between seemingly different categories of CRL methods, contributing to a more coherent and accessible framework for understanding CRL.
This perspective may also allow for the integration of multiple invariance relations in latent variable identification, which could improve the flexibility of these methods in certain practical settings. 
Additionally, our theory highlights a critical discrepancy between the assumptions required for causal graph learning and those necessary for identifying causal variables:
While graph learning often requires explicit causal assumptions such as interventions, the invariance principles for variable identification do not have to be causal.
Last but not least, this formulation of invariance relation links CRL to many existing representation learning areas outside of causality, including invariant training~\citep{arjovsky2020invariantriskminimization,ahuja2022invarianceprinciplemeetsinformation}, domain adaptation~\citep{sagawa2019distributionally,krueger2021out}, and geometric deep learning ~\citep{cohen2016group,bronstein2017geometric,bronstein2104geometric}.

We highlight our contributions as follows:
\begin{itemize}[leftmargin=*]
    \item \looseness=-1 We propose a unified framework for existing CRL approaches that leverages the invariance principles and proving latent variable identifiability in this general setting~\pcref{sec:identifiability}. 
    We show that 30 existing identification results can be seen as special cases directly implied by our framework~\pcref{tab:related_work}. 
    This approach also enables us to derive new results, including latent variable identifiability from one imperfect intervention per node in the nonparametric setting~\pcref{cor:single_node_id}.
    
    \item 
    \looseness=-1 
    In addition to employing different methods, many CRL works use varying definitions of ``identifiability." We formalize these definitions at different levels of granularity and highlight their interconnections~\pcref{prop:granularity_ident}. Moreover, we show that existing CRL algorithms can achieve different levels of identifiability based on additional (e.g., parametric) assumptions, obviating the need for separate proofs in each case~\pcref{prop:transition_identification}.
      
    \item \looseness=-1 Upon the identifiability of the latent variables, we discuss the necessary causal assumptions for graph identification and the possibility of partial graph identification using the language of causal consistency. 
    With this, we draw a distinction between the causal assumptions necessary for graph discovery (such as interventions or graphical assumptions) and those required for variable discovery~\pcref{app:causal_graph_identification}. This distinction relaxes the stringent requirements typically imposed for latent variable identifiability, thereby enhancing the framework's applicability in real-world settings.
  
    \item Our framework is broadly applicable across a range of settings. 
    We observe improved results on real-world experimental ecology data using a high-dimensional causal inference benchmark by~\citet{cadei2024smoke}~\pcref{subsec:istant}. 
    Additionally, we present a synthetic ablation to demonstrate that existing methods, which assume access to interventions, actually only require a form of distributional invariance to identify variables. This invariance does not necessarily need to correspond to a valid causal intervention~\pcref{subsec:synthetic_ablation}. 
\end{itemize}

\renewcommand{\tabularxcolumn}[1]{m{#1}}
\newcolumntype{s}{>{\hsize=.7\hsize \raggedright\arraybackslash}X}
\newcolumntype{Y}{>{\hsize=1.3\hsize \centering\arraybackslash}X}
\newcolumntype{L}{>{\hsize=1.0\hsize \raggedright\arraybackslash}X}

\begin{table}[t]
    \centering
    \noindent
    \renewcommand{\arraystretch}{1.8}%
    \caption{\looseness=-1 Concrete examples of CRL categories and domain generalization are unified by our framework, their invariance, and a non-exhaustive list of corresponding references. The invariant partition $A$ is highlighted with a \textcolor{SmokeBlue}{smoke blue} box ($^*$: $\Ical_{\zb_A^1}$ represents the interventional target for $\zb^1$ which is $\{1\}$ in this example). For further technical details and an in-depth discussion on how various approaches fit within our unified framework, see Appendix~\cref{app:related_works}.}
    \label{tab:examples}
    \resizebox{\textwidth}{!}{
    \begin{tabularx}{\linewidth}{sYLL}
    \toprule
      \rowcolor{Gray!20}\textbf{Category} & \textbf{Example} & \textbf{Invariance} & \textbf{Related work}\\
     \midrule
     \small Multiview CRL & \resizebox{1.3\linewidth}{!}{\begin{tikzpicture}[baseline=(current bounding box.center)]

    \pgfdeclarelayer{background}
    \pgfsetlayers{background,main}

    \tikzset{
        latent/.style={circle, draw, minimum size=0.6cm, font=\footnotesize, fill=white},
        observable/.style={circle, draw, minimum size=0.6cm, font=\footnotesize, fill=gray!20},
        group/.style={rectangle, draw=SmokeBlue, fill=SmokeBlue!10, rounded corners, inner sep=1pt},
        arrow/.style={-Stealth, thick},
        edge/.style={-},
    }

    \node[latent] (z11) at (0, 0) {$\mathbf{z}_1^1$};
    \node[latent, right=0.15cm of z11] (z12) {$\mathbf{z}_2^1$};
    \node[latent, right=0.25cm of z12] (z13) {$\mathbf{z}_3^1$};
    \node[latent, right=0.25cm of z13] (z14) {$\mathbf{z}_4^1$};
    \node[observable, below=0.25cm of z12] (x1) {$\mathbf{x}^1$};

    \begin{pgfonlayer}{background}
        \node[group, fit={(z13) (z14)}, yshift=0pt] (group1) {};
    \end{pgfonlayer}

    \draw[arrow] (z11) -- (x1);
    \draw[arrow] (z12) -- (x1);
    \draw[arrow] (z13) -- (x1);
    \draw[arrow] (z14) -- (x1);
    \draw[arrow] (z12) -- (z13);
    \draw[arrow] (z13) -- (z14);
    \draw[arrow] (z11) to[bend left] (z13);

    \node[latent, right=0.3cm of z14] (z21) {$\mathbf{z}_1^2$};
    \node[latent, right=0.15cm of z21] (z22) {$\mathbf{z}_2^2$};
    \node[latent, right=0.25cm of z22] (z23) {$\mathbf{z}_3^2$};
    \node[latent, right=0.25cm of z23] (z24) {$\mathbf{z}_4^2$};
    \node[observable, below=0.25cm of z22] (x2) {$\mathbf{x}^2$};

    \begin{pgfonlayer}{background}
        \node[group, fit={(z23) (z24)}, yshift=0pt] (group2) {};
    \end{pgfonlayer}

    \draw[arrow] (z21) -- (x2);
    \draw[arrow] (z22) -- (x2);
    \draw[arrow] (z23) -- (x2);
    \draw[arrow] (z24) -- (x2);
    \draw[arrow] (z22) -- (z23);
    \draw[arrow] (z23) -- (z24);
    \draw[arrow] (z21) to[bend left] (z23);

\end{tikzpicture}} & \small  Sample level invariance \(\zb_A^1 = \zb^2_A\) & \small  \citet{locatello2020weakly,von2021self,gresele2019incomplete}\\
     \small Interventional CRL  (two interventions per node) & \resizebox{1.3\linewidth}{!}{\begin{tikzpicture}[baseline=(current bounding box.center)]

    \pgfdeclarelayer{background}
    \pgfsetlayers{background,main}

    \tikzset{
        latent/.style={circle, draw, minimum size=0.6cm, font=\footnotesize, fill=white},
        observable/.style={circle, draw, minimum size=0.6cm, font=\footnotesize, fill=gray!20},
        highlight/.style={rectangle, draw=SmokeBlue, fill=SmokeBlue!20, rounded corners, inner sep=1pt},
        hammer/.style={fill=black},
        arrow/.style={-Stealth, thick},
        edge/.style={-},
    }

    \node[latent] (z11) at (0, 0) {$\mathbf{z}_1^1$};
    \node[latent, right=0.15cm of z11] (z12) {$\mathbf{z}_2^1$};
    \node[latent, right=0.25cm of z12] (z13) {$\mathbf{z}_3^1$};
    \node[latent, right=0.25cm of z13] (z14) {$\mathbf{z}_4^1$};
    \node[observable, below=0.25cm of z12] (x1) {$\mathbf{x}^1$};

    \begin{pgfonlayer}{background}
        \node[highlight, fit={(z11)}, yshift=0pt] (highlight1) {};
    \end{pgfonlayer}

    \begin{scope}[shift={(-0.4, 0.3)}, scale=0.8, rotate=-45]
        \fill[hammer] (-0.2, 0.4) rectangle (0.2, 0.6); 
        \fill[hammer] (-0.05, 0) rectangle (0.05, 0.4);  
    \end{scope}

    \draw[arrow] (z11) -- (x1);
    \draw[arrow] (z12) -- (x1);
    \draw[arrow] (z13) -- (x1);
    \draw[arrow] (z14) -- (x1);
    \draw[arrow] (z12) -- (z13);
    \draw[arrow] (z13) -- (z14);
    \draw[arrow] (z11) to[bend left] (z13);

    \node[latent, right=0.3cm of z14] (z21) {$\mathbf{z}_1^2$};
    \node[latent, right=0.15cm of z21] (z22) {$\mathbf{z}_2^2$};
    \node[latent, right=0.25cm of z22] (z23) {$\mathbf{z}_3^2$};
    \node[latent, right=0.25cm of z23] (z24) {$\mathbf{z}_4^2$};
    \node[observable, below=0.25cm of z22] (x2) {$\mathbf{x}^2$};

    \begin{pgfonlayer}{background}
        \node[highlight, fit={(z21)}, yshift=0pt] (highlight2) {};
    \end{pgfonlayer}

    \begin{scope}[shift={(3.5, 0.3)}, scale=0.8, rotate=-45]
        \fill[hammer] (-0.2, 0.4) rectangle (0.2, 0.6); 
        \fill[hammer] (-0.05, 0) rectangle (0.05, 0.4);  
    \end{scope}

    \draw[arrow] (z21) -- (x2);
    \draw[arrow] (z22) -- (x2);
    \draw[arrow] (z23) -- (x2);
    \draw[arrow] (z24) -- (x2);
    \draw[arrow] (z22) -- (z23);
    \draw[arrow] (z23) -- (z24);
    \draw[arrow] (z21) to[bend left] (z23);

\end{tikzpicture}} & \small  $^*$Same interventional target
     \(\Ical_{\zb^1_A} = \Ical_{\zb^2_A}\)
     &  \small  \citet{von2024nonparametric,varici2024general}\\
      \small  Interventional CRL (one intervention per node) & \resizebox{1.3\linewidth}{!}{\begin{tikzpicture}[baseline=(current bounding box.center)]

    \pgfdeclarelayer{background}
    \pgfsetlayers{background,main}

    \tikzset{
        latent/.style={circle, draw, minimum size=0.6cm, font=\footnotesize, fill=white},
        observable/.style={circle, draw, minimum size=0.6cm, font=\footnotesize, fill=gray!20},
        highlight/.style={rectangle, draw=SmokeBlue, fill=SmokeBlue!20, rounded corners, inner sep=1pt},
        hammer/.style={fill=black},
        arrow/.style={-Stealth, thick},
        edge/.style={-},
    }

    \node[latent] (z11) at (0, 0) {$\mathbf{z}_1^1$};
    \node[latent, right=0.15cm of z11] (z12) {$\mathbf{z}_2^1$};
    \node[latent, right=0.25cm of z12] (z13) {$\mathbf{z}_3^1$};
    \node[latent, right=0.25cm of z13] (z14) {$\mathbf{z}_4^1$};
    \node[observable, below=0.25cm of z12] (x1) {$\mathbf{x}^1$};

    \begin{pgfonlayer}{background}
        \node[highlight, fit={(z11)}, yshift=0pt] (highlight1) {};
    \end{pgfonlayer}

    \draw[arrow] (z11) -- (x1);
    \draw[arrow] (z12) -- (x1);
    \draw[arrow] (z13) -- (x1);
    \draw[arrow] (z14) -- (x1);
    \draw[arrow] (z12) -- (z13);
    \draw[arrow] (z13) -- (z14);
    \draw[arrow] (z11) to[bend left] (z13);

    \node[latent, right=0.3cm of z14] (z21) {$\mathbf{z}_1^2$};
    \node[latent, right=0.15cm of z21] (z22) {$\mathbf{z}_2^2$};
    \node[latent, right=0.25cm of z22] (z23) {$\mathbf{z}_3^2$};
    \node[latent, right=0.25cm of z23] (z24) {$\mathbf{z}_4^2$};
    \node[observable, below=0.25cm of z22] (x2) {$\mathbf{x}^2$};

    \begin{pgfonlayer}{background}
        \node[highlight, fit={(z21)}, yshift=0pt] (highlight2) {};
    \end{pgfonlayer}

    \begin{scope}[shift={(4.4, 0.3)}, scale=0.8, rotate=-45]
        \fill[hammer] (-0.2, 0.4) rectangle (0.2, 0.6); 
        \fill[hammer] (-0.05, 0) rectangle (0.05, 0.4);  
    \end{scope}

    \draw[arrow] (z21) -- (x2);
    \draw[arrow] (z22) -- (x2);
    \draw[arrow] (z23) -- (x2);
    \draw[arrow] (z24) -- (x2);
    \draw[arrow] (z22) -- (z23);
    \draw[arrow] (z23) -- (z24);
    \draw[arrow] (z21) to[bend left] (z23);

\end{tikzpicture}} & \small  Marginal invariance \(p_{\zb^1_A} = p_{\zb^2_A}\) & 
      \small  \citet{zhang2024identifiability,squires2023linear,buchholz2023learning}\\
     \small Interventional CRL (one intervention per node) & \resizebox{1.3\linewidth}{!}{\begin{tikzpicture}[baseline=(current bounding box.center)]

    \pgfdeclarelayer{background}
    \pgfsetlayers{background,main}

    \tikzset{
        latent/.style={circle, draw, minimum size=0.6cm, font=\footnotesize, fill=white},
        observable/.style={circle, draw, minimum size=0.6cm, font=\footnotesize, fill=gray!20},
        highlight/.style={rectangle, draw=SmokeBlue, fill=SmokeBlue!20, rounded corners, inner sep=1pt},
        hammer/.style={fill=black},
        arrow/.style={-Stealth, thick},
        edge/.style={-},
    }

    \node[latent] (z11) at (0, 0) {$\mathbf{z}_1^1$};
    \node[latent, right=0.15cm of z11] (z12) {$\mathbf{z}_2^1$};
    \node[latent, right=0.25cm of z12] (z13) {$\mathbf{z}_3^1$};
    \node[latent, right=0.25cm of z13] (z14) {$\mathbf{z}_4^1$};
    \node[observable, below=0.25cm of z12] (x1) {$\mathbf{x}^1$};

    \begin{pgfonlayer}{background}
        \node[highlight, fit={(z14)}, yshift=0pt] (highlight1) {};
    \end{pgfonlayer}

    \draw[arrow] (z11) -- (x1);
    \draw[arrow] (z12) -- (x1);
    \draw[arrow] (z13) -- (x1);
    \draw[arrow] (z14) -- (x1);
    \draw[arrow] (z12) -- (z13);
    \draw[arrow] (z13) -- (z14);
    \draw[arrow] (z11) to[bend left] (z13);

    \node[latent, right=0.3cm of z14] (z21) {$\mathbf{z}_1^2$};
    \node[latent, right=0.15cm of z21] (z22) {$\mathbf{z}_2^2$};
    \node[latent, right=0.25cm of z22] (z23) {$\mathbf{z}_3^2$};
    \node[latent, right=0.25cm of z23] (z24) {$\mathbf{z}_4^2$};
    \node[observable, below=0.25cm of z22] (x2) {$\mathbf{x}^2$};

    \begin{pgfonlayer}{background}
        \node[highlight, fit={(z24)}, yshift=0pt] (highlight2) {};
    \end{pgfonlayer}

    \begin{scope}[shift={(5.4, 0.3)}, scale=0.8, rotate=-45]
        \fill[hammer] (-0.2, 0.4) rectangle (0.2, 0.6); 
        \fill[hammer] (-0.05, 0) rectangle (0.05, 0.4);  
    \end{scope}

    \draw[arrow] (z21) -- (x2);
    \draw[arrow] (z22) -- (x2);
    \draw[arrow] (z23) -- (x2);
    \draw[arrow] (z24) -- (x2);
    \draw[arrow] (z22) -- (z23);
    \draw[arrow] (z23) -- (z24);
    \draw[arrow] (z21) to[bend left] (z23);

\end{tikzpicture}} & \small  Score invariance \, \, 
     \(S_{\zb^1_A} = S_{\zb^2_A}\) &  \small  \citet{varici2023score,varici2024general}\\
     \small Temporal CRL & \resizebox{1.3\linewidth}{!}{\begin{tikzpicture}[baseline=(current bounding box.center)]

    \pgfdeclarelayer{background}
    \pgfsetlayers{background,main}

    \tikzset{
        latent/.style={circle, draw, minimum size=0.6cm, font=\footnotesize, fill=white},
        observable/.style={circle, draw, minimum size=0.6cm, font=\footnotesize, fill=gray!20},
        highlight/.style={rectangle, draw=SmokeBlue, fill=SmokeBlue!20, rounded corners, inner sep=1pt},
        hammer/.style={fill=black},
        arrow/.style={-Stealth, thick},
        edge/.style={-},
    }

    \node[latent] (z1_left) at (0, 0) {$\tilde{\mathbf{z}}_1^t$};
    \node[latent, right=0.15cm of z1_left] (z2_left) {$\tilde{\mathbf{z}}_2^t$};
    \node[latent, right=0.15cm of z2_left] (z3_left) {$\tilde{\mathbf{z}}_3^t$};
    \node[latent, right=0.25cm of z3_left] (z4_left) {$\tilde{\mathbf{z}}_4^t$};
    \node[observable, below=0.25cm of z2_left] (x_left) {$\tilde{\mathbf{x}}^t$};

    \begin{pgfonlayer}{background}
        \node[highlight, fit={(z3_left) (z4_left)}, yshift=0pt] {};
    \end{pgfonlayer}

    \draw[arrow] (z1_left) -- (x_left);
    \draw[arrow] (z2_left) -- (x_left);
    \draw[arrow] (z3_left) -- (x_left);
    \draw[arrow] (z4_left) -- (x_left);

    \node[latent, right=0.3cm of z4_left] (z1_right) {$\mathbf{z}_1^t$};
    \node[latent, right=0.15cm of z1_right] (z2_right) {$\mathbf{z}_2^t$};
    \node[latent, right=0.15cm of z2_right] (z3_right) {$\mathbf{z}_3^t$};
    \node[latent, right=0.25cm of z3_right] (z4_right) {$\mathbf{z}_4^t$};
    \node[observable, below=0.25cm of z2_right] (x_right) {$\mathbf{x}^t$};

    \begin{pgfonlayer}{background}
        \node[highlight, fit={(z3_right) (z4_right)}, yshift=0pt] {};
    \end{pgfonlayer}

    \begin{scope}[shift={(3.5, 0.3)}, scale=0.8, rotate=-45]
        \fill[hammer] (-0.2, 0.4) rectangle (0.2, 0.6); 
        \fill[hammer] (-0.05, 0) rectangle (0.05, 0.4);  
    \end{scope}

    \begin{scope}[shift={(4.45, 0.3)}, scale=0.8, rotate=-45]
        \fill[hammer] (-0.2, 0.4) rectangle (0.2, 0.6); 
        \fill[hammer] (-0.05, 0) rectangle (0.05, 0.4);  
    \end{scope}

    \draw[arrow] (z1_right) -- (x_right);
    \draw[arrow] (z2_right) -- (x_right);
    \draw[arrow] (z3_right) -- (x_right);
    \draw[arrow] (z4_right) -- (x_right);

\end{tikzpicture}} & \small  Transition invariance
     \(p_{\mathbf{z}_A \mid \mathbf{z}^{t-1}} = p_{\tilde{\mathbf{z}}_A \mid \mathbf{z}^{t-1}}\) &  \small 
    \citet{lippe2022citris,lippe2022causal,lippe2023biscuit}\\
     \small Multi-task CRL & \resizebox{0.9\linewidth}{!}{\begin{tikzpicture}[baseline=(current bounding box.center)]

    \pgfdeclarelayer{background}
    \pgfsetlayers{background,main}

    \tikzset{
        latent/.style={circle, draw, minimum size=0.6cm, font=\footnotesize, fill=white},
        observable/.style={circle, draw, minimum size=0.6cm, font=\footnotesize, fill=gray!20},
        target/.style={circle, draw, minimum size=0.3cm, font=\footnotesize, fill=white},
        highlight/.style={rectangle, draw=SmokeBlue, fill=SmokeBlue!20, rounded corners, inner sep=1pt},
        group/.style={rectangle, draw=gray!50, rounded corners, inner sep=2pt},
        arrow/.style={-Stealth, thick},
    }

    \node[latent] (z1) at (0, 0) {$\mathbf{z}_1$};
    \node[latent, right=0.25cm of z1] (z2) {$\mathbf{z}_2$};
    \node[latent, right=0.25cm of z2] (z3) {$\mathbf{z}_3$};
    \node[latent, right=0.25cm of z3] (z4) {$\mathbf{z}_4$};
    \node[observable, below=0.25 of z2] (x) {$\mathbf{x}$};
    \node[target, above=0.3cm of z2] (y1) {$\mathbf{y}^1$};
    \node[target, above=0.3cm of z3] (y2) {$\mathbf{y}^2$};

    \begin{pgfonlayer}{background}
        \node[highlight, fit={(z2) (z3)}, yshift=0pt] {};
    \end{pgfonlayer}

    \begin{pgfonlayer}{background}
        \node[group, fit={(z1) (z2) (z3)}] (T1) {};
    \end{pgfonlayer}
    \node[font=\footnotesize, above=0.1cm of T1, left=0.1cm of T1] {$T_1$};

    \begin{pgfonlayer}{background}
        \node[group, fit={(z2) (z3) (z4)}] (T2) {};
    \end{pgfonlayer}
    \node[font=\footnotesize, above=0.1cm of T2, right=0.1cm of T2] {$T_2$};

    \draw[arrow] (z1) -- (x);
    \draw[arrow] (z2) -- (x);
    \draw[arrow] (z3) -- (x);
    \draw[arrow] (z4) -- (x);
    \draw[arrow] (z2) -- (y1);
    \draw[arrow] (z3) -- (y2);
    \draw[arrow] (z2) -- (z3);
    \draw[arrow] (z3) -- (z4);
    \draw[arrow] (z1) to[bend left] (z3);

\end{tikzpicture}}  & \small  Overlapping task support \(\zb_A^{T_1} = \zb_A^{T_2}\) &  \small \citet{lachapelle2022synergies,fumero2024leveraging}\\
     \small Domain generalization & \resizebox{1.3\linewidth}{!}{\begin{tikzpicture}[baseline=(current bounding box.center)]

    \pgfdeclarelayer{background}
    \pgfsetlayers{background,main}

    \tikzset{
        latent/.style={circle, draw, minimum size=0.6cm, font=\footnotesize, fill=white},
        observable/.style={circle, draw, minimum size=0.6cm, font=\footnotesize, fill=gray!20},
        target/.style={circle, draw, minimum size=0.6cm, font=\footnotesize, fill=white},
        highlight/.style={rectangle, draw=SmokeBlue, fill=SmokeBlue!20, rounded corners, inner sep=1pt},
        arrow/.style={-Stealth, thick},
    }

    \node[latent] (z11) at (0, 0) {$\mathbf{z}_1^1$};
    \node[latent, right=0.15cm of z11] (z12) {$\mathbf{z}_2^1$};
    \node[latent, right=0.25cm of z12] (z13) {$\mathbf{z}_3^1$};
    \node[latent, right=0.25cm of z13] (z14) {$\mathbf{z}_4^1$};
    \node[observable, below=0.25cm of z12] (x1) {$\mathbf{x}^1$};
    \node[target, above=0.25cm of z11] (y1) {$\mathbf{y}^1$};

    \begin{pgfonlayer}{background}
        \node[highlight, fit={(z11)}, yshift=0pt] (highlight1) {};
    \end{pgfonlayer}

    \draw[arrow] (z11) -- (x1);
    \draw[arrow] (z12) -- (x1);
    \draw[arrow] (z13) -- (x1);
    \draw[arrow] (z14) -- (x1);
    \draw[arrow] (z12) -- (z13);
    \draw[arrow] (z13) -- (z14);
    \draw[arrow] (z11) to[bend left] (z13);
    \draw[arrow] (z11) -- (y1);

    \node[latent, right=0.3cm of z14] (z21) {$\mathbf{z}_1^2$};
    \node[latent, right=0.15cm of z21] (z22) {$\mathbf{z}_2^2$};
    \node[latent, right=0.25cm of z22] (z23) {$\mathbf{z}_3^2$};
    \node[latent, right=0.25cm of z23] (z24) {$\mathbf{z}_4^2$};
    \node[observable, below=0.25cm of z22] (x2) {$\mathbf{x}^2$};
    \node[target, above=0.25cm of z21] (y2) {$\mathbf{y}^2$};

    \begin{pgfonlayer}{background}
        \node[highlight, fit={(z21)}, yshift=0pt] (highlight2) {};
    \end{pgfonlayer}

    \draw[arrow] (z21) -- (x2);
    \draw[arrow] (z22) -- (x2);
    \draw[arrow] (z23) -- (x2);
    \draw[arrow] (z24) -- (x2);
    \draw[arrow] (z22) -- (z23);
    \draw[arrow] (z23) -- (z24);
    \draw[arrow] (z21) to[bend left] (z23);
    \draw[arrow] (z21) -- (y2);

\end{tikzpicture}}  & \small Risk invariance on optimal weights \(\mathcal{R}_1^*(\mathbf{w}^* \mathbf{z}_A^1, \mathbf{y}^1) = \mathcal{R}_2^*(\mathbf{w}^* \mathbf{z}_A^2, \mathbf{y}^2)\) &  \small \citet{arjovsky2020invariantriskminimization,krueger2021out,ahuja2022invarianceprinciplemeetsinformation}\\
     \bottomrule
    \end{tabularx}
}
\end{table}

\section{Problem Setting}
\label{sec:dgp}
\looseness=-1 \textbf{Notation.} $[N]$ is used as a shorthand for $\{1, \dots, N\}$. We use bold lower-case $\zb$ for random vectors and normal lower-case $z$ for their realizations. A vector $\zb$ can be indexed either by a single index $i \in [\dim(\zb)]$ via $\zb_i$ or a index subset $A \subseteq [\dim(\zb)]$ with $\zb_A := \{\zb_i: i \in A\}$. $P_{\zb}$ denotes the probability distribution of the random vector $\zb$ and $p_{\zb}(z)$ denotes the associated probability density function (We omit the subscription and write $p(z)$ when the context is clear). By default, a "measurable" function is \emph{measurable} w.r.t.\ the Borel sigma algebras and defined w.r.t.\ the Lebesgue measure. A more comprehensive summary of notations is provided in~\cref{sec:notations}.

\looseness=-1 This section defines our problem setting using standard CRL concepts and assumptions (Formal definitions are deferred to~\cref{app:premilinaries}). While prior works in CRL typically categorize their settings using established causal language (such as ``counterfactual," ``interventional," or ``observational"), our approach introduces a more general invariance principle that aims to unify diverse problem settings.
We introduce the following concepts as mathematical tools to describe our data generating process.

\begin{definition}[Invariance property]
\label{defn:invariance_projector}
    Let $A \subseteq [N]$ be an index subset of the Euclidean space $\RR^N$ and let $\sim_{\iota}$ be an equivalence relationship on $\RR^{|A|}$, with $A$ of known dimension. Let $\Mcal:= \bigslant{\RR^{|A|}}{\sim_\iota}$ be the quotient of $\RR^{|A|}$ under this equivalence relationship; $\Mcal$ is a topological space equipped with the quotient topology. Let $\iota: \RR^{|A|} \to \Mcal$ be the projection onto the quotient induced by the equivalence relationship $\sim_\iota$. 
    This projection $\iota$ is termed the \emph{invariance property} of this equivalence relation. Two vectors $\ab, \bb \in \RR^{|A|}$ are invariant under $\iota$ if and only if they belong to the same $\sim_\iota$ equivalence class, i.e.:
    \begin{equation*}
        \iota(\ab) = \iota(\bb) \Leftrightarrow \ab \sim_{\iota} \bb.
    \end{equation*}
    
   Extending this definition to the entire latent space $\RR^N$, we say that two latent vectors $\zb, \tilde{\zb} \in \RR^N$ are {\bf\emph{non-trivially} }\emph{invariant on a subset $A \subseteq [N]$} if

   \begin{enumerate}[(i),topsep=0em,itemsep=0em,leftmargin=*]
    \item the invariance property $\iota$ holds on the indices $A \subseteq [N]$ in the sense that $\iota(\zb_A) = \iota(\tilde{\zb}_A)$;
    \item for any smooth functions $h_1, h_2: \RR^N \to \RR^{|A|}$, the invariance property between $\zb, \tilde{\zb}$ \emph{breaks} under the $h_1, h_2$ transformations if either function depends on some other component $\zb_q$ with $q \in [N]\setminus A$. More formally, considering $h_1$ and $\zb$ as an example, 
    \begin{equation*}
        \exists q \in {[N]\setminus A}, \zb^* \in \RR^N, \quad s.t.\ \dfrac{\partial{h_1}}{\partial{\zb_q}}(\zb^*) \text{ exists and is non zero} \quad \Rightarrow \quad
        \iota(h_1(\zb)) \neq \iota(h_2(\tilde{\zb})).
    \end{equation*}
    This indicates the output of function $h_1$ is influenced by a latent variable $\zb_q$ outside the invariant set $A$, thereby violating the invariance property.
    \end{enumerate}
\end{definition}

\begin{empheqboxed}
    \looseness=-1 \textbf{Intuition}: The invariance property $\iota$ maps the invariant latent subset $\zb_A$ to the quotient space $\Mcal$. Both $\iota$ and $\Mcal$ can take various concrete forms depending on the problem settings: 
    In the multi-view literature~\citep{von2021self,brehmer2022weakly,yao2023multi}, $\iota$ is the \emph{identity map} because the pre-and post action views share the \emph{exact value} of the invariant latents.
    For both interventional and temporal CRL~\citep{varici2023score,von2024nonparametric,lachapelle2022disentanglement,lippe2022causal}, 
    $\iota$ acts as an operator that maps the invariant latent subset $\zb_A$ to its associated density function -- yielding the marginal density $p_{\zb_A}$ in the interventional setting and conditional density $p_{\zb_A^t|\zb^{t-1}}$ in the temporal setting.
    In this context, the codomain $\Mcal$ is the set of valid density functions.
    In multi-task CRL~\citep{lachapelle2022synergies,fumero2024leveraging} and domain generalization~\citep{arjovsky2020invariantriskminimization,krueger2021out}, $\iota$ maps the overlapping task support and the risk implied by ground truth latent-target dependency, respectively.
    Concrete examples and detailed formulations of the corresponding invariance properties for each case are provided in~\cref{tab:examples}.
\end{empheqboxed}

\textbf{Remark}:~\Cref{defn:invariance_projector} (ii) is crucial for ensuring latent variable identification on the invariant partition \(A\). Its necessity is further justified in~\cref{app:ass_justification}, where we demonstrate that violating (ii) leads to non-identifiability. Intuitively, condition (ii) enforces a clear separation between the invariant and variant components of the ground truth generating process. This idea parallels key identifiability assumptions in CRL -- albeit under different names -- such as \emph{sufficient variability}~\citep{von2024nonparametric,lippe2022citris}, \emph{interventional regularity}~\citep{varici2023score,varici2024linear}, and \emph{interventional discrepancy}~\citep{liang2023causal,varici2024general}. At a high level, these assumptions ensure that the mechanism under intervention deviates sufficiently from the default causal mechanism, thereby allowing one to distinguish between intervened and non-intervened latent variables—a function that is directly served by~\Cref{defn:invariance_projector} (ii). We elaborate on this link further in~\cref{app:ass_justification}.

We denote by $\Scal_{\zb} :=\{\zb^1, \dots, \zb^K\}$ the set of latent random vectors with $\zb^k \in \RR^N$ and write its joint distribution as $P_{\Scal_{\zb}}$. 
The joint distribution $P_{\Scal_{\zb}}$ has a probability density $p_{\Scal_{\zb}}(z^1, \dots, z^K)$. Each individual random vector $\zb^k \in \Scal_{\zb}$ follows the marginal density $p_{\zb^k}$ with the non-degenerate support $\Zcal^k \subseteq \RR^N$, whose interior is a non-empty open set of $\RR^N$.

\begin{definition}[Observable of a set of latent random vectors]
\label{defn:observable_set_latents}
    Consider a set of random vectors $\Scal_{\zb} :=\{\zb^1, \dots, \zb^K\}$ with $\zb^k \in \RR^N$, the corresponding set of observables $\Scal_{\xb} := \{\xb^1, \dots, \xb^K\}$ is generated by $\Scal_{\xb} = F(\Scal_{\zb})$,
    where the map $F$ defines a push-forward measure $F_{\#}(P_{\Scal_{\zb}})$ on the image of $F$ as:
    \begin{equation}
        F_{\#}(P_{\Scal_{\zb}})(\xb^1, \dots, \xb^K) = P_{\Scal_{\zb}}(f_1^{-1}(\xb^1), \dots, f_K^{-1}(\xb^K))
    \end{equation}
    with the support $\Xcal := \text{Im}(F) \subseteq \RR^{K \times D}$. Note that $F$ satisfies the diffeomorphism assumption~\pcref{assmp:diffeomorphism} as each $f_k$ is a diffeomorphism onto its image according to~\cref{assmp:diffeomorphism}.
\end{definition}
\begin{empheqboxed}
    \looseness=-1 \textbf{Intuition.}~\cref{defn:observable_set_latents} formalizes how the set of observables $\Scal_{\xb}$ is generated from a set of latent random vectors $\Scal_{\zb}$ via a joint pushforward.
    For example, in the multiview scenario~\citep{von2021self,daunhawer2023identifiability,yao2023multi}, $P_{\mathcal{S}_{\xb}}$ represents the distribution of the concurrently observed views $\{\xb^1, \dots, \xb^k\}$. 
    In interventional CRL, each observable $\xb_k$ corresponds to data collected under different environments, so that $P_{\mathcal{S}_{\xb}}$ factorizes into environment-specific distributions. 
    In temporal CRL, the observable distribution $P_{\mathcal{S}_{\xb}}$ explicitly conditions on the previous time step, reflecting the sequential dependencies inherent in the process.
    In the supervised setting, such as multi-task CRL and domain generalization, the observables $\xb^k$ are augmented with task labels $\yb^k$, capturing additional structure necessary for the task. Overall,~\cref{defn:observable_set_latents} provides a unified formulation of the data-generating process across diverse settings, serving as the foundation for the unified latent variable identification techniques developed in later sections.
\end{empheqboxed}

\looseness=-1  In the following, we denote by $\mathfrak{I}:= \{\iota_i: \RR^{|A_{i}|} \to \Mcal_i\}$ a finite set of invariance properties with their respective invariant subsets $A_i \subseteq [N]$ and their equivalence relationships $\sim_{\iota_i}$, each inducing a projection onto its quotient and invariance property $\iota_i$~\pcref{defn:invariance_projector}.
For a set of observables $\Scal_{\xb} :=\{\xb^1, \dots, \xb^K\} \in \Xcal$ generated from the data generating process described in~\cref{sec:dgp}, we assume:
 \begin{assumption}
     \label{assmp:iota_observation_relation}
      For each $ \iota_i \in \mathfrak{I} $, there exists a unique, known index subset $ V_i \subseteq [K] $ with at least two elements (i.e., $ |V_i| \geq 2 $) such that the corresponding observables $ \mathbf{x}^{V_i} := \{\mathbf{x}^k : k \in V_i\} $ are generated from an equivalence class of latent vectors:
$$
[\mathbf{z}]_{\sim_{\iota_i}} := \{ \tilde{\mathbf{z}} \in \mathbb{R}^N : \mathbf{z}_{A_i} \sim_{\iota_i} \tilde{\mathbf{z}}_{A_i} \}.
$$ Following~\cref{defn:observable_set_latents}, the generating process is formally written as: $$
\mathbf{x}^{V_i} = \{ f^k(\mathbf{z}^k) : k \in V_i , \mathbf{z}^k \in [\mathbf{z}]_{\sim_{\iota_i}} \} = F([\mathbf{z}]_{\sim_{\iota_i}}).
$$ 

 \end{assumption}

\looseness=-1 \textbf{Remark:} Intuitively,~\cref{assmp:iota_observation_relation} ensures that for each invariance property \(\iota_i \in \mathfrak{I}\), there are at least two observables generated from latents that share $\iota_i$; otherwise the invariance partition $A_i$ becomes undefined and no identification results can be derived. While $\iotaset$ does not need to be fully described with explicit forms, the assignment of observables to equivalence classes is known (denoted as $V_i \subseteq [K]$ for the invariance property $\iota_i \in \iotaset$). This is a standard assumption and is equivalent to knowing, e.g., two views are generated from partially overlapped latents~\citep{yao2023multi}.

\begin{empheqboxed}
\textbf{Problem setting.}
    Given a set of observables $\Scal_{\xb} \in \Xcal$ satisfying~\cref{assmp:iota_observation_relation}, we show that we can simultaneously identify multiple invariant latent blocks $A_i$ under a set of weak assumptions. In an ideal scenario, if each individual latent component is represented as a single invariant block through individual invariance property $\iota_i \in \iotaset$, we can learn a fully disentangled representation and further identify the latent causal graph by additional technical assumptions.
\end{empheqboxed}

\section{Identifiability Theory via the Invariance Principle}
\label{sec:identifiability}
\textbf{High-level overview.}
\looseness=-1 This section presents a general theory for latent variable identification that brings together many identifiability results from existing CRL works, including multiview, interventional, temporal, and multi-task CRL.
Our theory of latent variable identifiability, based on the invariance principle, consists of two key components: 
(1) ensuring the encoder's sufficiency, thereby obtaining an adequate representation of the original input for the desired task;
(2) guaranteeing the learned representation to preserve known data symmetries as invariance properties. 
The sufficiency is often enforced by minimizing the reconstruction loss ~\citep{locatello2020weakly,ahuja2022weakly,lippe2022citris,lippe2022causal, lachapelle2022disentanglement} in auto-encoder based architecture, 
maximizing the log likelihood in normalizing flows or maximizing entropy~\citep{zimmermann2021contrastive,von2021self,daunhawer2023identifiability,yao2023multi} in self-supervised approaches.
The invariance property in the learned representations is often enforced by minimizing some equivalence relation-induced regularizer~\citep{von2021self,yao2023multi,lippe2022citris,zhang2024identifiability} or by some iterative algorithm that provably ensures the invariance property on the output~\citep{squires2023linear,varici2024linear}. 
As a result, all invariant blocks $A_i, i \in [|\iotaset|]$ can be identified up to a mixing within the blocks while being disentangled from the rest. 
This type of identifiability is defined as \emph{block-identifiability}~\citep{von2021self} which we restate as follows:

\begin{restatable}[Block-identifiability~\citep{von2021self}]{definition}{blockID}
    \label{defn:blockID}
     A subset $\zb_A := \{\zb_j\}_{j \in A}$ with $A \subseteq [N]$ of the latent variables is block-identified by an encoder $g: \RR^D \to \RR^N$ on the invariant subset $A$ if the learned representation $\hat{\zb}_{\hat{A}} := [g(\xb)]_{\hat{A}}$ with $\hat{A} \subseteq [N], |A| = |\hat{A}|$ contains all and only information about the ground truth $\zb_A$, i.e. $\hat{\zb}_{\hat{A}} = h(\zb_A)$ for some diffeomorphism $h: \RR^{|A|} \to \RR^{|A|}$.
\end{restatable}

\begin{empheqboxed}
    \textbf{Intuition}: Block-identifiability relaxes the standard notion of disentanglement~\citep{locatello2020weakly,lachapelle2022synergies,fumero2021learning} by allowing the learned representation $\zb_A$ to be an entangled block that still captures all the information of the true latent block $\zb_A$ up to a diffeomorphism. In this framework, classical disentanglement corresponds to the special case where each latent variable is identified as an individual invariant block.
\end{empheqboxed}

\begin{definition}[Encoders]
\label{defn:encoders}
    \looseness=-1 The encoders $G:=\{g_k: \Xcal^k \to \Zcal^k\}_{k \in [K]}$ consist of smooth functions mapping from the observational support $\Xcal^k$ to the corresponding latent support $\Zcal^k$~\pcref{sec:dgp}.
\end{definition}

\begin{empheqboxed}
    \looseness=-1 \textbf{Intuition}: For the purpose of generality, we design the encoder $g_k$ to be specific to individual observable $\xb^k \in \Scal_{\xb}$. 
    However, multiple $g_k$ can share parameters if they work on the same modality. 
    Ideally, the encoders should preserve as many invariances (from $\iotaset$) as possible. 
    Thus, a clear separation between different encoding blocks is needed. 
    To this end, we introduce selectors.
\end{empheqboxed}

\begin{definition}[Selection~\citep{yao2023multi}]
\label{defn:selection}
    A selection $\oslash$ operates between two vectors $a \in \{0, 1\}^d\, , b \in \RR^d$ where \ $a \oslash b := [b_j: a_j = 1, j \in [d]]$.
\end{definition}
\begin{definition}[Invariant block selectors]
\label{defn:block_selectors} The invariant block selectors $\Phi:= \{\phi^{(i, k)}\}_{i \in [|\iotaset|], k \in V_i}$ with $\phi^{(i, k)} \in\{0,1\}^{N}$ perform selection~\pcref{defn:selection} on the encoded information $\hat{\zb}^k$: 
for any invariance property $\iota_i \in \iotaset$, any observable $\xb^k, k \in V_i$ we have the selected representation:
\begin{equation}
    \textstyle
    \phi^{(i, k)} \oslash \hat{\zb}^k = \phi^{(i, k)} \oslash g_k(\xb^k) = \left[ [g_k(\xb^k)]_j: \phi^{(i, k)}_j = 1, j \in [N] \right],
\end{equation}
with $\norm{\phi^{(i, k)}}_0 = \|\phi^{(i, k^\prime)}\|_0 = |A_i|$ for all $\iota_i \in \iotaset, k, k^\prime \in V_i$.
\end{definition}

\begin{empheqboxed}
    \textbf{Intuition}: Selectors select the relevant encoding dimensions for each invariance property $\iota_i \in \iotaset$.  
    Each selector $\phi^{(i, k)}$ implies a index subset $\hat{A}_i^k := \{j: \phi^{(i, k)}_j = 1\} \subseteq [N]$ that is specific to the invariance property $\iota_i$ and the observable $\xb^k$. 
    The assumption of known invariance size $|A_i|$ can be lifted in certain scenarios by, e.g., enforcing sharing between the learned latent variables, as shown by~\citet{fumero2024leveraging,yao2023multi}, or leveraging sparsity constraints~\citep{lachapelle2022disentanglement,lachapelle2024disentanglement,zheng2022identifiability,xu2024sparsity}.
\end{empheqboxed}

\begin{constraint}[Invariance constraint]
\label{cons:invariance}
    For any invariance property $\iota_i \in \iotaset, i \in [|\iotaset|]$, the \textbf{selected} representations $\phi^{(i, k)} \oslash g_k(\xb^k), k \in V_i$ must be $\iota_i$-invariant across the observables from the subset $V_i \subseteq [K]$:
    \begin{equation}
    \label{eq:invariance_2}
        \iota_i(\phi^{(i, k)} \oslash g_k(\xb^k))
        = 
        \iota_i(\phi^{(i, k^\prime)} \oslash g_{k^\prime}(\xb^{k^\prime}))
        \quad  \forall i \in [|\iotaset|] \, \, \forall k, k^\prime \in V_i
    \end{equation}
\end{constraint}

\begin{constraint}[Sufficiency constraint]
\label{cons:sufficiency}
    \looseness=-1 For any $\iota_i \in \iotaset, i \in [|\iotaset|]$, the \textbf{selected} representation $\phi^{(i, k)} \oslash g_k(\xb^k), k \in V_i$ must preserve all information of the invariant partition $\zb_{A_i}$ that we aim to identify, i.e.,
    $I(\zb_{A_i}, \phi^{(i, k)} \oslash g_k(\xb^k)) = H(\zb_{A_i}) \; \forall i \in [|\iotaset|], k \in V_i$, where $I(\cdot,\cdot)$ denotes the mutual information and $H(\cdot)$ denotes the differential entropy of the ground truth latent distribution $p_{\mathbf{z}_{A_i}}$.
\end{constraint}

\begin{empheqboxed}
    \looseness=-1\textbf{Intuition:} The regularizer enforcing this sufficiency constraint can be tailored to suit the specific task of interest.
    For example, for self-supervised training, it can be implemented as the mutual information between the input data and the encodings, i.e., $I(\xb, g(\xb)) = H(\xb)$, to preserve the entropy from the observations; 
    for classification, it becomes the mutual information between the task labels and the learned representation $I(\yb, g(\xb))$. 
    Sometimes, sufficiency does not have to be enforced on the whole representation. 
    For example, in the multiview line of work~\citep{von2021self,daunhawer2023identifiability}, when considering a single invariant block $A$, 
    enforcing sufficiency on the shared partition (implemented as entropy on the learned encoding $H(g(\xb)_{1:|A|})$) is enough to block-identify these shared latent variables $\zb_A$.
\end{empheqboxed}

\begin{restatable}[Identifiability of multiple invariant blocks]{theorem}{idmulti}
\label{thm:ID_from_multi_sets}
Consider a set of observables $\Scal_{\xb} = \{\xb^1, \xb^2, \dots, \xb^K\} \in \Xcal$ generated from~\cref{sec:dgp} satisfying~\cref{assmp:iota_observation_relation}. 
Let $G, \Phi$ be the set of smooth encoders~\pcref{defn:encoders} and selectors~\pcref{defn:block_selectors} that satisfy~\cref{cons:invariance,cons:sufficiency}, then the invariant component $\zb_{A_i}^k$ is {block-identified}~\pcref{defn:blockID} by $\phi^{(i, k)} \oslash g_k$ for all $\iota_i \in \iotaset, k \in [K]$.
\end{restatable}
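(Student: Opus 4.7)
The plan is to fix an arbitrary invariance property $\iota = \iota_i \in \iotaset$ and an arbitrary $k \in V_i$, and establish block-identifiability of $\zb_{A_i}^k$ by the selected encoder $\phi^{(i, k)} \oslash g_k$; since the hypotheses treat every $\iota_i$ symmetrically, the full conclusion follows. To analyze a single pair of observables I will work with the composition $h_k := (\phi^{(i, k)} \oslash g_k) \circ f_k : \RR^N \to \RR^{|A_i|}$, which is smooth by \cref{assmp:diffeomorphism} and \cref{defn:encoders}, and similarly $h_{k'}$ for any $k' \in V_i$. The goal becomes showing that $h_k$ factors as $h_k(\zb) = \tilde{h}_k(\zb_{A_i})$ for some diffeomorphism $\tilde{h}_k$ on $\RR^{|A_i|}$, which is exactly the statement of \cref{defn:blockID}.

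The key step is to invoke the contrapositive of clause (ii) of \cref{defn:invariance_projector}. Pick any pair $k, k' \in V_i$ and any two latent configurations $\zb, \tilde{\zb}$ with $\iota_i(\zb_{A_i}) = \iota_i(\tilde{\zb}_{A_i})$, so that $\zb$ and $\tilde{\zb}$ represent an equivalence class as in \cref{assmp:iota_observation_relation}. The corresponding observables then belong to the joint pocket indexed by $V_i$, so \cref{cons:invariance} forces $\iota_i(h_k(\zb)) = \iota_i(h_{k'}(\tilde{\zb}))$ for every such pair. The contrapositive of \cref{defn:invariance_projector}(ii) then rules out the possibility that $h_k$ or $h_{k'}$ has a non-zero partial derivative with respect to any $\zb_q$ with $q \notin A_i$ at any interior point of the support, so by smoothness each of these partial derivatives vanishes identically and we may write $h_k(\zb) = \tilde{h}_k(\zb_{A_i})$ for a smooth map $\tilde{h}_k : \RR^{|A_i|} \to \RR^{|A_i|}$ whose domain and codomain have matching dimension by the selector cardinality constraint in \cref{defn:block_selectors}.

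What remains is to promote $\tilde{h}_k$ from a smooth map between equidimensional spaces to a diffeomorphism onto its image. This is where \cref{cons:sufficiency} enters: the identity $I(\zb_{A_i}, g_k(\xb^k)) = H(\zb_{A_i})$, combined with the factorisation established above and the data-processing inequality applied to the selection operator, forces $\tilde{h}_k$ to be injective on the support of $\zb_{A_i}^k$, since any coincidence $\tilde{h}_k(a) = \tilde{h}_k(b)$ on a positive-measure set would strictly drop the mutual information. Injectivity together with smoothness and matching dimension then yields, via the constant-rank/inverse function theorem applied on the non-degenerate open support $\Zcal^k$, that $\tilde{h}_k$ is a local diffeomorphism; invariance of domain upgrades this to a global diffeomorphism onto its image, which is exactly block-identifiability in the sense of \cref{defn:blockID}.

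The main technical obstacle I expect is the final upgrade from ``injective and smooth'' to ``diffeomorphism.'' Sufficiency is stated on the full encoder $g_k$ rather than on the selected coordinates, so I must carefully push $I(\zb_{A_i}, g_k(\xb^k)) = H(\zb_{A_i})$ through the selector $\phi^{(i,k)}$; the second paragraph makes this clean because the non-selected coordinates either carry no information about $\zb_{A_i}$ (in which case they can be discarded without loss) or they themselves would have to depend on $\zb_{[N] \setminus A_i}$, contradicting the same contrapositive argument. A secondary subtlety is that the non-triviality clause is pointwise, so applying it requires varying $\zb_{[N] \setminus A_i}$ freely while pinning $\zb_{A_i}$; I would make this rigorous by picking a product open neighbourhood inside the interior of $\Zcal^k$, using the non-degenerate support hypothesis from \cref{subsec:dgp}.
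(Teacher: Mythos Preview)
Your proposal follows essentially the same route as the paper. The paper organises the argument into two lemmas plus the theorem: an existence lemma exhibiting $G^* = \{f_k^{-1}\}$ as a feasible point for \cref{cons:invariance,cons:sufficiency}, and an ``invariant component isolation'' lemma that is exactly your second paragraph (define $h_k := \phi^{(i,k)} \oslash g_k \circ f_k$ and apply the contrapositive of \cref{defn:invariance_projector}(ii) together with \cref{cons:invariance} to conclude $h_k$ depends only on $\zb_{A_i}$); the final diffeomorphism upgrade via \cref{cons:sufficiency} is then asserted in one line. You omit the existence lemma, which is defensible since the theorem as phrased already hypothesises encoders and selectors satisfying the constraints, and you are considerably more careful than the paper on the injective-smooth $\Rightarrow$ diffeomorphism step.

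One caveat on your last paragraph: the dichotomy you propose for pushing sufficiency through the selector --- ``the non-selected coordinates either carry no information about $\zb_{A_i}$ \dots\ or they themselves would have to depend on $\zb_{[N]\setminus A_i}$, contradicting the same contrapositive argument'' --- does not hold. The invariance constraint \pcref{cons:invariance}, and hence the contrapositive of (ii), only controls the \emph{selected} coordinates; a non-selected coordinate of $g_k \circ f_k$ is free to be a function of $\zb_{A_i}$ alone, so neither branch of your dichotomy is forced. The paper simply does not address this passage (it jumps directly from ``$g_k$ contains all information about $\zb_{A_i}$'' and ``$\phi^{(i,k)} \oslash g_k$ depends only on $\zb_{A_i}$'' to ``diffeomorphism''), so you are not behind the paper here, but the specific argument you wrote for this sub-step is not valid as stated.
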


\begin{empheqboxed}
    \looseness=-1 \textbf{Discussion:}~\Cref{thm:ID_from_multi_sets} demonstrates that by enforcing all invariance properties $\iota_i \in \iotaset$ jointly, our framework can simultaneously learn representations that block-identify all invariant latent blocks. 
    This unified approach accommodates multiple invariance principles, making it well-suited for complex real-world scenarios where diverse invariance relations coexist.
    In practice, the resulting constrained optimization problem admits various solution strategies.
    For instance,~\citet{lippe2022citris,lippe2022causal} adopt a two-stage process: first addressing the sufficiency constraint and then the invariance constraint, while~\citet{lachapelle2022synergies,fumero2024leveraging} frame the problem as a bi-level constrained optimization task. Some works~\citep {von2021self,yao2023multi,daunhawer2023identifiability,zhang2024identifiability,ahuja2024multi}  ~propose
    \end{empheqboxed}
    \begin{empheqboxed}
     loss functions that directly enforce these constraints, whereas others~\citep{squires2023linear,varici2024general,varici2024linear} develop iterative, step-by-step algorithms.
    This diversity of solution methods not only underscores the flexibility of our theoretical framework but also highlights its practical relevance across different CRL settings.
\end{empheqboxed}

\looseness=-1 \textbf{What about the variant latents?} Intuitively, the variant latents are not identifiable, as the invariance constraint~\pcref{cons:invariance} is applied only to the selected invariant encodings, leaving the variant part without any weak supervision~\citep{locatello2019challenging}. This result is formalized as follows:

\looseness=-1 \begin{restatable}[General non-identifiability of variant latent variables]{proposition}{varNonID}
\label{prop:non-id-variant}
Consider the setup in~\cref{thm:ID_from_multi_sets}, let $A := \bigcup_{i \in [|\iotaset|]} A_i$ denote the union of block-identified latent indices and $A^{\mathrm{c}} := [N] \setminus A$ the complementary set where no $\iota$-invariance $\iota \in \iotaset$ applies, then the variant latents  $\zb_{A^{\mathrm{c}}}$ cannot be identified.
\end{restatable}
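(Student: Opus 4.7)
The plan is to prove non-identifiability by producing, for any valid encoder/selector configuration supplied by \cref{thm:ID_from_multi_sets}, a second configuration that satisfies every hypothesis of the theorem but whose restriction to the $A^{\mathrm{c}}$-coordinates is not related to the first by a diffeomorphism of $\zb_{A^{\mathrm{c}}}$ alone. Since \cref{defn:blockID} demands exactly such a diffeomorphism, the two configurations together contradict block-identifiability of $\zb_{A^{\mathrm{c}}}$, establishing the proposition. The underlying intuition is the one flagged in the preceding remark: \cref{cons:invariance} only sees selected coordinates and \cref{cons:sufficiency} only sees the invariant partitions $\zb_{A_i}$, leaving the complementary coordinates completely unconstrained.

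Concretely, fix one observable $\xb^k$ together with a smooth encoder $g_k$ meeting \cref{cons:invariance,cons:sufficiency}. Collect the indices ever picked up by any invariant block selector at view $k$ into $S^k := \bigcup_{i\,:\,k \in V_i}\{j \in [N] : \phi^{(i,k)}_j = 1\}$, and write $T^k := [N] \setminus S^k$ for the remaining coordinates, which are the only slots that can encode $\zb_{A^{\mathrm{c}}}$. Construct a smooth diffeomorphism $\tau: \RR^N \to \RR^N$ of triangular form: $\tau(y)_j = y_j$ for $j \in S^k$ and $\tau(y)_j = y_j + \psi_j(y_{S^k})$ for $j \in T^k$, with a genuinely non-constant smooth $\psi_j$. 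Define the alternative encoder $g'_k := \tau \circ g_k$ while keeping the same selectors $\Phi$.

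Next I would verify that $(g'_k, \Phi)$ still satisfies both constraints. The invariance constraint only probes selected coordinates, all of which lie in $S^k$ and are preserved by $\tau$, so $\phi^{(i,k)} \oslash g'_k(\xb^k) = \phi^{(i,k)} \oslash g_k(\xb^k)$ and \cref{cons:invariance} is inherited verbatim across $V_i$. The sufficiency constraint involves only mutual information between the encoding and each invariant block $\zb_{A_i}$; since $\tau$ is a global diffeomorphism it preserves mutual information, so $I(\zb_{A_i}, g'_k(\xb^k)) = I(\zb_{A_i}, g_k(\xb^k)) = H(\zb_{A_i})$ for every $i$. Hence $g'_k$ is an equally valid solution of the constrained optimisation problem.

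Finally, to derive the contradiction, note that by \cref{thm:ID_from_multi_sets} the $S^k$-coordinates of $g_k$ block-identify $\bigcup_i \zb_{A_i}$, so the added term $\psi_j\bigl(g_k(\xb^k)_{S^k}\bigr)$ is a genuinely nontrivial smooth function of $\zb_A$. Thus if $[g_k(\xb^k)]_{T^k}$ were a diffeomorphism of $\zb_{A^{\mathrm{c}}}$, then $[g'_k(\xb^k)]_{T^k}$ would depend nontrivially on both $\zb_A$ and $\zb_{A^{\mathrm{c}}}$ and so could not be a diffeomorphism of $\zb_{A^{\mathrm{c}}}$ alone; yet both encoders satisfy all hypotheses, giving the desired contradiction. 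The main obstacle is ensuring that the added dependence on $\zb_A$ in the $T^k$-block cannot be absorbed by a post-hoc reparameterisation of the $T^k$-block, which is precisely what the strict non-constancy of $\psi_j$ and the fact that \cref{cons:sufficiency} never mentions $\zb_{A^{\mathrm{c}}}$ rule out.
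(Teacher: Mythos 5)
Your proposal is correct and follows essentially the same route as the paper: both exhibit a second valid encoder obtained by shearing the variant coordinates with a non-constant function of the (block-identified) invariant coordinates, which preserves \cref{cons:invariance,cons:sufficiency} yet destroys any diffeomorphic relation between those slots and $\zb_{A^{\mathrm{c}}}$ alone. The paper simply instantiates this as a minimal two-dimensional counterexample ($f=\operatorname{id}$, $\hat{\zb}=[h(\zb_1),\,\zb_2-\zb_1]$ with a matching decoder), whereas you state the general triangular construction.
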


Although variant latent variables are generally non-identifiable, they can be identified under certain conditions. The following demonstrates that variant latent variables can be identified under invertible encoders when the variant and invariant partitions are \emph{mutually independent}.

\begin{restatable}[Identifiability of variant latent under independence]{proposition}{varIDindependence}
\label{prop:id-variant-indep}
Consider an optimal encoder $g\in G^*$ and optimal selector $\phi \in \Phi^*$ from~\cref{thm:ID_from_multi_sets} that jointly identify an invariant block $\zb_{A}$ (we omit subscriptions $k, i$ for simplicity), then $\zb_{A^{\mathrm{c}}} (A^{\mathrm{c}} := [N] \setminus A)$ can be identified by the complementary encoding partition $(1 - \phi) \oslash g$ only if 
\begin{enumerate}[(i),topsep=0em,itemsep=0em]
    \item $g$ is invertible in the sense that $I(\xb, g(\xb)) = H(\xb)$;
    \item $\zb_{A^{\mathrm{c}}}$ is independent on $\zb_{A}$.
\end{enumerate}
\end{restatable}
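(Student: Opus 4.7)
The plan is to derive both necessary conditions by exploiting the block structure that the block-identification of $\zb_A$ (granted by \cref{thm:ID_from_multi_sets}) imposes on the encoder. By \cref{defn:blockID}, the hypothesis gives $\phi \oslash g(\xb) = h(\zb_A)$ for some diffeomorphism $h: \RR^{|A|} \to \RR^{|A|}$, and identifying $\zb_{A^c}$ via $(1-\phi) \oslash g$ similarly yields $(1-\phi) \oslash g(\xb) = h'(\zb_{A^c})$ for some diffeomorphism $h': \RR^{|A^c|} \to \RR^{|A^c|}$. These two equations are the only structural facts I would carry into the rest of the argument.

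For (i), I would concatenate the two identifications: up to a permutation of components, $g(\xb) = (h(\zb_A), h'(\zb_{A^c}))$, which is the image of $\zb$ under the product diffeomorphism $\Psi := (h, h'): \RR^N \to \RR^N$. Since $\xb = f(\zb)$ with $f$ diffeomorphic onto its image by \cref{assmp:diffeomorphism}, the restriction of $g$ to $\mathrm{Im}(f)$ equals $\Psi \circ f^{-1}$, a composition of diffeomorphisms and therefore itself a diffeomorphism. Invertibility of $g$ on the support of $\xb$ directly yields $I(\xb, g(\xb)) = H(\xb)$.

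For (ii), the plan is to use the pushforward identity induced by $\tilde{h} := g \circ f$. Because $\hat{\zb}_{\hat{A}} = h(\zb_A)$ and $\hat{\zb}_{\hat{A}^c} = h'(\zb_{A^c})$ each depend only on the corresponding ground-truth block, the dependency structure between the two encoded components exactly mirrors that between $\zb_A$ and $\zb_{A^c}$ under the relabelings $h$ and $h'$. I would then argue, using the optimality of $(g, \phi)$ together with the cardinality constraint $\|\phi\|_0 = |A|$ from \cref{defn:block_selectors}, that the complementary encoding carries no information redundant with the selected one, i.e., $H(\hat{\zb}_{\hat{A}^c} \mid \hat{\zb}_{\hat{A}}) = H(\hat{\zb}_{\hat{A}^c})$. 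Pulling this equality back through the diffeomorphisms $h, h'$ then yields $\zb_{A^c} \perp \zb_A$.

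The main obstacle is establishing this conditional-entropy equality. \cref{cons:invariance} and \cref{cons:sufficiency} as written do not explicitly forbid $(1-\phi) \oslash g$ from capturing some information about $\zb_A$, so the proof most likely requires invoking an additional disentanglement/minimality property implicit in ``optimality''---for instance, showing that any such redundancy would permit an alternative selector $\phi'$ with $\|\phi'\|_0 < |A|$ still satisfying both constraints, contradicting the tight dimension matching required for block-identifying $\zb_A$. Carefully formalizing this minimality argument is the step I expect to be the most delicate part of the proof.
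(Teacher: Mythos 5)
You have read the proposition literally as a necessity claim (``identification of $\zb_{A^{\mathrm{c}}}$ implies (i) and (ii)''), but the paper's own proof --- and the sentence introducing the proposition (``variant latent variables can be identified under invertible encoders when the variant and invariant partitions are mutually independent'') --- make clear that the intended content is the \emph{sufficiency} direction: assume (i) and (ii) and conclude that $(1-\phi)\oslash g$ identifies $\zb_{A^{\mathrm{c}}}$. The paper does this with a mutual-information bookkeeping argument: it decomposes $I(\xb;g(\xb)) = I(\xb;\phi\oslash g(\xb)) + I(\xb;(1-\phi)\oslash g(\xb))$, uses block-identifiability of $\zb_A$ to evaluate the first term as $H(\zb_A)$, uses (i) and (ii) to write $I(\xb;g(\xb)) = H(\zb_A)+H(\zb_{A^{\mathrm{c}}})$, and cancels to get $I\left(\xb;(1-\phi)\oslash g(\xb)\right) = H(\zb_{A^{\mathrm{c}}})$, from which it reads off that the complementary encoding is a bijective function of $\zb_{A^{\mathrm{c}}}$. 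Your proposal proves a different statement, so it cannot be counted as a proof of what the paper establishes.

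Taken on its own terms, your necessity argument also has an unfixable gap at step (ii). Independence of $\zb_A$ and $\zb_{A^{\mathrm{c}}}$ is simply not a consequence of both blocks being identified: take $g=f^{-1}$ and $\phi$ the indicator of $A$, so that $\phi\oslash g(\xb)=\zb_A$ and $(1-\phi)\oslash g(\xb)=\zb_{A^{\mathrm{c}}}$ exactly. Both blocks are identified (with $h=h'=\operatorname{id}$) regardless of any statistical dependence between the ground-truth blocks, and the conditional-entropy equality $H(\hat{\zb}_{\hat{A}^{\mathrm{c}}}\mid\hat{\zb}_{\hat{A}})=H(\hat{\zb}_{\hat{A}^{\mathrm{c}}})$ that you hope to establish is false in precisely this case whenever $\zb_A$ and $\zb_{A^{\mathrm{c}}}$ are dependent. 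The minimality argument you sketch cannot rescue this: the constraint $\|\phi\|_0=|A|$ in \cref{defn:block_selectors} fixes the \emph{dimension} of the selected block, not the statistical redundancy between the two encoding blocks, and block-identifiability only asserts that each encoding block is a deterministic function of the corresponding ground-truth block --- it says nothing about the joint distribution of the two blocks. Your argument for (i) is fine as a necessity statement, but the proposition as used in the paper requires you to run the implication the other way, for which the entropy-accounting argument above (or an equivalent one) is needed.
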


\begin{empheqboxed}
    \looseness=-1 \textbf{Discussion:} The generalization of new interventions in CRL can be viewed through two distinct layers. The first layer involves generalizing to unseen interventional values, where the model encounters novel combinations of intervention settings; this has been demonstrated in several existing works \citep{zhang2024identifiability,von2024nonparametric}. 
    The second layer concerns generalization to unseen nodes, which, as~\cref{prop:non-id-variant} shows, is fundamentally challenging without additional conditions. 
    However, under weak assumptions such as the independence of variant and invariant latent partitions and sufficient latent representation for reconstruction, non-intervened nodes in the training phase can be identified during inference~\pcref{prop:id-variant-indep}. This observation aligns with the identifiability algebra described in \citep{yao2023multi} and is supported by findings in disentanglement literature \citep{locatello2020weakly,fumero2024leveraging} as well as in temporal CRL~\citep{lippe2022citris,lachapelle2024disentanglement}. Overall, this result emphasizes the need for carefully considering the conditions under which variant latents can be identified, and it delineates the practical limitations of CRL methods when generalizing to unseen nodes.
\end{empheqboxed}

\section{Related Works as Special Cases of Our Theory}
\label{sec:related_works}
\looseness=-1 This section provides an overview of the literature on CRL, including multiview, interventional, temporal, and multi-task settings, as well as domain generalization. We explain the underlying invariance principles and show how they naturally fit into our framework as special cases.~\Cref{tab:examples} lists concrete examples and the explicit forms of their underlying invariance, and further mathematical details are deferred to Appendix~\cref{app:related_works}.

\looseness=-1 \textbf{Multiview CRL}. Multiview CRL (also considered as ``counterfactual" CRL) considers a setting where each view (observable $\xb^k$) is generated from a subset of latent causal variables~\citep{locatello2020weakly,ahuja2022weakly,von2021self,daunhawer2023identifiability,yao2023multi}. Given any set of jointly observed views, the view-specific generating latents could overlap, giving rise to {\bf \emph{sample level invariance}} on all realizations of these shared latents.
The common theoretical contribution in this line of work in terms of identifiability is that the invariant partition of latents (shared ones) can be block-identified by enforcing aligned and sufficient representation, which is a special case of~\cref{thm:ID_from_multi_sets} with specified sample invariance.

\looseness=-1 \textbf{Interventional CRL}. Interventional (also termed multi-environment) CRL~\citep{ahuja2023interventional,squires2023linear,zhang2024identifiability,buchholz2023learning,varici2023score,varici2024general,von2021self,liang2023causal} collects data from multiple environments that follow different data distributions, often originated from interventions ($\xb^k \sim P^k$). 
Current interventional CRL literature has provided fruitful identifiability results based on various types of interventions: either atomic or paired interventions per node or different parametric assumptions on the mixing function or the latent causal model. 
Interventions give rise to many types of invariance: When performing an atomic intervention on an arbitrary node, the \textbf{\emph{marginal}} of its non-descendants remain invariant; the \textbf{\textit{score}} of all other nodes than its parents and itself also remain invariant. 
By utilizing these two types of invariance, we can not only explain various prior identification theories as special cases of~\cref{thm:ID_from_multi_sets}, but also directly develop new element-wise identification results on the latent variables, given \emph{imperfect} atomic interventions per node~\pcref{cor:single_node_id}.
Some other works~\citep{von2021self,varici2024general} consider paired interventions per node, with an \textbf{\emph{invariant interventional target}} between these paired interventional environments. This invariance imposes a certain score structure in the latent space, which can be used as the invariant constraint~\pcref{cons:invariance}. More details in this regard are provided in~\cref{app:interventional_CRL}.
More recently, \citet{ahuja2024multi} explains previous interventional identifiability results from a general weak distributional invariance perspective. 
\citet{ahuja2024multi} proves block-affine identification~\pcref{defn:block_affineID} by additionally assuming the mixing function to be finite degree polynomial, which can be explained by~\cref{prop:transition_identification} together with our block-identifiability results under the general nonparametric setting.
They consider \emph{one} single invariance set, which is a special case of~\cref{thm:ID_from_multi_sets} with one joint $\iota$-property. 
Another line of interventional CRL work~\citep{zhang2024identifiability} employs an orthogonal proof technique, originating from nonlinear ICA with auxiliary variables~\citep{hyvarinen2019nonlinear}. We remark that our framework does not directly include this line of identifiability theory.

\looseness=-1 \textbf{Temporal CRL}. Extending CRL into time-series setting, temporal CRL often assumes an ``intervenable" trajectory in the latent space~\citep{lippe2022causal,lippe2022citris,lippe2023biscuit,lachapelle2022disentanglement,yao2022learningtemporallycausallatent,yao2022temporally,li2024identification}. 
At each time step, an intervention/action modifies the dynamics of a subset of latent variables, with the remaining invariant partition following the default dynamics conditioning on the previous time step.
Existing works have shown that the intervened part can be disentangled from the invariant part when there is no causal link between the latent causal variables at the same time step~\citep{lachapelle2022disentanglement,lippe2022causal, lippe2022citris}.
Comparing the ``counterfactual" latent with the actual partially intervened latents on the same time step, one observes the \textbf{\textit{transitional distribution}} (current latents conditioning on previous latents) remain invariant for the non-intervened partition (see~\cref{tab:examples} for concrete examples). This formulates an explicit $\iota$-property~\pcref{defn:invariance_projector} for each time step with potentially different invariant partitions, explaining many existing temporal CRL identifiability theories by incorporating~\cref{thm:ID_from_multi_sets}.

\looseness=-1 \textbf{Multi-task CRL} In supervised CRL, latent variables~\citep{lachapelle2022synergies,fumero2024leveraging} are shown to be identifiable under multi-task setting, meaning there are multiple task labels available for each observable ($\xb^k := (\xb, \yb^k)$).
The key criterion for achieving identifiability is \textbf{\textit{overlapping task support}}, i.e., a set of tasks depends on a shared set of latents. This shared set of latents constitutes the invariant partition $\zb_A$, as illustrated in~\cref{tab:examples}.
Incorporating this invariance principle into~\cref{thm:ID_from_multi_sets} explains the identification results of~\citep{lachapelle2022synergies,fumero2024leveraging}, showing the overlapping task support can be identified.

\looseness=-1 \textbf{Domain Generalization.} The field of domain generalization focuses on the out-of-distribution performance of the learned representation instead of the theoretical identifiability guarantee~\citep{rojas2018invariant,arjovsky2020invariantriskminimization,ahuja2022invarianceprinciplemeetsinformation,krueger2021out,sagawa2019distributionally}. 
The goal is to learn representations that perform equally well across domains originating from distributional shifts, such as covariates shift or concept shift. 
Domain generalization typically assumes the same downstream prediction task, and this task depends on the same subset of latent factors $A$ across all domains. 
Given the same ground truth task-latent dependency, the \textbf{\emph{domain risk}} w.r.t.\ ground truth inverting process remains invariant across all domains. This invariance property together with~\cref{thm:ID_from_multi_sets} could provide theoretical insights for domain generalization works such as~\citep {krueger2021out,sagawa2019distributionally} (formal mathematical derivation provided in~\hyperlink{domain_generalization}{(f)}).

\section{Experiments}
\label{sec:experiments}
This section illustrates the expanded applicability of CRL algorithms under the invariance principle. \Cref{subsec:istant} shows improved treatment effect estimation on the high-dimensional causal inference benchmark~\citep{cadei2024smoke} by enforcing the invariance principle through existing domain generalization techniques~\citep{krueger2021out}. This result underscores the practical utility of our unified approach. Additionally, \cref{subsec:synthetic_ablation} provides ablation studies on existing interventional CRL methods~\citep{ahuja2023interventional,zhang2024identifiability}, showcasing that the non-trivial distributional invariance required for latent variable identification can arise from non-causal assumptions.


\subsection{Case Study: ISTAnt}
\label{subsec:istant}
\looseness=-1 This experiment focuses on ISTAnt~\citep{cadei2024smoke}, a recent real-world ecological benchmark designed for treatment effect estimation.
ISTAnt consists of video recordings of ants triplets with occasional grooming behavior. 
The goal is to extract a per-frame representation for supervised behavior classification (grooming or not) to estimate the Average Treatment Effect of an intervention (exposure to a chemical substance). 
Further details about the problem setting are provided in~\cref{app:istant}.

\looseness=-1 \textbf{Experiment settings.} Different videos in ISTAnt are considered different \emph{experiments} as the experiment settings and treatments vary.
We consider hard annotation sampling criteria (more non-annotated than annotated) for both experiments (videos) and positions, as described by \cite{cadei2024smoke}.
For the training, we adopt a domain generalization objective that utilizes the invariance principle~\citep{krueger2021out}, which is restated as follows:
\begin{equation}
\label{eq:loss_vrex}
    \textstyle
    \mathcal{R}_{\text{V-REx}}(\wb \circ g) = \underbrace{ \textstyle \lambda_{\text{INV}} \, \text{Var}(\{\mathcal{R}_1(\wb \circ g), \dots, \mathcal{R}_K(\wb \circ g)\})}_{\text{invariance}} + \underbrace{\textstyle \sum_{k\in [K]} \mathcal{R}_k(\wb \circ g)}_{\text{sufficiency}},
\end{equation}
we provide a detailed derivation in~\hyperlink{domain_generalization}{(f)} showing the invariance term above is indeed enforcing risk invariance. 
We vary the strength of the invariant component in~\cref{eq:loss_vrex} by setting the regularization multiplier $\lambda_{\text{INV}}$ from 0 (ERM) to 10\ 000. 
We repeat 20 independent runs for each $\lambda_{\text{INV}}$ to estimate the statistical error. 
Further implementational details are deferred to~\cref{app:istant}.
We evaluate the performance with both \emph{balanced accuracy} and \textit{Treatment Effect Relative Bias} (TERB). TERB is defined by~\citet{cadei2024smoke} as the ratio between the bias in the predictions across treatment groups and the true average treatment effect estimated with ground-truth annotations over the whole trial.

\looseness=-1 \textbf{Results.} \cref{fig:results} depicts the model performance regarding varying invariance regularization strength $\lambda_{\text{INV}}$. 
Consistent with our expectation, the balanced accuracy initially increases with the 
$\lambda_{\text{INV}}$, as adequate invariance enforces identifying task-related latents, thus benefiting the prediction problem. At a later point, the performance decreases because the sufficiency component is not correctly balanced with the invariance. 
Similarly, the TERB improves positively, weighting the invariance component until a certain threshold. 
On average, with $\lambda_{\text{INV}}=100$ the TERB decreases to 20\% (from 100\% using ERM) with experiment subsampling. 
In agreement with \citep{cadei2024smoke}, a naive estimate of the TERB on a small validation set is a reasonable (albeit not perfect) model selection criterion. 
Although it performs slightly worse than model selection based on \emph{Empirical Risk Minimization}(ERM) loss in the position sampling case, it shows more reliability overall. This experiment underscores the advantages of flexibly enforcing known invariances in the data, corroborating our identifiability theory~\pcref{sec:identifiability}.

\begin{figure}
    \centering
    \begin{subfigure}[b]{0.45\textwidth}
        \centering
        \includegraphics[width=\textwidth]{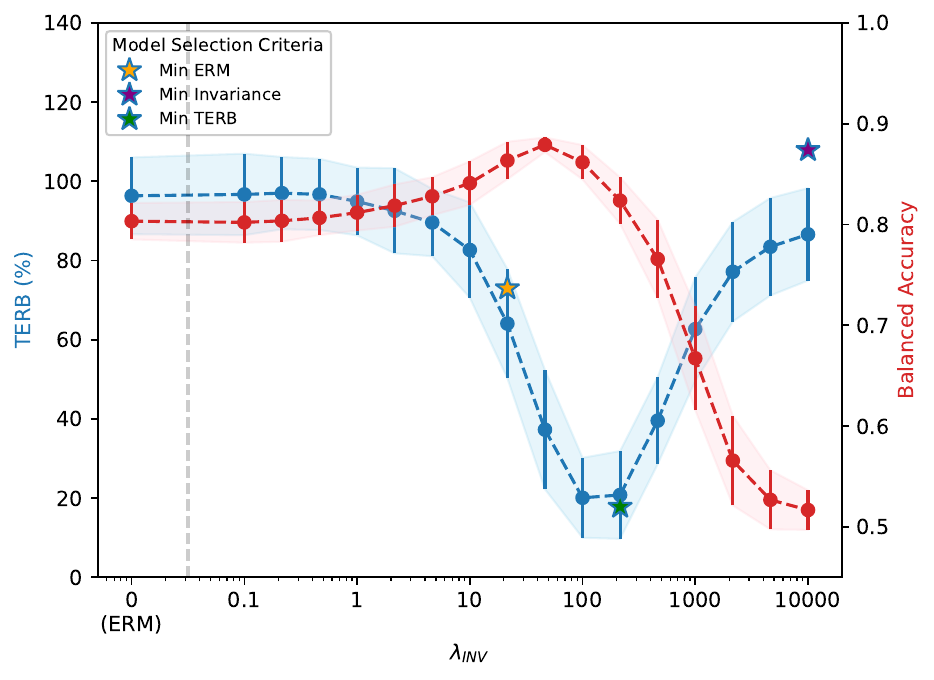}
        \caption{Experiment Sampling}
        \label{fig:results_exp}
    \end{subfigure}\hfill
    \begin{subfigure}[b]{0.45\textwidth}
        \centering
        \includegraphics[width=\textwidth]{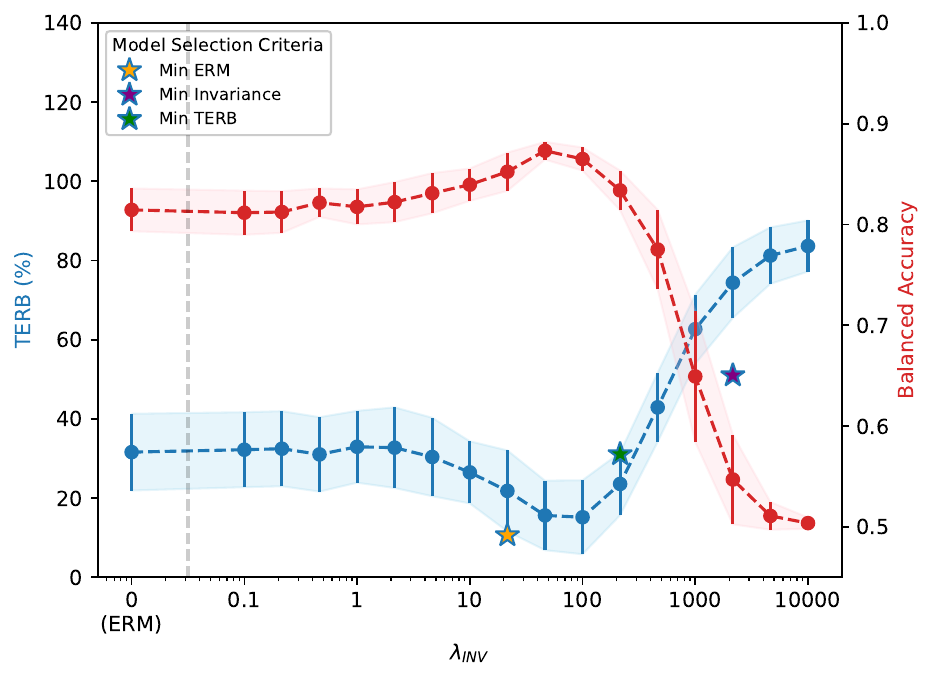}
        \caption{Position Sampling}
        \label{fig:results_pos}
    \end{subfigure}
    \caption{\textcolor{SmokeBlue}{TERB} and \textcolor{BrickRed}{Balanced Accuracy} with standard deviation over 20 different seeds varying the invariance weight $\lambda_{\text{INV}}$ of V-REx~\citep{krueger2021out} on ISTAnt dataset~\citep{cadei2024smoke}. Stars represent the selected best models based on a small but heterogeneous validation set.}
    \label{fig:results}
\vspace{-10pt}
\end{figure}

\subsection{Synthetic Ablation with ``Ninterventions''}
\label{subsec:synthetic_ablation}
This subsection presents identifiability results under non-causal conditions using simulated data. 
We consider a simple graph of three causal variables as $\zb_1 \rightarrow \zb_2 \rightarrow \zb_3$. The corresponding joint density has the form of 
\begin{equation*}
    p_{\zb}(z_1, z_2, z_3) = p(z_3~|~z_2) p(z_2~|~z_1) p(z_1).
\end{equation*}
This experiment aims at demonstrating that existing methods of interventional CRL rely primarily on distributional invariance, regardless of whether this invariance arises from a well-defined intervention or some other arbitrary transformation. To illustrate this, we introduce the concept of a ``nintervention," which has a similar distributional effect to a regular intervention, maintaining certain conditionals invariant while altering others, but without a causal interpretation.

\begin{definition}[Nintervention]
\label{defn:nint}
    We define a ``\textit{nintervention}'' on a causal conditional as the process of changing its distribution but cutting all \emph{incoming and outgoing} edges. Child nodes condition on the old, pre-intervention, random variable. Formally, we consider the latent SCM as defined in~\cref{defn:latent_scm}, an \emph{nintervention} on a node $j \in [N]$ gives rise to the following conditional factorization
    \begin{equation*}
    \textstyle
        \tilde{p}_{\zb}(z) = \tilde{p}(z_j) \prod_{i \in [N]\setminus \{j\}} p(z_i~|~\pa{z}{i}^{\text{old}})
    \end{equation*}
\end{definition}
Note that the marginal distribution of all non-nintervened nodes $P_{\zb_{[N] \setminus j}}
$ remain invariant after nintervention. 
 In previous example, we  perform a nintervention by replacing the conditional density $p(z_2~|~z_1)$ using a sufficiently different marginal distribution $\tilde{p}(z_2)$ that satisfies~\cref{defn:invariance_projector} (ii), which gives rise to the following new factorization $\tilde{p}_{\zb}(z_1, z_2, z_3) = p(z_3~|~z_2^{\text{old}}) \tilde{p}(z_2) p(z_1).$
Note that $\zb_3$ conditions on the random variable $\zb_2$ before nintervention, whose realization is denoted as $z_2^{\text{old}}$. Differing from a causal \emph{intervention}, we cut both the incoming and outgoing links of $\zb_2$ and keep the marginal distribution of $\zb_3$ the same. 
Clearly, this is a non-sensical intervention from the causal perspective because we eliminate the causal effect from $\zb_2$ to its descendants. 

\looseness=-1 \textbf{Experiment settings.}
As a proof of concept, we choose a linear Gaussian additive noise model and a nonlinear mixing function implemented as a 3-layer invertible MLP. We average the results over three independently sampled \emph{ninterventional} densities $\tilde{p}(z_2)$ while guaranteeing all \emph{ninterventional} distributions satisfy~\cref{defn:invariance_projector} (ii). As the marginal distribution of both $\zb_1, \zb_3$ remains the same after a \emph{nintervention}, we expect $\zb_1, \zb_3$ to be block-identified~\pcref{defn:blockID} according to~\cref{thm:ID_from_multi_sets}. 
In practice, we enforce the marginal invariance constraint~\pcref{cons:invariance} by minimizing the MMD loss, as implemented by the interventional CRL works~\citep{zhang2024identifiability,ahuja2024multi} and train an auto-encoder for a sufficient representation~\pcref{cons:sufficiency}. Further details are included in~\cref{app:nintervention}.

\looseness=-1 \textbf{Results.} To validate block-identifiability, we perform Kernel-Ridge Regression between the estimated block $[\hat{\zb}_{1}, \hat{\zb}_{3}]$ and the ground truth latents $\zb_1, \zb_2, \zb_3$. Both $\zb_1$ and $\zb_3$ are block-identified with high $R^2$ scores of $0.863 \pm 0.031$ and $0.872 \pm 0.035$. In contrast, $\zb_2$ is not identified, with a low $R^2$ of $0.065 \pm 0.017$, indicating identification is driven by the underlying distributional invariance.

\section{Conclusion}
\looseness=-1 
In this paper, we examined a broad range of CRL methods and found that many of them share common strategies for aligning representations with known data symmetries. We identified two key components in achieving identifiability: preserving data information and enforcing a set of known invariances (see \cref{sec:identifiability}). Our work clarifies the role of causal assumptions in latent variable identification, shifting the focus from specific, often impractical, assumptions to a general recipe that enables practitioners to specify and leverage known invariances in their problems. Following this recipe, we exemplified the practical impact of our approach on ecological data (\cref{subsec:istant}). This paper leaves out settings involving discrete variables and finite sample guarantees, which might be interesting for future work.

\subsubsection*{Ethics Statement}
\looseness=-1 This work unifies many existing theoretical results in CRL, thus vastly broadening its real-world applicability. As the paper is predominantly theoretical, we believe it poses no immediate ethical risks.

\subsubsection*{Reproducibility Statement}
\looseness=-1 All proofs in this paper are deferred to~\cref{app:proofs}. 
The ISTAnt dataset in~\cref{subsec:istant} is published by~\citep{cadei2024smoke}. 
Results provided in~\cref{sec:experiments} can be reproduced following the details given in~\cref{app:experiments}. Since our primary focus is on the theoretical unification of latent variable identification algorithms, the practical implementation of the invariance and sufficiency constraints may take various forms (as illustrated in~\cref{tab:related_work}) and should be tailored to the specific problem at hand. For a brief example of the ecology experiment~\pcref{subsec:istant}, please visit: \href{https://github.com/CausalLearningAI/ISTAnt/blob/main/experiments/invariance.ipynb}{https://github.com/CausalLearningAI/ISTAnt/blob/main/experiments/invariance.ipynb}.

\subsubsection*{Acknowledgements} 
\looseness=-1 We thank Jiaqi Zhang, Francesco Montagna, David Lopez-Paz, Kartik Ahuja, Thomas Kipf, Sara Magliacane, Julius von K\"ugelgen, Kun Zhang, and Bernhard Sch\"olkopf for extremely helpful discussion. Riccardo Cadei was supported by a Google Research Scholar Award to Francesco Locatello. We acknowledge the Third Bellairs Workshop on Causal Representation Learning held at the Bellairs Research Institute, February 9$/$16, 2024, and a debate on the difference between interventions and counterfactuals in disentanglement and CRL that took place during Dhanya Sridhar's lecture, which motivated us to significantly broaden the scope of the paper. We thank Dhanya and all participants of the workshop.

\bibliographystyle{iclr2025_conference}
\bibliography{refs}
\clearpage
\appendix
\addcontentsline{toc}{section}{Appendix} 
\part{Appendix} 
{
  \hypersetup{linkcolor=SmokeBlue}
  \parttoc
}
\makenomenclature
\renewcommand{\nomname}{} 
\section{Notation and Terminology}
\label{sec:notations}
This section provides a glossary of symbols and notations used throughout the paper.
\nomenclature[1]{\(f\)}{Mixing function}
\nomenclature[1]{\(g\)}{Smooth encoder}
\nomenclature[2]{\(\zb\)}{Ground truth latent variables}
\nomenclature[2]{\(\xb\)}{Entangled observables}
\nomenclature[3]{\(\Scal_{\zb}\)}{A set of latent vectors}
\nomenclature[3]{\(\Scal_{\xb}\)}{A set of observables}
\nomenclature[3]{\(N\)}{Dimensionality of latents $\zb$}
\nomenclature[3]{\(D\)}{Dimensionality of observable $\xb$}
\nomenclature[4]{\(A \)}{Subset of latent indices with invariance properties ($A \subseteq [N]$)}
\nomenclature[4]{\(\iota\)}{Invariance property}
\nomenclature[4]{\(\sim_{\iota}\)}{The latent equivalence relation}
\nomenclature[5]{\(\iotaset\)}{A set of invariance properties}
\nomenclature[6]{\(\Zcal\)}{Support of a set of latent vectors $\Scal_{\zb}$}
\nomenclature[6]{\(\Xcal\)}{Support of a set of observables $\Scal_{\xb}$}
\nomenclature[7]{\(G\)}{A set of smooth encoders}
\nomenclature[7]{\(\Phi\)}{A set of selectors}
\nomenclature[8]{\(\Gcal\)}{Ground truth causal graph}
\nomenclature[8]{\(\operatorname{TC}\)}{Transitive closure}
\printnomenclature

\section{Preliminaries}
\label{app:premilinaries}
In this subsection, we revisit the common definitions and assumptions in identifiability works from CRL that are needed for subsequent theoretical analysis. We begin with the definition of a latent structural causal model:
\begin{definition}[Latent SCM~\citep{von2024nonparametric}]
\label{defn:latent_scm}
    Let $\zb = \{\zb_1, \dots, \zb_N\}$ denote a set of causal ``endogenous" variables with each $\zb_i$ taking values in $\RR$, and let $\ub = \{\ub_1, \dots, \ub_N\}$ denotes a set of mutually independent ``exogenous" random variables. The latent SCM consists of a set of structural equations
    \begin{equation}
    \label{eq:scm}
        \{\zb_i := m_i(\zb_{\text{pa}(i)}), \ub_i\}_{i=1}^N,
    \end{equation}
    where $\zb_{\text{pa}(i)}$ are the causal parents of $\zb_i$ and $m_i$ are the deterministic functions that are termed ``causal mechanisms". We indicate with $P_{\ub}$ the joint distribution of the exogenous random variables, which, due to the independence hypothesis, is the product of the probability measures of the individual variables. The associated causal diagram $\Gcal$ is a directed graph with vertices $\zb$ and edges $\zb_i \to \zb_j$ iff.~$\zb_i \in \zb_{\text{pa}(j)}$; we assume the graph $\Gcal$ to be acyclic.
\end{definition}
The latent SCM induces a unique distribution $P_{\zb}$ over the endogenous variables $\zb$ as a pushforward of $P_{\ub}$ via~\cref{eq:scm}. Its density $p_{\zb}$ follows the causal Markov factorization:
\begin{equation}
    p_{\zb}(z) = \prod_{i=1}^N p_i(z_i ~|~ \pa{z}{i}).
\end{equation}
Instead of directly observing the endogenous and exogenous variables $\zb$ and $\ub$, we only have access to some ``entangled" measurements $\xb$ of $\zb$ generated through a nonlinear mixing function:
\begin{definition}[Mixing function]
\label{defn:mixing_fn}
    A deterministic smooth function $f: \RR^N \to \RR^D$ mapping the latent vector $\zb \in \RR^N$ to its observable $\xb \in \RR^D$, where $D \geq N$ denotes the dimensionality of the observational space.
\end{definition}

\begin{assumption}[Diffeomorphism]
\label{assmp:diffeomorphism}
    The mixing function $f$ is diffeomorphic onto its image, i.e. $f$ is $C^\infty$, $f$ is injective and $f^{-1}|_{\text{Im}(f)}: \text{Im}(f) \rightarrow \RR^D$ is also $C^\infty$.
\end{assumption}

\textbf{Remark:} Settings with noisy observations ($\xb = f(\zb) + \epsilon, \,  \zb \perp \epsilon$) can be easily reduced to our denoised version by applying a standard deconvolution argument as a pre-processing step, as indicated by~\citet{lachapelle2022disentanglement, buchholz2023learning}.

\section{Identifiability Theory}
\looseness=-1 In addition to the general results for latent variable identification presented in~\cref{sec:identifiability},
we compare in~\cref{app:granularity_id} different granularity of latent variable identification and show their transitions through certain assumptions on the causal model or mixing function. 
Afterward,~\cref{app:causal_graph_identification} discusses the identification level of a causal graph depending on the granularity of latent variable identification under certain structural assumptions.
Proofs are deferred to~\cref{app:proofs}.
\subsection{On the granularity of latent variable identification}
\label{app:granularity_id}
Different levels of identification can be achieved depending on the degree of underlying invariance and data symmetry. Below, we present three standard identifiability definitions from the CRL literature, each providing a stronger identification result than block-identifiability~\pcref{defn:blockID}.

\begin{definition}[Block affine-identifiability]
\label{defn:block_affineID}
    Let $\hat{\zb}$ be the learned representation, for a subset $A\subseteq [N]$ it satisfies that:
    \begin{equation}
    \label{eq:block_affineID}
        \hat{\zb}_{\pi(A)} = D \cdot \zb_A + \bb,
    \end{equation}
    where $D \in \RR^{|A| \times |A|}$ is  an invertible matrix, $\pi(A)$ denotes the index permutation of $A$, then $\zb_A$ is block affine-identified by $\hat{\zb}_{\pi(A)}$.
\end{definition}

\begin{definition}[Element-identifiability]
\label{defn:crlID}
    The learned representation $\hat{\zb} \in \RR^N$ satisfies that:
    \begin{equation}
    \label{eq:crlID}
        \hat{\zb} = \Pb_{\pi} \cdot h(\zb),
    \end{equation}
    where $\Pb_{\pi} \in \RR^{N \times N}$ is a permutation matrix, $h(\zb):= (h_1(\zb_1), \dots h_N(\zb_N)) \in \RR^{N}$ is an element-wise diffeomorphism.
\end{definition}

\begin{definition}[Affine-identifiability]
\label{defn:affineID}
    The learned representation $\hat{\zb} \in \RR^N$ satisfies that:
    \begin{equation}
    \label{eq:affineID}
        \hat \zb = \Lambda \cdot \Pb_{\pi} \cdot \zb + \bb,
    \end{equation}
    where $\Pb_{\pi} \in \RR^{N \times N}$ is a permutation matrix, $\Lambda \in \RR^{N \times N}$ is a diagonal matrix with nonzero diagonal entries.
\end{definition}

\begin{empheqboxed}
    \looseness=-1\textbf{Remark}: Block affine-identifiability~\pcref{defn:block_affineID} is defined by~\citet{ahuja2023interventional}, stating that a subset of the learned representation $\hat{\zb}_{\pi(A)}$ is related to the ground truth partition $\zb_A$ through some affine transformation.
    \Cref{defn:crlID} indicates element-wise identification of latent variables up to individual diffeomorphisms. 
    Element-identifiability for the latent variable identification together with the graph identifiability~\pcref{defn:graphID} is defined as $\sim_\textsc{CRL}$-identifiability ~\citep[][Defn.~2.6]{von2024nonparametric}, perfect identifiability~\citep[][Defn.~3]{varici2024general}.
    Affine identifiability~\pcref{defn:affineID} describes when the ground truth latent variables are identified up to permutation, shift, and linear scaling. In many CRL works, affine identifiability~\pcref{defn:affineID} is also termed as follows: perfect identifiability under linear transformation~\citep[][Defn.~1]{varici2024linear}, CD-equivalence~\citep[][Defn. 1]{zhang2024identifiability}, disentanglement~\citep[][Defn.~3]{lachapelle2022disentanglement}. 
\end{empheqboxed}

\begin{figure}
    \centering
    \includegraphics[width=.4\textwidth]{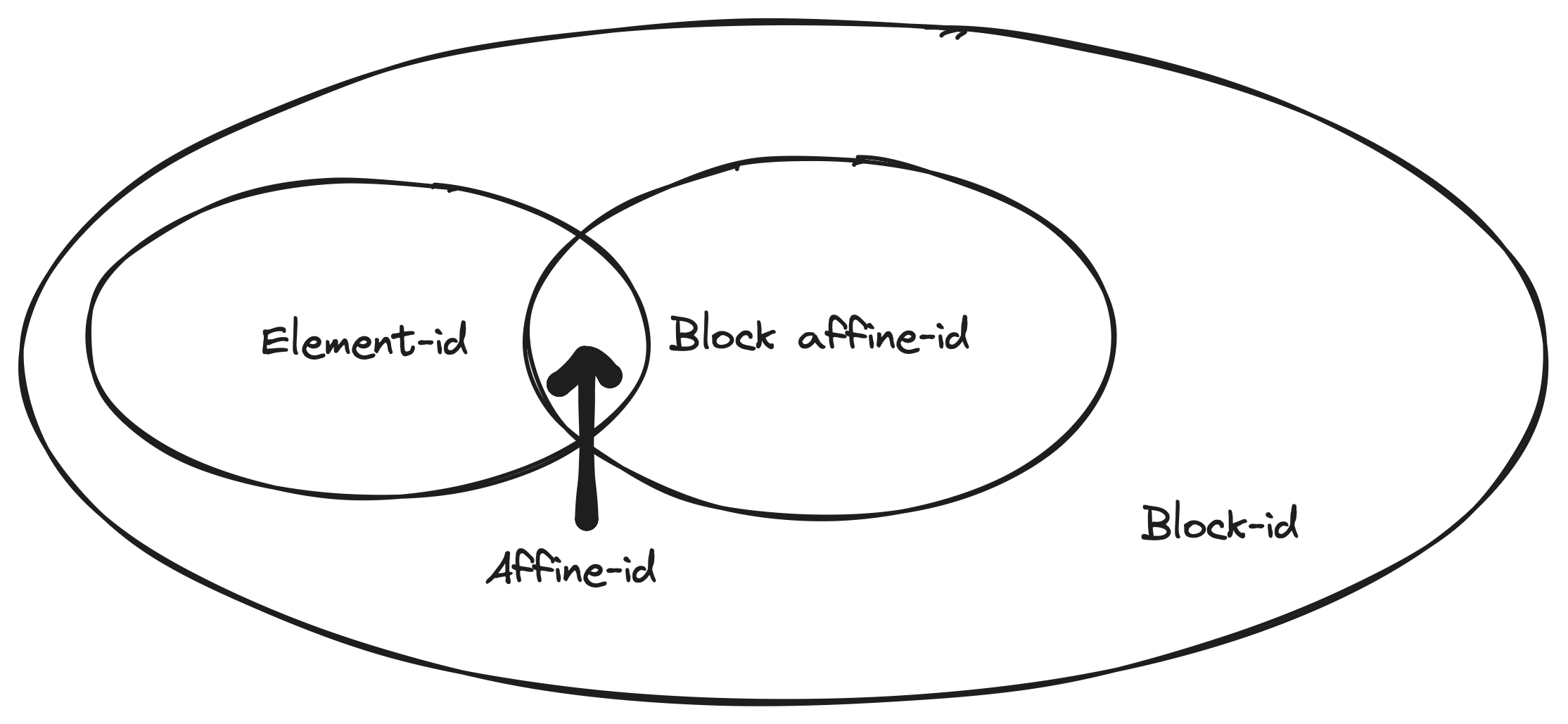}
    \hfill
    \includegraphics[width=.55\textwidth]{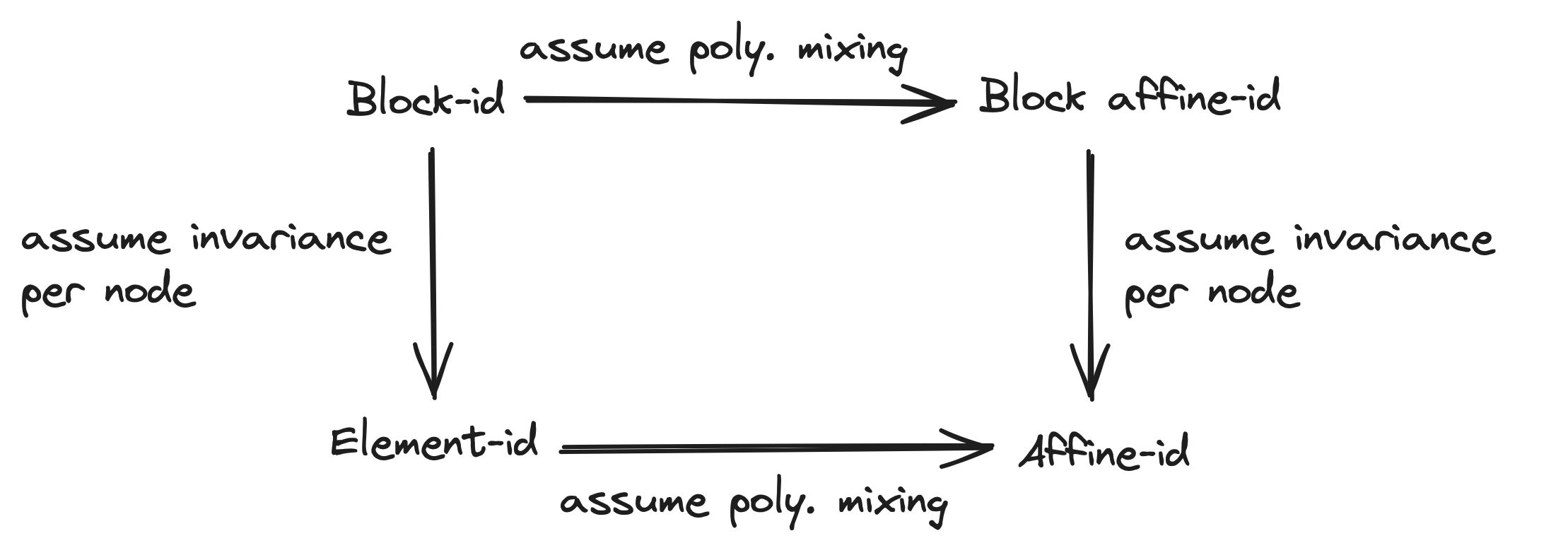}
    \caption{Relations between different identification classes~\pcref{defn:blockID,defn:crlID,defn:affineID,defn:block_affineID}. Some CRL works proposed a more fine-grained classification of identifiability concepts with slightly different terminology, which we omit here for readability.}
    \label{fig:id_granularity}
\end{figure}

\begin{restatable}[Granularity of identification]{proposition}{granularityID}
    \label{prop:granularity_ident}
    Affine-identifiability~\pcref{defn:affineID} implies element-identifiability~\pcref{defn:crlID} and block affine-identifiability~\pcref{defn:block_affineID} while element-identifiability and block affine-identifiability implies block-identifiability~\pcref{defn:blockID}.
\end{restatable}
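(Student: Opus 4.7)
The plan is to verify each of the four implications by direct construction, since each of the four definitions is a special case of the next in a transparent way. I would first fix a convention for the permutation matrix $\Pb_{\pi}$ (say $(\Pb_{\pi} v)_{\pi(i)} = v_i$) so that indices track cleanly across the three equations \eqref{eq:block_affineID}, \eqref{eq:crlID}, \eqref{eq:affineID}.

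For affine-identifiability implying element-identifiability, I would start from $\hat{\zb} = \Lambda \cdot \Pb_{\pi} \cdot \zb + \bb$ and reorganize so that the affine action is coordinate-wise before the permutation. Concretely, define $h_k(\zb_k) := \lambda_{\pi(k)} \zb_k + b_{\pi(k)}$; each $h_k$ is an affine diffeomorphism of $\RR$ since the diagonal entry $\lambda_{\pi(k)}$ of $\Lambda$ is nonzero. Assembling $h(\zb) := (h_1(\zb_1), \dots, h_N(\zb_N))$ and applying $\Pb_{\pi}$ recovers $\hat{\zb} = \Pb_{\pi} \cdot h(\zb)$, which is exactly \eqref{eq:crlID}. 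For affine implying block affine-identifiability, I would take any $A \subseteq [N]$ and read off the $\pi(A)$-rows of $\hat{\zb} = \Lambda \Pb_{\pi} \zb + \bb$: these rows depend only on $\zb_A$ and take the form $\hat{\zb}_{\pi(A)} = D \cdot \zb_A + \bb_{\pi(A)}$ where $D$ is the $|A|\times|A|$ diagonal submatrix of $\Lambda$ indexed by $\pi(A)$, hence invertible, matching \eqref{eq:block_affineID}.

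For element-identifiability implying block-identifiability, I would fix $A \subseteq [N]$ and restrict \eqref{eq:crlID} to the $\pi(A)$ coordinates, obtaining $\hat{\zb}_{\pi(A)} = (h_j(\zb_j))_{j \in A}$, which is a diffeomorphism $\RR^{|A|} \to \RR^{|A|}$ since each $h_j$ is a diffeomorphism of $\RR$. For block affine-identifiability implying block-identifiability, I would simply observe that $\zb_A \mapsto D \zb_A + \bb$ with $D$ invertible is itself an affine diffeomorphism, so \eqref{eq:block_affineID} is a special instance of the block-identifiability condition in \cref{defn:blockID} with $\hat{A} = \pi(A)$ and $h$ affine.

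There is no real obstacle here beyond bookkeeping: the only thing to be slightly careful about is aligning the permutation convention in \eqref{eq:crlID} and \eqref{eq:affineID} (so that $h$ is applied before or after the permutation consistently) and the fact that \cref{defn:block_affineID} only asserts the block-identifiability pattern for some particular subset $A$, so in the affine-to-block-affine step one should note that the implication actually holds for every $A$, of which the statement of \cref{defn:block_affineID} is a direct instance. The diagram in \cref{fig:id_granularity} can then serve as a summary.
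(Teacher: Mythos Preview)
Your proposal is correct and follows essentially the same approach as the paper's proof, which also treats each implication by direct inspection: the diagonal $\Lambda$ gives coordinate-wise affine diffeomorphisms (affine $\Rightarrow$ element), the diagonal matrix is a special case of the invertible $D$ in \eqref{eq:block_affineID} (affine $\Rightarrow$ block affine), an invertible affine map is a diffeomorphism (block affine $\Rightarrow$ block), and element-identifiability is block-identifiability with singleton blocks (element $\Rightarrow$ block). Your version is simply more careful with the permutation bookkeeping than the paper's terse argument.
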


\begin{restatable}[Transition between identification levels]{proposition}{transID}
\label{prop:transition_identification}
The transition between different levels of latent variable identification~\pcref{fig:id_granularity} can be summarized as follows:
\begin{enumerate}[(i),topsep=0em,itemsep=0em]
    \item Element- identifiability~\pcref{defn:crlID,defn:affineID} can be obtained from block-wise identifiability~\pcref{defn:block_affineID,defn:blockID} when each individual latent constitutes an invariant block;
    \item Identifiability up to an affine transformation~\pcref{defn:block_affineID,defn:affineID} can be obtained from general identifiability on arbitrary diffeomorphism~\pcref{defn:crlID,defn:blockID} by additionally assuming that both the ground truth mixing function and decoder are finite degree polynomials of the same degree.
\end{enumerate}
\end{restatable}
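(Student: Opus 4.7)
The plan is to decompose the proposition into four implications. Part (i) comprises (a) block-identifiability with singleton blocks yielding element-identifiability and (b) block affine-identifiability with singleton blocks yielding affine-identifiability. Part (ii) comprises (c) block-identifiability upgraded to block affine-identifiability and (d) element-identifiability upgraded to affine-identifiability, both under the assumption that the ground-truth mixing $f$ and the learned decoder $\hat{f}$ are polynomial maps of the same finite degree $d$.

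Both sub-claims of part (i) follow directly from unpacking the definitions. When every invariant block is a singleton $A_j = \{j\}$, \cref{defn:blockID} supplies a univariate diffeomorphism $h_j : \RR \to \RR$ with $[\hat{\zb}]_{\hat{A}_j} = h_j(\zb_j)$ for each $j \in [N]$. Assembling the component-wise functions into $h(\zb) := (h_1(\zb_1), \dots, h_N(\zb_N))$ and collecting the bijection $\{A_j\} \leftrightarrow \{\hat{A}_j\}$ into a permutation matrix $\Pb_{\pi}$ yields the form $\hat{\zb} = \Pb_{\pi} \cdot h(\zb)$ of \cref{defn:crlID}. Applying the same reduction to \cref{defn:block_affineID} collapses the invertible $1\times 1$ blocks $D_j \in \RR \setminus \{0\}$ and scalar shifts $b_j$ into a diagonal matrix $\Lambda$ with nonzero entries and a bias $\bb$, producing \cref{defn:affineID}.

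For part (ii), the key identity is $f(\zb) = \hat{f}(h(\zb))$, where $h$ is the (possibly block-wise) diffeomorphism furnished by \cref{defn:blockID} or \cref{defn:crlID}, and $f, \hat{f}$ are polynomials of the same finite degree $d$. The goal is to show that $h$ must itself be affine (resp.\ block-affine). Restricting to an arbitrary affine line $\zb(t) := \zb_0 + t v$ with $v \in \RR^N$, the left-hand side $f(\zb(t))$ is a polynomial in $t$ of degree at most $d$, while $\hat{f}(h(\zb(t)))$ is the composition of a degree-$d$ polynomial with the smooth curve $t \mapsto h(\zb(t))$. Expanding via Fa\`a di Bruno, the coefficient of $t^{d+1}$ on the right-hand side contains a distinguished term proportional to $\partial^{d} \hat{f}(h(\zb_0))\,\bigl[h'(0)\bigr]^{\otimes (d-1)}\,h''(0)$ plus strictly lower-order expressions in the derivatives of $h$. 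Because $\hat{f}$ has degree exactly $d$, its top-order derivative is a nonzero constant tensor, and because $h$ is a diffeomorphism, $h'(0)$ is nonsingular; matching coefficients and proceeding inductively on derivative order forces $h''(0) \equiv 0$ and then all higher derivatives of $h$ to vanish identically on $\RR^N$, so $h$ is affine. Substituting back upgrades \cref{defn:crlID} to \cref{defn:affineID} and \cref{defn:blockID} to \cref{defn:block_affineID}.

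The main obstacle is the Fa\`a di Bruno bookkeeping: the vanishing condition at order $d+1$ mixes derivatives of $\hat{f}$ of all orders up to $d$, evaluated at $h(\zb_0)$, with various products of derivatives of $h$, and one must rule out any algebraic cancellation that could hide a nontrivial $h''$. The combined nondegeneracy of $\hat{f}$'s leading tensor and the full-rank Jacobian of $h$ plays the decisive role. The cleanest way to organize the argument is induction on derivative order: at stage $k$, the previously established vanishing of $h''(0), \dots, h^{(k-1)}(0)$ eliminates all cross terms and isolates the top tensor of $\hat{f}$ multiplied by $h^{(k)}(0)$, forcing the latter to vanish.
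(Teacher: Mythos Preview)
Your treatment of part (i) is correct and matches the paper's approach; the paper simply calls it ``trivial'' and you have spelled out the definitions, which is fine.

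For part (ii), however, your Fa\`a di Bruno induction has a genuine gap at every stage, including the base case. Write $\gamma(t):=h(\zb_0+tv)$ and set $c_j:=\gamma^{(j)}(0)$. The vanishing of the $(d+1)$-th derivative of $\hat f\circ\gamma$ reads
\[
\sum_{m=1}^{d} D^{m}\hat f(\gamma(0))\cdot B_{d+1,m}\bigl(c_1,c_2,\dots,c_{d+2-m}\bigr)=0,
\]
where $B_{n,m}$ are the partial Bell polynomials. The $m=d$ summand is indeed $\binom{d+1}{2}\,D^{d}\hat f\cdot c_1^{\otimes(d-1)}\otimes c_2$, but the $m=1$ summand is $D\hat f(\gamma(0))\cdot c_{d+1}$, and the intermediate summands involve $c_3,\dots,c_d$. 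None of $c_3,\dots,c_{d+1}$ are known to vanish at the base case, so you cannot conclude $c_2=0$. The same obstruction persists at stage $k$: even granting $c_2=\dots=c_{k-1}=0$ and looking at order $d+k-1$, the $m<d$ terms produce contributions from $c_{k+1},\dots,c_{d+k-1}$ (for instance, $m=1$ contributes $D\hat f\cdot c_{d+k-1}$), so $c_k$ is never isolated. The phrase ``strictly lower-order expressions in the derivatives of $h$'' is therefore incorrect: the remaining terms involve \emph{higher}, not lower, derivatives of $h$.

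The paper itself does not attempt a direct argument here; it defers to \citet[Thm.~4.4]{ahuja2023interventional} and \citet[Lem.~1]{zhang2024identifiability}, whose proofs exploit additional structure (in particular the specific form of the finite-degree polynomial decoder and the reconstruction identity) rather than a bare Fa\`a di Bruno count. If you want a self-contained argument, you will need either to import those techniques or to find a different way to control the entire tower $\{c_j\}_{j\ge 2}$ simultaneously; the one-derivative-at-a-time induction you propose does not close.
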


\begin{empheqboxed}
\looseness=-1 \textbf{Discussion.} 
We note that the granularity of identifiability results is primarily determined by the strength of invariance and parametric assumptions (such as those on mixing functions or causal models) rather than by the specific algorithmic choice. 
For example, for settings that can achieve \crlID-identifiability~\citep{von2024nonparametric}, affine-identifiability results can be obtained by additionally assuming \emph{finite degree polynomial} mixing function (proof see~\cref{app:proof_granularity}). 
Similarly, \crlID-identifiability can be achieved from block-identifiability by enforcing invariance properties on each latent component~\citep[Thm.~3.8]{yao2023multi} instead of having only \emph{one} multivariate invariant block~\citep{von2021self}.
In summary, existing CRL algorithms are capable of achieving different identifiability definitions depending on the additional (e.g., parametric) assumptions without requiring separate proofs for each case.
~\cref{tab:related_work} provides an overview of recent identifiability results along with their corresponding invariance and parametric assumptions, illustrating the direct relationship between these assumptions and the level of identifiability they achieve.
\end{empheqboxed}

\subsection{Identifying the causal graph}
\label{app:causal_graph_identification}
\looseness=-1 In addition to latent variable identification, another goal of CRL is to infer the underlying latent dependency, namely the causal graph structure. 
Revisiting the literature on causal graph identification highlights a key distinction: While graph discovery often depends on causal assumptions like interventions or graphical constraints, identifying causal variables can proceed by leveraging only the invariance relations without requiring these additional assumptions, e.g., distributional invariance that does not necessarily arise from valid interventions.
We begin with restating the standard definition of graph identifiability in CRL.

\begin{definition}[Graph-identfiability]
\label{defn:graphID}
    The estimated graph $\hat \Gcal$ is isomorphic to the ground truth graph $\Gcal$ through a bijection $h : V(\Gcal) \to V(\hat{\Gcal})$ in the sense that two vertices $\zb_i, \zb_j \in V(\Gcal)$ are adjacent in $\Gcal$ if and only if $h(\zb_i), h(\zb_j) \in V(\hat{\Gcal})$ are adjacent in $\hat{\Gcal}$.
\end{definition}

We remark that the ``faithfulness" assumption~\citep[Defn.~2.4.1]{pearl2009causality} is a standard assumption in the CRL literature, commonly required for graph discovery. We restate it as follows:
\begin{assumption}[Faithfulness (or Stability)]
    $P_{\zb}$ is a faithful distribution induced by the latent SCM~\pcref{defn:latent_scm} in the sense that $P_{\zb}$ contains no extraneous conditional independence; in other words, the only conditional independence relations satisfied by $P_{\zb}$ are those given by $\{\zb_i \perp \nd{\zb}{i} ~|~ \pa{\zb}{i}\}$ where $\nd{\zb}{i}$ denotes the non-descends of $\zb_i$.
\end{assumption}

As indicated by~\cref{defn:graphID}, the preliminary condition of identifying the causal graph is to have an element-wise correspondence between the vertices in the ground truth graph $\Gcal$ (i.e., the ground truth latents) and the vertices of the estimated graph. 
Therefore, the following assumes that the learned encoders $G$~\pcref{defn:encoders} achieve element-identifiability~\pcref{defn:crlID}, that is, for each $\zb_i \in \zb$, we have a diffeomorphism $h_i: \RR \to \RR$ such that $\hat{\zb}_i = h_i(\zb_i)$. 
However, additional assumptions are needed to identify the graph structure: either on the source of invariance or on the parametric form of the latent causal model.

\textbf{Graph identification via interventions.}
Under the element-identifiability~\pcref{defn:crlID} of the latent variables $\zb$, the causal graph structure $\Gcal$ can be identified up to its isomorphism~\pcref{defn:graphID}, given multi-environment data from \emph{paired perfect} interventions per-node ~\citep{von2024nonparametric,varici2024general}. 
Using data generated from \emph{imperfect} interventions is generally insufficient to identify the direct edges in the causal graph. It can only identify the ancestral relations, i.e., up to the transitive closure of $\Gcal$~\citep{brehmer2022weakly,zhang2024identifiability}. 
Unfortunately, even imposing the linear assumption on the latent SCM does not provide a solution~\citep{squires2023linear}. 
Nevertheless, by adding sparsity assumptions on the causal graph $\Gcal$ and polynomial assumption on the mixing function $f$, ~\citet{zhang2024identifiability} has shown isomorphic graph identifiability~\pcref{defn:graphID} under \emph{imperfect} intervention per node. 
In general, access to the interventions is necessary for graph identification if alternative parametric assumptions are not imposed. Conveniently, in this setting, the graph identifiability is linked with that of the variables since the latter leverages the invariance induced by the intervention.

\textbf{Graph identification via parametric assumptions.}
In this work, we focus exclusively on the post‐nonlinear additive noise model~\citep[Sec.~2]{zhang2010distinguishing} because it provides a sufficiently general framework that subsumes other parametric instances (such as the standard additive noise and the location-scale models) while offering greater flexibility in modeling complex causal mechanisms.

\begin{definition}[Post-nonlinear acyclic causal model]
\label{defn:post-nonlinear-model}
    The following causal mechanism describes a post-nonlinear acyclic causal model:
    \begin{equation}
        \zb_i = \gamma_i(m_i(\pa{\zb}{i}) + \ub_i),
    \end{equation}
    where $\gamma_i: \RR \to \RR$ is a diffeomorphism and $m_i$ is a non-constant causal mechanism, and \(\ub_i\) is an exogenous noise term.
\end{definition}
Note that this model reduces to the standard additive noise model~\citep{hoyer2008nonlinear} with $\gamma_i = \operatorname{id}$ and to the location-scale form when $\gamma_i$ is affine (i.e., \(\gamma_i(x) = \Lambda_i x + \beta_i\) with an invertible matrix \(\Lambda_i\) and bias \(\beta_i\)).

We assume that the ground truth latent SCM~\pcref{defn:latent_scm} is a post-nonlinear acyclic causal model as specified in~\cref{defn:post-nonlinear-model}. Given that each latent variable \(\zb_i\) is element-wise identified via a diffeomorphism \(h_i: \RR \to \RR\) for all \(i \in [N]\), we define the estimated causal parents as
\[
\hat{\zb}_{\text{pa}(i)} := \{\hat{\zb}_j: \zb_j \in \pa{\zb}{i}\}.
\]
It then follows that the learned representations \(\hat{\zb}_i\) also obey a post‐nonlinear acyclic model:
\begin{equation}
\begin{aligned}
     \hat{\zb}_i 
    &= h_i(\zb_i) 
    = h_i\Bigl(\gamma_i\bigl(m_i(\pa{\zb}{i}) + \ub_i\bigr)\Bigr)\\[1mm]
    &= h_i\Bigl(\gamma_i\Bigl(m_i\bigl(\{h_j^{-1}(\hat{\zb}_j): \zb_j \in \pa{\zb}{i}\bigr) + \ub_i\Bigr)\Bigr)\\[1mm]
    &= h_i\Bigl(\gamma_i\bigl(\tilde{m}_i(\pa{\hat{\zb}}{i}) + \ub_i\bigr)\Bigr)\\[1mm]
    &= \bigl(h_i \circ \gamma_i\bigr)\Bigl(\tilde{m}_i(\pa{\hat{\zb}}{i}) + \ub_i\Bigr),
\end{aligned}
\end{equation}
where we define 
\[
\tilde{m}_i(\pa{\hat{\zb}}{i}) := m_i\Bigl(\{h_j^{-1}(\hat{\zb}_j): \zb_j \in \pa{\zb}{i}\Bigr).
\]
Since the composition \(h_i \circ \gamma_i\) is a diffeomorphism, the conditional dependencies among the latents are preserved. Thus, following the approach in~\citet[Sec. 4]{zhang2009identifiability}, the underlying causal graph \(\Gcal\) can be identified up to an isomorphism as defined in~\cref{defn:graphID}.

\looseness=-1 \textbf{What happens if variables are identified in blocks?} Consider the case where the latent variables cannot be identified up to element-wise diffeomorphism; instead, one can only obtain a coarse-grained version of the variables (e.g., as a mixing of a block of variables~\pcref{defn:blockID}). Nevertheless, certain causal links between these coarse-grained block variables are of interest. These block variables and their causal relations in between form a ``macro" level of the original latent SCM, which is shown to be causally consistent under mild structural assumptions~\citep[Thm.~11]{rubenstein2017causal}. In particular, the macro-level model can be obtained from the micro-level model through an \emph{exact transformation}~\citep[Defn.~3.4]{beckers2019abstracting} and thus produces the same causal effect as the original micro-level model under the same type of interventions, providing useful knowledge for downstream causal analysis. More formal connections are beyond the scope of this paper. Still, we see this concept of coarse-grained identification on both causal variables and graphs as an interesting avenue for future research. 

\section{Related Works}
\label{app:related_works}
\looseness=-1 This section reviews related CRL and domain generalization works and frames them as specific instances of our theory~\pcref{sec:identifiability}. 
These CRL works were initially categorized into various types (multiview, interventional, multi-task, and temporal CRL) based on the level of invariance in the data-generating process, leading to varying degrees of identifiability results~\pcref{app:granularity_id}.
While the implementation of individual works may vary, the 
\emph{methodological principle of aligning representation with known data symmetries} remains consistent, as shown in~\cref{sec:identifiability}. 
We begin with revisiting the data-generating process of each category and explain how they can be viewed as specific cases of the proposed invariance framework~\pcref{sec:dgp}. 
We then present individual identification algorithms from existing literature as particular applications of our theorems based on the implementation choices needed to satisfy the invariance and sufficiency constraints~\pcref{cons:invariance,cons:sufficiency}. A more detailed overview of the individual works is provided in~\cref{tab:related_work}.

\subsection{Multiview CRL}
\textbf{High-level overview.}
The multiview setting in CRL~\citep{daunhawer2023identifiability,yao2023multi} considers multiple observables that are \emph{concurrently} generated by an overlapping subset of latent variables. Multiview scenarios are often found in a partially observable setup. 
For example, multiple devices on a robot measure different modalities, jointly monitoring the environment through these real-time measurements. While each device measures a distinct subset of latent variables, these subsets probably still overlap as they are measuring the same system at the same time.
In addition to partial observability, another way to obtain multiple views is to perform an ``intervention/perturbation"~\citep{locatello2020weakly,von2021self,ahuja2022weakly,brehmer2022weakly} and collect both pre-action and post-action views on the same sample. This setting is often improperly termed ``counterfactual"\footnote{Traditionally, counterfactual in causality refers to non-observable outcomes that are ``counter to the fact''~\citep{rubin2005causal}. The works we refer to here represent pre- and post-actions that affect some latent variables but not all. This can be mathematically expressed as a counterfactual in an SCM but is conceptually different as both pre- and post-action outcomes are realized~\citep{liu2023causal}. The ``counterfactual'' terminology silently implies that this is a strong assumption, but nuance is needed and it can in fact be much weaker than an intervention.} in the CRL literature, and this type of data is termed ``paired data". From another perspective, the paired setting can be cast in the partial observability scenario by considering the same latent before and after an action (mathematically modeled as an intervention) as two separate latent nodes in the causal graph, as shown by~\citet[Fig.~1]{von2021self}. Thus, both pre-action and post-action views are partial because neither of them can observe pre-action and post-action latents simultaneously. These works assume the latents that are not affected by the action remain constant, an assumption that is relaxed in temporal CRL works. See~\cref {subsec:temporal_CRL} for more discussion on temporal CRL.

\textbf{Data generating process.}
In the following, we introduce the data-generating process of a multiview setting in the flavor of the invariance principle as introduced in~\cref{sec:dgp}.
We consider a set of views $\{\xb^k\}_{k\in [K]}$ with each view $\xb^k \in \Xcal^k$ generated from some latents $\zb^k \in \Zcal^k$. Let $S_k \subseteq [N]$ be the index set of generating factors for the view $\xb^k$, we define $\zb^k_j = 0$ for all $j \in [N]\setminus S_k$ to represent the uninvolved partition of latents.
Each entangled view $\xb^k$ is generated by a view-specific mixing function $f_k: \Zcal^k \to \Xcal^k$:
\begin{equation}
    \xb^k = f_k(\zb^k) \quad \forall k \in [K]
\end{equation}
Define the joint overlapping index set $A:= \bigcap_{k \in [K]} S_k$, and assume $A \subseteq [N]$ is a non-empty subset of $[N]$.
Then the value of the sharing partition $\zb_A$ remain invariant for all observables $\{\xb^k\}_{k \in [K]}$ on a \textbf{\emph{sample level}}. 
By considering the joint intersection $A$, we have \emph{one single} invariance property $\iota: \RR^{|A|} \to \RR^{|A|}$ in the invariance set $\iotaset$; and this invariance property $\iota$ emerges as the identity map $\operatorname{id}$ on $\RR^{|A|}$ in the sense that $\operatorname{id}(\zb^k_A) = \operatorname{id}(\zb^{k^\prime}_A)$ and thus $\zb^k_A \equivIotaA \zb^{k^\prime}_A$ for all $k, k^\prime \in [K]$. Note that~\cref{defn:invariance_projector} (ii) is satisfied because any transformation $h_k$ that involves other components $\zb_q$ with $q \notin A$ violates the equality introduced by the identity map.
For a subset of observations $V_i \subseteq [K]$ with at least two elements $|V_i| > 1$, we define the latent intersection as $A_i := \bigcap_{k \in V_i} S_k \subseteq [N]$, then for each non-empty intersection $A_i$, there is a corresponding invariance property $\iota_i: \RR^{|A_i|} \to \RR^{|A_i|}$ which is the identity map specified on the subspace $\RR^{|A_i|}$. By considering all these subsets $\Vcal := \{V_i \subseteq [K]: |V_i| > 1, |A_i| > 0\}$, we obtain a set of invariance properties $\iotaset := \{\iota_i: \RR^{|A_i|} \to \RR^{|A_i|}\}$ that satisfy~\cref{assmp:iota_observation_relation}.

\textbf{Identification algorithms.}
Many multiview works~\citep{von2021self,daunhawer2023identifiability,yao2023multi} employ the $L_2$ loss as a regularizer to enforce \textbf{\emph{sample-level}} invariance on the invariant partition, cooperated with some sufficiency regularizer to preserve sufficient information about the observables~\pcref{cons:sufficiency}. 
Aligned with our theory~\pcref{thm:ID_from_multi_sets,thm:ID_from_multi_sets}, these works have shown block-identifiability on the invariant partition of the latents across different views.
Following the same principle, there are certain variations in the implementations to enforce the invariance principle, e.g.~\citet{locatello2020weakly} directly average the learned representations from paired data $g(\xb^{1}), g(\xb^{2})$ on the shared coordinates before forwarding them to the decoder; ~\citet{ahuja2022weakly} enforces $L_2$ alignment up to a learnable sparse perturbation $\delta$. As each latent component constitutes a single invariant block in the training data, these two works \emph{element-identifies}~\pcref{defn:crlID} the latent variables, as explained by~\cref{prop:transition_identification}.

\subsection{Interventional CRL} 
\label{app:interventional_CRL}
\textbf{High-level overview.}
Interventional/Multi-environment CRL considers data generated from multiple environments with different data distributions.
In the scope of CRL, 
multi-environment data
is often instantiated through interventions on the latent structured causal model~\citep{von2021self,zhang2024identifiability,buchholz2023learning,squires2023linear,varici2023score,varici2024linear,varici2024general}. Recently, ~\citet{ahuja2024multi} provides a more general identifiability statement where multi-environment data is not necessarily originated from interventions; instead, they can be individual data distributions that preserve certain symmetries such as marginal invariance or support invariance.

\textbf{Data generating process}
The following presents the data generating process described in most interventional CRL works.
Formally, we consider a set of observables $\{P_{\xb^k}\}_{k \in [K]}$ that are collected from multiple environments (indexed by $k \in [K]$) with a shared latent SCM~\pcref{defn:latent_scm} and a shared mixing function $f: \xb^k = f(\zb^k)$~\pcref{defn:mixing_fn} satisfying~\cref{assmp:diffeomorphism}.
Let $k=0$ denote the non-intervened environment and $\Ical_k \subseteq [N]$ denotes the set of intervened nodes in $k$-th environment, the latent distribution $P_{\zb^k}$ is associated with the density 
\begin{equation}
    p_{\zb^k}(z^k) = \prod_{j \in \Ical_k} \tilde{p}(z_j^k ~|~ \pa{z}{j}^k) \prod_{j \in [N] \setminus \Ical_k} p(z_j^k ~|~ \pa{z}{j}^k),
\end{equation}
where we denote by $p$ the original density and by $\tilde{p}$ the intervened density. 
Interventions naturally introduce various distributional invariances that can be utilized for latent variable identification: Under the intervention $\Ical_k$ in the $k$-th environment, we observe that both (1) the marginal distribution of $\zb_A$ with $A:=[N] \setminus \operatorname{TC}(\Ical_k)$, with $\operatorname{TC}$ denoting the transitive closure and (2) the score $[S(\zb^k)]_{A^\prime} := \nabla_{\zb^k_{A^\prime}} \log p_{\zb^k}$ on the subset of latent components $A^\prime := [N]\setminus \overline{\text{pa}}(\Ical_k)$ with $\overline{\text{pa}}(\Ical_k):=\{j: j \in \Ical_k \cup \text{pa}(\Ical_k)\}$ remain {invariant} across the observational and the $k$-th interventional environment. Formally, under intervention $\Ical_k$, we have
\begin{itemize}[leftmargin=*]
    \item {\bf \emph{Marginal invariance}}: 
    \begin{equation}
    \label{eq:marginal_invariance}
        p_{\zb^0}(z_{A}^0) = p_{\zb^k}(z_{A}^k) \qquad A:=[N] \setminus \operatorname{TC}(\Ical_k);
    \end{equation}
    \item {\bf \emph{Score invariance}}:
    \begin{equation}
    \label{eq:score_invariance}
        [S(\zb^0)]_{A^\prime} = [S(\zb^k)]_{A^\prime} \qquad A^\prime := [N]\setminus \overline{\text{pa}}(\Ical_k).
    \end{equation}
\end{itemize}

According to our theory~\cref{thm:ID_from_multi_sets}, we can block-identify both $\zb_A, \zb_{A^\prime}$ using these invariance principles~\pcref{eq:marginal_invariance,eq:score_invariance}. 
Since most interventional CRL works assume at least one intervention per node~\citep{squires2023linear,zhang2024identifiability,von2024nonparametric,varici2024general,varici2023score,buchholz2023learning,ahuja2023interventional}, more fine-grained variable identification results, such as element-wise identification~\pcref{defn:crlID} or affine-identification~\pcref{defn:affineID}, can be achieved by combining multiple invariances from these per-node interventions, as we elaborate below.

\textbf{Identifiability with one intervention per node.}
By invoking~\cref{thm:ID_from_multi_sets}, we establish that, in nonparametric settings, latent causal variables \(\zb\) can be identified up to an element-wise diffeomorphism (see~\cref{defn:crlID}) using single-node \emph{imperfect} interventions for each node. This result addresses an open conjecture posed by~\citet{von2024nonparametric}. We assume:
\begin{assumption}[Topologically ordered interventional targets]
\label{assmp:topological_ordering}
    Specifying~\cref{assmp:iota_observation_relation} in the interventional setting, we assume there are exactly $N$ environments
    $\{k_1, \dots, k_N\} \subseteq [K]$ where each node $j \in [N]$ undergoes one imperfect intervention in the environment $k_j \in [K]$. The interventional targets $1\preceq \dots \preceq N$ preserve the topological order, meaning that $i \preceq j$ only if there is a directed path from node $i$ to node $j$ in the underlying causal graph $\Gcal$.
\end{assumption}
\textbf{Remark:}~\cref{assmp:topological_ordering} is directly implied by~\cref{assmp:iota_observation_relation} as we need to know which environments fall into the same equivalence class. 
We believe that identifying the topological order is another subproblem orthogonal to identifying the latent variables, which is often termed ``uncoupled/non-aligned problem"~\citep{varici2024general,von2024nonparametric}. 
As described by~\citet{zhang2024identifiability}, the topological order of unknown interventional targets can be recovered from single-node imperfect intervention by iteratively identifying the interventions that target the source nodes. 
This iterative identification process may require additional assumptions on the mixing functions~\citep{zhang2024identifiability,ahuja2023interventional,varici2023score,varici2024linear,squires2023linear} and the latent structured causal model~\citep{buchholz2023learning,squires2023linear}, or on the interventions, such \emph{paired perfect} interventions per node~\citep{von2024nonparametric,varici2024general}.

\begin{restatable}[Identifiability from single node imperfect intervention per node]{corollary}{singleNodeID}
    \label{cor:single_node_id}
     Given $N$ environments $\{k_1, \dots, k_N\} \subseteq [K]$ satisfying~\cref{assmp:topological_ordering}, the ground truth latent variables $\zb$ can be identified up to element-wise diffeomorphism~\pcref{defn:crlID} by combining both marginal and score invariances~\pcref{eq:marginal_invariance,eq:score_invariance} under our framework~\pcref{thm:ID_from_multi_sets}.
\end{restatable}

The proof for~\cref{cor:single_node_id} is included in~\cref{app:proof_single_node_id}. Upon element-wise identification from single-node intervention per node, existing works often provide more fine-grained identifiability results by incorporating other parametric assumptions on the mixing functions~\citep{varici2023score,ahuja2023interventional,zhang2024identifiability,squires2023linear}. 
This perspective is elaborated in~\cref{prop:transition_identification}, as element-wise identification can be refined to affine-identification~\pcref{defn:affineID} given additional parametric assumptions on the mixing functions. 
However, note that under the milder setting of \emph{imperfect} intervention per node, the full graph is not identifiable without further assumptions. See~\citep{zhang2024identifiability} for more details.

\textbf{Identifiability with two interventions per node}
Current literature in interventional CRL targeting 
the general nonparametric setting~\citep{varici2024general,von2024nonparametric} typically assumed a pair of \emph{sufficiently different} perfect interventions per node.
Thus, any latent variable $\zb_j, j \in [N]$, as an interventional target, is uniquely shared by a pair of interventional environment $k, k^\prime \in [K]$, forming an invariant partition $A_i = \{j\}$ constituting of individual latent node $j \in [N]$. Formally, we write
\begin{equation}
    \Ical_k = \Ical_{k^\prime} = A_i = \{j\}
\end{equation}
where $\Ical_k$ represent the interventional target for the $k$-th environment.
Note that this invariance property implies the following distributional property:
\begin{equation}
    [S(\zb^k) - S(\zb^{k^\prime})]_j \neq 0 \qquad \text{ only if } \qquad \Ical_k = \Ical_{k^\prime} = \{j\}.
\end{equation}
According to~\cref{thm:ID_from_multi_sets}, each latent variable can thus be identified separately, giving rise to element-wise identification, as shown by~\citep{varici2024general,von2024nonparametric}. 

\textbf{Identifiability under multiple distributions.}
More recently, \citet{ahuja2024multi} explains previous interventional identifiability results from a general weak distributional invariance perspective. 
In a nutshell, a set of variables $\zb_A$ can be block-identified if certain invariant distributional properties hold: The invariant partition $\zb_A$ can be block-identified~\pcref{defn:blockID} from the rest by utilizing the \emph{marginal distributional invariance} or \emph{invariance on the support, mean or variance}.
\citet{ahuja2024multi} additionally assume the mixing function to be finite degree polynomial, which leads to block-affine identification~\pcref{defn:block_affineID}, whereas we can also consider a general nonparametric setting;
they consider \emph{one} single invariance set, which is a special case of~\cref{thm:ID_from_multi_sets} with one joint $\iota$-property.

\looseness=-1 \textbf{Identification algorithms.}
Instead of iteratively enforcing the invariance constraint across the majority of environments as described in~\cref{cor:single_node_id}, 
most single-node interventional works develop equivalent constraints between pairs of environments to optimize. 
For example, the marginal invariance~\pcref{eq:marginal_invariance} implies the marginal of the source node is changed \emph{only if} it is intervened upon, which is utilized by~\citet{zhang2024identifiability} to identify latent variables and the ancestral relations simultaneously. 
In practice,~\citet{zhang2024identifiability} propose a regularized loss that includes Maximum Mean Discrepancy(MMD) between the reconstructed "counterfactual" data distribution and the interventional distribution, enforcing the distributional discrepancy that reveals graphical structure (e.g., detecting the source node). 
Similarly, by enforcing sparsity on the score change matrix, ~\citet{varici2023score} restricts only score changes from the intervened node and its parents. 
In the nonparametric case,~\citet{von2024nonparametric} optimize for the invariant (aligned) interventional targets through model selection, whereas~\citet{varici2024general} directly solve the constrained optimization problem formulated using score differences. 
Considering a more general setup,~\citet{ahuja2024multi} provides various invariance-based regularizers as plug-and-play components for any losses that enforce a sufficient representation~\pcref{cons:sufficiency}.

\subsection{Temporal CRL}
\label{subsec:temporal_CRL}
\textbf{High-level overview.} Temporal CRL~\citep{lippe2022causal,lippe2023biscuit,lippe2022citris,yao2022temporally,yao2022learningtemporallycausallatent,lachapelle2022disentanglement,lachapelle2024disentanglement,li2024and,li2024identification} focuses on retrieving latent causal structures from time series data, where the latent causal structure is typically modeled as a Dynamic Bayesian Network (DBN)~\citep{dean1989dbns,murphy2002dbns}.
Existing temporal CRL literature has developed identifiability results under varying sets of assumptions. A common overarching assumption is to require the Dynamic Bayesian Network to be first-order Markovian, allowing only causal links from $t-1$ to $t$, eliminating longer dependencies~\citep{lippe2022citris,lippe2023biscuit,lippe2022causal,yao2022learningtemporallycausallatent}. While many works assume that there is no instantaneous effect, restricting the latent components of $\zb^t$ to be mutually dependent~\citep{lippe2022citris,yao2022learningtemporallycausallatent,lippe2023biscuit}, some approaches have lifted this assumption and prove identifiability allowing for instantaneous links among the latent components at the same timestep (\citet{lippe2022causal}).

\looseness=-1 \textbf{Data generating process.} We present the data generating process followed by most temporal causal representation works and explain the underlying latent invariance and data symmetries. Let $\zb^t \in \RR^N$ denotes the latent vector at time $t$ and $\xb^t = f(\zb^t) \in \RR^D$ the corresponding entangled observable with $f: \RR^N \to \RR^D$ the shared mixing function~\pcref{defn:mixing_fn} satisfying~\cref{assmp:diffeomorphism}. The actions $\ab^t$ with cardinality $|\ab^t| = N$ mostly only target a subset of latent variables while keeping the rest untouched, following its default dynamics ~\citep{lippe2022citris,lippe2023biscuit,lachapelle2022disentanglement,lachapelle2024disentanglement}. 
Intuitively, these actions $\ab^t$ can be interpreted as a component-wise indicator for each latent variable $\zb^t_j, j \in [N]$ stating whether $\zb_j^{t}$ follows the default dynamics $p(\zb^t_j ~|~ \zb^{t-1})$ or the modified dynamics induced by the action $\ab_j^t$. From this perspective, the non-intervened causal variables at time $t$ can be considered the invariant partition under our formulation, denoted by $\zb^t_{A_t}$ with the index set $A_t$ defined as $A_t:= \{j: \ab_j = 0\}$. Note that this invariance can be considered as a generalization of the multiview case because the realizations $z_j^t, z_j^{t-1}$ are not exactly identical (as in the multiview case) but are related via a default transition mechanism $p(\zb^t_j ~|~ \zb^{t-1})$. To formalize this intuition, we define $\tilde{\zb}^t := \zb^t~|~\ab^t$ as the conditional random vector conditioning on the action $\ab^t$ at time $t$. For the non-intervened partition $A_t \subseteq [N]$ that follows the default dynamics, the transition model should be invariant:
\begin{equation}
\label{eq:temp_invariance}
    p(\zb^{t}_{A_t}~|~\zb^{t-1}) =  p(\tilde{\zb}^{t}_{A_t}~|~\zb^{t-1}),
\end{equation}
which gives rise to a non-trivial distributional invariance property~\pcref{defn:invariance_projector}. Note that the invariance partition $A_t$ could vary across different time steps, providing a set of invariance properties $\iotaset :=\{\iota_t: \RR^{|A_t|} \to \Mcal_t\}_{t=1}^T$, indexed by time $t$. 
Given by~\cref{thm:ID_from_multi_sets}, all invariant partitions $\zb^t_{A_t}$ can be block-identified; 
furthermore, as shown in~\cref{prop:id-variant-indep}, the complementary variant partition can also be identified under an invertible encoder and mutual independence within $\zb^t$ (here conditioning on the previous time step $\zb^{t-1}$). This result aligns with the identification results without instantaneous effect, i.e. there is no causal link between variables at the same time step~\citep{lippe2022citris,yao2022learningtemporallycausallatent,lachapelle2022disentanglement,lachapelle2024disentanglement}. 
On the other hand, temporal causal variables with instantaneous effects are shown to be identifiable \emph{only if} ``instantaneous parents'' (i.e., nodes affecting other nodes instantaneously) are cut by actions~\citep{lippe2022causal}, reducing to the setting without instantaneous effect where the latent components at $t$ are mutually independent. Upon invariance, more fine-grained latent variable identification results, such as element-wise identifiability, can be obtained by incorporating additional technical assumptions, such as the sparse mechanism shift~\citep{lachapelle2022disentanglement,lachapelle2024disentanglement,li2024identification} and parametric latent causal model~\citep{yao2022learningtemporallycausallatent,klindt2021towards,khemakhem2020ice}.

\looseness=-1 \textbf{Identification algorithms.} 
From a high level, the distributional invariance~\pcref{eq:temp_invariance} indicates full explainability and predictability of $\zb_{A_t}^t$ from its previous time step $\zb^{t-1}$, regardless of the action $\ab^t$. In principle, this invariance principle can be enforced by directly maximizing the information content of the proposed default transition density between the learned representation $p(\hat{\zb}^t_{A_t} ~|~ \hat{\zb}^{t-1})$~\citep{lippe2022causal,lippe2022citris}. In practice, the invariance regularization is often incorporated together with the predictability of the variant partition conditioning on actions, implemented as a KL divergence between the observational posterior $q(\hat{\zb}^t~|~\xb^t)$ and the transitional prior $p(\hat{\zb}^t~|~\hat{\zb}^{t-1}, \ab^t)$~\citep{lachapelle2022disentanglement,lachapelle2024disentanglement,klindt2021towards,yao2022temporally,yao2022learningtemporallycausallatent,lippe2023biscuit}, estimated using variational Bayes~\citep{kingma2022autoencoding} or normalizing flow~\citep{rezende2016normalizingflows}.


\subsection{Multi-task CRL}
\label{app:multi_task_CRL}
\looseness=-1 \textbf{High-level overview.}
Multi-task CRL aims to identify latent causal variables via external supervision, in this case, the label information of the same instance for various tasks. Previously, multi-task learning~\citep{caruana1997multitask,zhang2018overview} has been mostly studied outside the scope of identifiability, mainly focusing on domain adaptation and out-of-distribution generalization. One of the popular ideas that was extensively used in the context of multi-task learning is to leverage interactions between different tasks to construct a generalist model that is capable of solving all classification tasks and potentially better generalizes to unseen tasks~\citep{zhu2022uni,bai2022ofasys}. Recently,~\citet{lachapelle2022synergies,fumero2024leveraging} systematically studied under which conditions the latent variables can be identified in the multi-task scenario and correspondingly provided identification algorithms.

\looseness=-1 \textbf{Data generating process.}
Multi-task CRL considers a \emph{supervised} setup: Given a latent SCM as defined in~\cref{defn:latent_scm}, we generate the observable $\xb \in \RR^D$ through some mixing function $f: \RR^N \to \RR^D$ satisfying~\cref{assmp:diffeomorphism}.
Consider a set of task $\Tcal = \{T_1, \dots, T_k\}$ with corresponding task labels $\yb^k \in \Ycal_k$, we assume each task only depends on a subset of latent variables $S_k \subseteq [N]$. In other words, the label $\yb^k$ can be expressed as a function that contains all and only information about the latent variable $\zb_{S_k}$:
\begin{equation}
    \yb^k = r_k(\zb_{S_k}),
\end{equation}
where $r: \RR^{|S_k|} \to \Ycal_k$ is some deterministic function which maps the latent subspace $\RR^{|S_k|}$ to the task-specific label space $\Ycal_k$, which is often assumed to be linear and implemented using a linear readout in practice~\citep{lachapelle2022synergies,fumero2024leveraging}.
For each task $t_k, k \in [K]$, we observe the associated data distribution $P_{\xb, \yb^k}$. Consider two different tasks $T_k, T_{k^\prime}$ with $k, k^\prime \in [K]$, the corresponding data $\xb, \yb^k$ and $\xb, \yb^{k^\prime}$ are invariant in the intersection of task-related features $\zb_A$ with $A = S_k \cap S_{k^\prime}$. To ease the notation, let $\zb^{T_k} := \zb_{S_k}$ represent the task-related latents for task $T_k$. Formally, it holds that
\begin{equation}
    \zb_A^{T_k} = \zb_A^{T_{k^\prime}},
\end{equation}
showing alignment on the shared partition of the task-related latents.
In the ideal case, each latent component $j \in [N]$ is \emph{uniquely shared} by a subset of tasks, all factors of variation can be fully disentangled, which aligns with the theoretical claims by~\citet{lachapelle2022synergies,fumero2024leveraging}.

\looseness=-1 \textbf{Identification algorithms.}
We remark that the \emph{sharing} mechanism in the context of multi-task learning fundamentally differs from that of multiview setup, thus resulting in different learning algorithms.
Regarding learning, the shared partition of task-related latents is enforced to align up to the linear equivalence class (given a linear readout) instead of sample level $L_2$ alignment. 
Intuitively, this invariance principle can be interpreted as a soft version of the that in the multiview case.
In practice, under the constraint of perfect classification, one employs (1) a sparsity constraint on the linear readout weights to enforce the encoder to allocate the correct task-specific latents and (2) an information-sharing term to encourage reusing latents across various tasks.
Equilibrium can be obtained between these two terms only when the shared task-specific latent is element-wise identified~\pcref{defn:crlID}.
Thus, this soft invariance principle is jointly implemented by the sparsity constraint and information sharing regularization~\citep[Sec. 2.1]{fumero2024leveraging}.

\subsection{Domain Generalization}
\looseness=-1 \textbf{High-level overview.} Domain generalization aims at \textit{out-of-distribution} performance.
That is, learning an optimal encoder and predictor that performs well at some unseen test domain that preserves the same data symmetries as in the training data. At a high level, domain generalization~\citep{sagawa2019distributionally, zhang2017mixup, ganin2016domain,arjovsky2020invariantriskminimization, krueger2021out} considers a similar framework as introduced for interventional CRL, i.e., having access to multiple environment with different data distributions, but additionally incorporated with external supervision and focusing more on model robustness perspective. While interventional CRL aims to identify the true latent factors of variations (up to some transformation), domain generalization learning focuses directly on \textit{out-of-distribution} prediction, relying on some invariance properties preserved under the distributional shifts. 
Due to the non-causal objective, new methodologies are motivated and tested on real-world benchmarks (e.g., VLCS \citep{fang2013unbiased}, PACS \citep{li2017deeper}, Office-Home \citep{venkateswara2017deep}, Terra Incognita \citep{beery2018recognition}, DomainNet \citep{peng2019moment})  
and could inspire future real-world applicability of CRL approaches.

\looseness=-1 \textbf{Data generating process.} The problem of domain generalizations is an \textit{extension of supervised learning} where training data from multiple environments are available
\citep{blanchard2011generalizing}. 
An environment is a dataset of i.i.d. observations from a joint distribution $P_{\xb^k, \yb^k}$ of the observables $\xb^k \in \RR^D$ and the label $\yb^k \in \RR$. 
The label $\yb^k \in \RR^m$ only depends on the invariant latents $\zb_A^k \in \RR^{|A|}$ through a linear regression structural equation model~\citep[Assmp.~1]{ahuja2022invarianceprinciplemeetsinformation}, described as follows:
\begin{equation}
\label{eq:linear_regression_structural_equation_model}
\begin{aligned}
    \yb^k &= \wb^* \zb_A^k + \epsilon_k, \, \zb_A^k \perp \epsilon_k\\
    \xb^k &= f(\zb^k)
\end{aligned}
\end{equation}
where $\wb^* \in \RR^{D \times m}$ represents the ground truth relationship between the label $\yb^k$ and the invariant latents $\zb^k_A$. $\epsilon_k$ is some white noise with bounded variance and $f: \RR^N \to \RR^D$ denotes the shared mixing function for all $k \in [K]$ satisfying~\cref{assmp:diffeomorphism}.
The set of environment distributions $\{P_{\xb^k, \yb^k}\}_{k \in [K]}$ generally differ from each other because of interventions or other distributional shifts such as covariates shift and concept shift. However, as the relationship between the invariant latents and the labels $\wb^*$ and the mixing mechanism $f$ are shared across different environments, the risk on optimal weights remain invariant:
\begin{equation}
    \mathcal{R}_k^*(\mathbf{w}^* \mathbf{z}_A^k, \mathbf{y}^k) = \mathcal{R}_{k^\prime}^*(\mathbf{w}^* \mathbf{z}_A^{k^\prime}, \mathbf{y}^{k^\prime})
\end{equation}

where $\wb^*$ denotes the ground truth relation between the invariant latents $\zb_A^k$ and the labels $\yb^k$.

\looseness=-1 \textbf{Identification algorithms.} Different distributional invariance are enforced by interpolating and extrapolating across various environments. Among the countless contribution to the literature, \textit{mixup} \citep{zhang2017mixup} linearly interpolates observations from different environments as a robust data augmentation procedure, Domain-Adversarial Neural Networks \citep{ganin2016domain} support the main learning task discouraging learning domain-discriminant features, Distributionally Robust Optimization (DRO) \citep{sagawa2019distributionally} replaces the vanilla Empirical Risk objective minimizing only with respect to the worst modeled environment,  Invariant Risk Minimization \citep{arjovsky2020invariantriskminimization} combines the Empirical Risk objective with an invariance constraint on the gradient, and Variance Risk Extrapolation \citep[V-REx]{krueger2021out}, similar in spirit combines the empirical risk objective with an invariance constraint using the variance among environments. For a more comprehensive review of domain generalization algorithms, see \citet{zhou2022domain}.

\subsection{Further Explanations for\texorpdfstring{~\cref{tab:related_work}}{tabRelatedWorks}}

\looseness=-1 \textbf{General clarification.}~\Cref{tab:related_work} summarizes special cases of our invariance framework. For each work, we present their technical assumptions, the type of invariance, the implementation for the invariance and the sufficiency regularizers (to satisfy~\cref{cons:invariance,cons:sufficiency}), and the type of identifiability they achieve. Note that this table is by no means exhaustive. Also, we omit some additional results and technical assumptions of individual papers for readability. A list of paragraphs is provided below for further clarification, as referenced in \cref{tab:related_work}.


\looseness=-1 \hypertarget{single_node}{} \textbf{(a) Single-node intervention and parametric assumptions.} 
Many existing CRL works that consider single node intervention per node require additional parametric assumptions, either on the mixing function~\citep{varici2023score,zhang2024identifiability} or the latent causal model~\citep{buchholz2023learning} or both~\citep{squires2023linear}, thus achieving (at least) element-wise identifiability~\pcref{defn:crlID}. 
We conjecture these additional parametric assumptions serve two purposes: (1) to identify valid topological order of the interventional targets, as required by~\cref{assmp:topological_ordering} for~\cref{cor:single_node_id} (2) to get a more fine-grained identification level of affine transformation, as explained by~\cref{prop:transition_identification}.

In the following, we restate the definition of linear latent SCM for reference:
\begin{definition}[Linear latent SCM~\citep{squires2023linear,buchholz2023learning}]
    The latent variables $\zb$ follows a linear SCM with Gaussian noise in the sense that
    \begin{equation}
        \zb = A \zb + \Gamma^{1/2} \epsilon,
    \end{equation}
    where $\Gamma$ is a diagonal matrix with positive entries, $A$ encodes the underlying causal graph $G$ and the $\epsilon$ is the standard Gaussian noise. For the sake of simplicity, we often define $B:= \Gamma^{-1/2} (\text{Id} - A)$ such that $\zb = B^{-1} \epsilon$ to explicitly map from the exogenous noise $\epsilon$ to the latent variables $\zb$. We use $B_k$ to denote this matrix for the domain $k$.
\end{definition}

\looseness=-1 \hypertarget{multi_node}{} \textbf{(b) Multi-node intervention and linear mixing.} Recently,~\citet{varici2024linear} extends previous interventional CRL works to unknown multi-node interventions and achieves identifiability under the assumption of a linearly independent intervention signature matrix $M_{\text{int}} \in \{0, 1\}^{N \times K}$ with each column $k$ represents the intervened node in this environment $k$. The row-wise linear independence of $M_{\text{int}}$ implies that each latent variable must have been intervened at least once. 
Let $M \in \{0, 1\}^{N\times N}$ represent a submatrix of $M_{\text{int}}$ with \emph{linearly independent} columns. 
By multiplying $M$ with its adjoint transpose $\operatorname{adj}^\intercal (M)$, one obtains a matrix where each column has only one non-zero component. Applying the same transformation to the score change, this problem is reduced to a similar setting as a single node intervention per node, which can be intuitively explained using the same distributional invariance principle introduced earlier~\pcref{app:interventional_CRL}.

\looseness=-1 \hypertarget{paired_single_node}{}\textbf{(c) Paired single-node intervention per node under nonparametric assumptions.} In the nonparametric settings, several works~\citep{von2024nonparametric,varici2024general} have shown element-wise latent variable identification under sufficiently different paired perfect intervention per node. By having two sufficiently different interventions per node, one introduces invariance on the interventional target across these paired interventional environments. This invariance property can be enforced using the score differences~\citep{varici2024general} or algorithmically by performing model selection~\citep{von2024nonparametric}, as elaborated in~\cref{app:interventional_CRL}.

\looseness=-1 \hypertarget{cauca}{} \textbf{(d) Variant latents identification under independence.} While some papers states main identification results on the variant partition, it can be explained by~\cref{thm:ID_from_multi_sets} and~\cref{prop:id-variant-indep} stating that the variant block can be identified under independence and invertible encoder. For example, \citet[Thm.~4.5]{liang2023causal} shows block-identifiability on the intervened (variant) latents under~\citep[Assumption~4.4]{liang2023causal} of block-wise independence between the invariant and variant blocks.

\looseness=-1 \hypertarget{multitask}{} \textbf{(e) Invariance regularizers in multitask CRL.} Under the assumption of knowing the number of latent  variables,~\citet{lachapelle2022synergies} solves a bi-level optimization problem, enforcing  $L_{2,1}$ sparsity on individual task readouts in the inner problem. Coupled with a backbone shared across all tasks, this implicitly encourages discovering the ground truth overlapping partition of task support.~\citet{fumero2024leveraging} lifted the constraint of assuming the known number of latents by incorporating an additional information-sharing regularizer, as explained in ~\citep[Sec.~2.1]{fumero2024leveraging}.

\looseness=-1 \hypertarget{domain_generalization}{} \textbf{(f) Invariance regularizers in domain generalization.} 
While~\citet{sagawa2019distributionally} directly optimize for the worst-case risk, a link can be drawn between this objective and the risk invariance: Given a pair of linear head $\wb$ and encoder $g$ shared across $[K]$ domains, let the order of risks be $\Rcal^{\pi_1} \geq \Rcal^{\pi_2} \dots \Rcal^{\pi_K}$. Since $\Rcal^{\pi_1}$ is lower bounded by $\Rcal^{\pi_2}$, the minimum of the training objective in~\citet{sagawa2019distributionally} ($\max_{k \in [K]} \Rcal^k(w, g)$) is obtained when  $\Rcal^{\pi_1} = \Rcal^{\pi_2}$. Then we have $\Rcal^{\pi_1} = \Rcal^{\pi_2} \geq \dots \geq \Rcal^{\pi_K}$, and the next minimum will be obtained when $\Rcal^{\pi_1} = \Rcal^{\pi_2} = \Rcal^{\pi_3}$, and so on so forth. The optimization procedure stops when the risks are equally minimized across all domains.

~\citep{krueger2021out} minimizes variance between domain risks to enforce the risk invariance. We formally show these two are equivalent in the following. Note that the invariance principle for risk alignment can be formulated as
\begin{equation}
\label{eq:risk_alignment}
   (\mathcal{R}_{k} - \mathcal{R}_{k^{'}})^2
\end{equation}
According to~\citet{zhang2012some}, variance can be equivalently expressed as pair-wise distances between the samples. Hence, we can reformulate the risk variance term in~\citep{sagawa2019distributionally} as follows:
\begin{equation*}
\begin{aligned}
    \operatorname{Var} \left[ \mathcal{R} \right]
    & = \dfrac{1}{K^2} \sum_{k, k^\prime \in [K]} \dfrac{1}{2} \left(\Rcal_k - \Rcal_{k^\prime}\right)^2,
\end{aligned}
\end{equation*}
showing that the variance regularization in~\citep{krueger2021out} enforces risk invariance.

\subsection{Notable Cases Not Directly Covered by the Theory}
Some works not listed in~\cref{tab:related_work} cannot yet be directly explained by our invariance frameworks but are rather loosely connected. One representative line of work~\citep{lachapelle2022disentanglement,zheng2022identifiability,xu2024sparsity, lachapelle2024disentanglement} relies on the sparsity assumption in the latent dependency to achieve latent variable and graph identification. This assumption is closely related to the \emph{sparse mechanism shift} hypothesis in CRL~\citep{scholkopf2021toward}, stating small distributional changes should not affect all causal variables but only a small subset of these. Note that the sparsity constraint is often formulated as the estimator (either for the graph~\citep{lachapelle2022synergies, lachapelle2024disentanglement} or of the latents~\citep{xu2024sparsity}) should be at least sparse as the ground truth one, maximizing the cardinality of the unaffected (invariant) part. 
Some theoretical results do not rely on multiple data pockets that share certain invariance properties but directly employ specific properties within the observational data, such as independent support~\citep{ahuja2023interventional}, or shared cluster membership~\citep{khemakhem2020ice,kivva2022identifiability}. Some works~\citep{zhang2024causal} follow an orthogonal proof technique originating from the \emph{nonlinear ICA with auxiliary variable} line of work~\citep{hyvarinen2019nonlinear}. Their proofs often rely on linear independence derived from the statistical diversity of various underlying data distributions instead of shared invariance properties. Our framework thus does not trivially include them.


\section{Proofs}
\label{app:proofs}
This section includes formal proofs for the theoretical statements of the paper.

\subsection{Assumption Justification}
\label{app:ass_justification}
We justify the~\cref{defn:invariance_projector} (ii) by showing negative results under violation of this assumption, i.e., trivially invariant latent variables are not identifiable.

\begin{proposition}[General non-identifiability of trivially invariant latent variables]
Consider the setup in~\cref{thm:ID_from_multi_sets}, w.l.o.g we assume $\iotaset = \{\iota\}$ and $\iota$ is trivial in the sense that assumption (ii) in~\cref{defn:invariance_projector} is violated. Then, the corresponding invariant partition $\zb_{A}^k$ is not identifiable for any $k \in [K]$.
\end{proposition}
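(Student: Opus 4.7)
The strategy is to exhibit an explicit alternative encoder/selector pair that (a) satisfies the invariance constraint~(Constraint on invariance) and the sufficiency constraint~(Constraint on sufficiency), yet (b) produces a selected invariant partition that depends on some latent coordinate outside $A$. Since the selected output then cannot be written as a diffeomorphism of $\zb_A$ alone, block-identifiability~(Definition of block-identifiability) fails, giving non-identifiability of $\zb_A^k$.

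\textbf{Constructing the counterexample.} Start from an optimal pair $(g_k^*, \phi^{*,k})$ provided by Theorem~\ref{thm:ID_from_multi_sets}, so that $\phi^{*,k}\oslash g_k^* \circ f_k$ is a diffeomorphism of $\zb_A^k$ onto its image. Because assumption (ii) of~\cref{defn:invariance_projector} is violated, for each $k\in V$ there exists a smooth map $h_k:\RR^N\to\RR^{|A|}$ with $\partial h_k/\partial \zb_{q_k}$ non-vanishing somewhere for some $q_k\in[N]\setminus A$, yet
\begin{equation*}
    \iota(h_k(\zb^k)) \;=\; \iota(h_{k'}(\zb^{k'})) \qquad \forall k,k'\in V,\; \zb^k\sim_\iota \zb^{k'}.
\end{equation*}
Define a new encoder $\tilde g_k$ whose coordinates indexed by $\hat A^k_i$ output $h_k(f_k^{-1}(\xb^k))$, and whose remaining coordinates reproduce the non-selected coordinates of $g_k^*$. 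Keep the original selector $\phi^{*,k}$.

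\textbf{Verifying the constraints and failure of block-identifiability.} The invariance constraint holds by the defining property of $h_k$; the sufficiency constraint can be preserved by augmenting the output dimension if necessary (e.g., stacking $g_k^*$ together with $h_k\circ f_k^{-1}$ and enlarging the selector accordingly so that $I(\zb_A, \tilde g_k(\xb^k))=H(\zb_A)$ is still satisfied via the unchanged $g_k^*$ component). However, the selected invariant block $\phi^{*,k}\oslash \tilde g_k(\xb^k)=h_k(\zb^k)$ depends on $\zb_{q_k}$ with $q_k\notin A$, so it cannot equal $\hat h(\zb_A^k)$ for any function $\hat h$ of $\zb_A^k$ alone. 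Thus $\zb_A^k$ is not block-identified by $(\tilde g_k,\phi^{*,k})$. Since there exist two encoders (namely $g_k^*$ and $\tilde g_k$) both realising the constrained optimum but producing fundamentally different selected partitions, $\zb_A^k$ is not identifiable.

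\textbf{Anticipated obstacle.} The delicate step is arranging that $\tilde g_k$ simultaneously (i) remains a smooth map into the correct latent support $\Zcal^k$, (ii) retains sufficiency, and (iii) has the same selector support $\hat A^k_i$ so that the \emph{selected} coordinates are exactly $h_k$. The augmentation trick above addresses (ii)--(iii); for (i), one can either modify the codomain or compose $h_k$ with a smooth bijection onto a subset of $\RR$, using that $\iota$-equivalence is preserved by post-composition with any $\sim_\iota$-respecting map. The conceptual content of the argument is that, without (ii), the invariance relation imposes no local restriction preventing the encoder from routing information through variant latents, so the optimum is not unique and block-identifiability breaks down.
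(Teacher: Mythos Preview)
The paper's proof takes a far simpler route than yours: it exhibits a single concrete counterexample with $N=2$, $A=\{1\}$, the indicator $\iota(\zb_1)=\mathbf{1}[\zb_1>0]$, identity mixing $f=\mathrm{id}$, and the invertible encoder $g(\zb)=[\zb_1+\zb_2,\,\zb_2]$. The selected coordinate $\zb_1+\zb_2$ satisfies the $\iota$-invariance constraint on the region where $z_1+z_2>0$ (since $\iota$ only records the sign) yet depends on $\zb_2\notin A$, so $\zb_1$ is not block-identified. This establishes only that condition~(ii) of Definition~\ref{defn:invariance_projector} is \emph{necessary} for the identifiability theorem, which is what the proposition is meant to convey.

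Your attempt is more ambitious---you read the proposition as asserting non-identifiability for \emph{every} $\iota$ violating~(ii)---and the argument has two genuine gaps. First, the negation of~(ii) only furnishes a pair $h_1,h_2$ and a \emph{specific} pair of latents (or point $\zb^*$) at which the implication fails; it does not guarantee that $\iota(h_k(\zb^k))=\iota(h_{k'}(\zb^{k'}))$ holds almost surely over the data, which is what Constraint~\ref{cons:invariance} requires. You assert this global invariance (``$\forall k,k'\in V,\ \zb^k\sim_\iota \zb^{k'}$'') without justification, and a single violating instance cannot supply it. Second, your augmentation trick to restore sufficiency stacks $g_k^*$ onto $h_k\circ f_k^{-1}$, enlarging the encoder's output beyond $\RR^N$ and so contradicting Definition~\ref{defn:encoders}; and if instead you enlarge the selector to cover the stacked $g_k^*$ block, the selected output again contains a diffeomorphic copy of $\zb_A$, so block-identifiability is \emph{not} broken after all. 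The paper's concrete example sidesteps both problems because $\iota$, the data support, and the encoder are designed together so that the invariance constraint holds on the entire relevant region and no dimension juggling is needed.
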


\begin{proof}
    We provide a counter example as follows:
    Define a trivial $\iota$-property as ``if the first component is greater than zero on $A = \{1\}$ of some two dimensional latents $\zb$". Formally, 
    \begin{equation*}
        \iota(\zb_1) = \mathbf{1}[\zb_1 > 0].
    \end{equation*}
    Consider a mixing function $f = id$ and an invertible encoder $g(\xb) = g(f(\zb)) = [\zb_1 + \zb_2, \zb_2]$ satisfying the sufficiency constraint~\pcref{cons:sufficiency}. Define $h_1 = h_2 = [g \circ f]_A$. Then for some realizations $z, \tilde{z}$ with $z_1 + z_2 > 0 $ and $ \tilde{z}_1 + \tilde{z}_2 > 0$
    we have $\iota(h(\zb)) = \iota(h(\tilde{\zb}))$. However, $h_1, h_2$ can not disentangle $\zb_1$, showing non-identifiability for the invariant partition $\zb_A$.
\end{proof}

\textbf{Link between~\cref{defn:invariance_projector} (ii) and interventional discrepancy.}
In the following, we elaborate how~\cref{defn:invariance_projector} (ii) resembles the most common assumption in interventional CRL, the interventional discrepancy~\citep{liang2023causal,varici2024general}. Note that this assumption may termed differently as \emph{sufficient variability}~\citep{von2024nonparametric, lippe2022citris}, \emph{interventional regularity}~\citep{varici2023score,varici2024linear}, but the mathematical formulation remain the same. We begin with restating this assumption:

\begin{assumption}[Interventional discrepancy~\citep{liang2023causal}]
\label{assmp:interventional_discrepancy}
    Given $k \in [K]$, let $p_{t_k}$ denote the causal mechanism of the intervened variable $\zb_{t_k}$ with $t_k \in [N]$. We say a stochastic intervention $\tilde{p}_{k}$ satisfies interventional discrepancy if
    \begin{equation*}
        \dfrac{\partial \log p_{t_k}}{\partial \zb_{t_k}} (\zb_{t_k} ~|~ \pa{\zb}{t_k}) \neq \dfrac{\partial \log \tilde{p}_{t_k}}{\partial \zb_{t_k}} (\zb_{t_k} ~|~ \pa{\zb}{t_k}) \qquad \text{almost everywhere}~(a.e.).
    \end{equation*}
\end{assumption}

\begin{proof}
    We show that any cases violating the interventional discrepancy assumption also violates~\cref{defn:invariance_projector} (ii) and vice versa.
    Suppose for a contradiction that there exists $t_k \in [N]$ that is intervened in environment $k \in [K]$, and there is a non-empty interior $U \subset \RR$ with non-zero measure where the interventional discrepancy is violated, i.e., for all $z_{t_k} \in U$, it holds
    \begin{equation}
    \label{eq:violating_int_dis}
       \dfrac{\partial \log p_{t_k}}{\partial z_{t_k}} (\zb_{t_k} ~|~ \pa{\zb}{t_k}) = \dfrac{\partial \log \tilde{p}_{t_k}}{\partial z_{t_k}} (\zb_{t_k} ~|~ \pa{\zb}{t_k})
    \end{equation}

    Under a single node imperfect intervention, the complementary set of the transitive closure of $t_k$, i.e., $A := [N] \setminus \operatorname{TC}(t_k)$ remain marginally invariant:
    \begin{equation*}
        \iota(\zb_A) = p_{\zb_A} = \tilde{p}_{\zb_A}.
    \end{equation*}

    W.l.o.g, we assume $A = \{1, \dots, t_k -1\}$, define a function $h: \RR^N \to \RR^{|A|}$ with
    \begin{equation*}
        h(\zb) = [\zb_1, \dots, \zb_{t_k - 2}, \zb_{t_k}]
    \end{equation*}
    that omits the $t_k -1$-th component of $\zb$ but includes the variant component $t_k$.
    Note that the marginal of $\zb_{t_k}$ after intervention remains invariant within $U$ because
    \begin{equation*}
    \begin{aligned}
         p(\zb_{t_k}) 
         &= \int p_{t_k}(\zb_{t_k} ~|~ \pa{\zb}{t_k}) p( \pa{\zb}{t_k}) d  \pa{\zb}{t_k} \qquad \text{pa}(t_k) \in A\\
         &= \int p_{t_k}(\zb_{t_k} ~|~ \pa{\zb}{t_k}) \tilde{p}( \pa{\zb}{t_k}) d  \pa{\zb}{t_k} \qquad ~\cref{eq:violating_int_dis} \text{ and both } p_k ,\tilde{p}_k \text{ pdfs}\\
         &= \int \tilde{p}_{t_k}(\zb_{t_k} ~|~ \pa{\zb}{t_k}) \tilde{p}( \pa{\zb}{t_k}) d  \pa{\zb}{t_k} \\
         &= \tilde{p}(\zb_{t_k}).
    \end{aligned}
    \end{equation*}
    Therefore, we have $\iota(h(\zb)) = \iota(h(\tilde{\zb}))$ (with $\tilde{\zb}$ noting the latent vectors under intervention) contradicting~\cref{defn:invariance_projector} (ii). The other direction (violating~\cref{defn:invariance_projector} (ii) implies violating~\cref{assmp:interventional_discrepancy}) can be proved using the same example.

\end{proof}

\subsection{Proof for\texorpdfstring{~\cref{thm:ID_from_multi_sets}}{ThmRef}}
Our proof consists of the following steps:
\begin{enumerate}
    \item We construct the optimal encoders $G^*$~\pcref{defn:encoders} and selectors $ \Phi^*$~\pcref{defn:block_selectors} that solves the constrained optimization problem in~\cref{thm:ID_from_multi_sets}.
    \item We show that, for any invariance property $\iota_i \in \iotaset$ and any observation $\xb^k$ in the corresponding $\iota_i$-equivalent subset $\xb_{V_i}$, the selected representation $\phi^{(i, k)} \oslash g_k(\xb^k)$ cannot contain any other information than the invariant partition $\zb^k_{A_i}$.
    \item Lastly, we prove that selected representation $\phi^{(i, k)} \oslash g_k(\xb^k)$ relates to the ground truth invariant partition $\zb^k_{A_i}$ through a diffeomorphism $h_k^i: \RR^{|A_i|} \to \RR^{|A_i|}$ for all invariance property $\iota_i \in \iotaset$ and for any observable $\xb^k$ from the $\iota_i$-equivalent subset $\xb_{V_i}$; in other words, $\phi^{(i, k)} \oslash g_k(\xb^k)$ block-identifies $\zb^k_{A_i}$ in the sense of~\cref{defn:blockID}.
\end{enumerate}

\begin{lemma}[Existence of optimal encoders and selectors]
\label{lemma:exist_opt_enc_select}
Consider a set of observables $\Scal_{\xb} = \{\xb^1, \xb^2, \dots, \xb^K\} \in \Xcal$ generated from~\cref{sec:dgp} satisfying~\cref{assmp:iota_observation_relation}, then
there exists optimal encoders $G^*$~\pcref{defn:encoders} and selectors $\Phi^*$~\pcref{defn:block_selectors} which satisfy both~\cref{cons:invariance,cons:sufficiency}.
\end{lemma}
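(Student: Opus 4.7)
The plan is to exhibit an explicit pair $(G^*, \Phi^*)$ realising the two constraints by essentially inverting the data-generating process. The construction is the ``oracle'' encoder-selector pair, so the difficulty is purely in verifying that the definitions and assumptions line up; no hard analysis is required.

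\textbf{Step 1: Construction.} For each $k \in [K]$, define the encoder $g_k^{*} : \Xcal^k \to \Zcal^k$ by
\begin{equation*}
    g_k^{*} := f_k^{-1}\big|_{\Xcal^k},
\end{equation*}
which is well-defined and smooth on $\Xcal^k = \mathrm{Im}(f_k)$ by \cref{assmp:diffeomorphism} (each $f_k$ is a diffeomorphism onto its image). For each pair $(i,k)$ with $i \in [n_{\iotaset}]$ and $k \in V_i$, define the selector $\phi^{(i,k)*} \in \{0,1\}^N$ by
\begin{equation*}
    \phi^{(i,k)*}_j = \mathbbm{1}[j \in A_i], \qquad j \in [N].
\end{equation*}
Since $|A_i|$ does not depend on $k$, we get $\|\phi^{(i,k)*}\|_0 = |A_i|$ for every $k \in V_i$, matching the cardinality condition in \cref{defn:block_selectors}.

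\textbf{Step 2: Verifying the sufficiency constraint.} By construction, $g_k^{*}(\xb^k) = f_k^{-1}(f_k(\zb^k)) = \zb^k$ almost surely. Hence the random vector $g_k^{*}(\xb^k)$ contains $\zb_{A_i}^{k}$ as a subvector, and in particular $I(\zb_{A_i}^k, g_k^{*}(\xb^k)) = H(\zb_{A_i}^k)$ for every $i \in [n_{\iotaset}]$ and $k \in V_i$. This is exactly \cref{cons:sufficiency}.

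\textbf{Step 3: Verifying the invariance constraint.} Fix $\iota_i \in \iotaset$ and $k, k' \in V_i$. By \cref{assmp:iota_observation_relation}, the pair $(\xb^k, \xb^{k'})$ is drawn from observables of an equivalence class under $\sim_{\iota_i}$, so the underlying latents satisfy $\zb^k_{A_i} \sim_{\iota_i} \zb^{k'}_{A_i}$, which by \cref{defn:invariance_projector} is equivalent to $\iota_i(\zb^k_{A_i}) = \iota_i(\zb^{k'}_{A_i})$. Combining this with the identity $\phi^{(i,k)*} \oslash g_k^{*}(\xb^k) = \zb^{k}_{A_i}$ from Step 2, we obtain
\begin{equation*}
    \iota_i\bigl(\phi^{(i,k)*} \oslash g_k^{*}(\xb^k)\bigr) = \iota_i(\zb^{k}_{A_i}) = \iota_i(\zb^{k'}_{A_i}) = \iota_i\bigl(\phi^{(i,k')*} \oslash g_{k'}^{*}(\xb^{k'})\bigr),
\end{equation*}
which is precisely \cref{cons:invariance}.

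\textbf{Main obstacle.} The statement is essentially a feasibility/realisability check, so no analytic difficulty arises; the only point that requires care is matching the formal signatures (smoothness of $g_k^{*}$, cardinality of the selectors, and the equivalence between ``belonging to the same equivalence class'' and equality under $\iota_i$). Once \cref{assmp:iota_observation_relation} is unpacked, these reduce to applying \cref{defn:invariance_projector} and \cref{assmp:diffeomorphism}, as above.
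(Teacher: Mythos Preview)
Your proposal is correct and follows essentially the same approach as the paper: construct the oracle encoders $g_k^* = f_k^{-1}$ and the oracle selectors as the indicators of $A_i$, then verify that the invariance and sufficiency constraints hold by unwinding the definitions. If anything, your version is slightly more explicit than the paper's (you check the selector cardinality condition and spell out the mutual-information argument for sufficiency), but the idea is identical.
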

\begin{proof}
    The optimal encoders can be constructed as the set of the inverse of the ground truth mixing functions:
    \begin{equation}
        G^* = \{f_k^{-1}\}_{k \in [K]},
    \end{equation}
    $f_k^{-1}$ is smooth and invertible following~\cref{assmp:diffeomorphism}.
    By definition, for each $k \in [K]$, we have:
    \begin{equation}
        f_k^{-1}(\xb^k) = \zb^k \in \Zcal^k.
    \end{equation}
    Next, we define the optimal selector $\Phi^* = \{\phi^{(i, k)}\}_{i \in [|\iotaset|], k\in [K]}$ such that for all $i \in |\iotaset|, k \in [K]$, it holds
    \begin{equation}
        \phi^{(i, k)} \oslash \zb^k = \zb^k_{A_i}.
    \end{equation}
    Thus, the invariance constraint~\pcref{cons:invariance} is trivially satisfied as given by~\cref{sec:dgp}. The optimal encoder $f^{-1}_k$ is smooth and invertible following~\cref{assmp:diffeomorphism} so the sufficiency constraint~\pcref{cons:sufficiency} is also satisfied. Hence, we have shown the optimum of the constrained optimization problem in~\cref{thm:ID_from_multi_sets} exists.
\end{proof}

\begin{lemma}[Invariant component isolation]
\label{lemma:invariance_component_isolation}
    Consider the same set of observables $\Scal_{\xb}$ as introduced in Lemma~\ref{lemma:exist_opt_enc_select}, then for any set of smooth encoders $G$~\pcref{defn:encoders}, $\Phi$~\pcref{defn:block_selectors} that satisfy the invariance condition~\pcref{cons:invariance}, the learned representation $\phi^{(i, k)} \oslash g_k(\xb^k)$ can only be dependent on the invariant latent variables $\zb_{A_i}^k := \{\zb_j^k: j \in A_i\}$, not any non-invariant variables $\zb^k_{q}$ with $q \in A_i^\mathrm{c} := [N] \setminus A_i$.
\end{lemma}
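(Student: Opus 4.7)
The plan is to treat the invariance constraint~\pcref{cons:invariance} as a functional equation on the reparametrized map and then use the non-triviality clause~\pcref{defn:invariance_projector} (ii) in its contrapositive form to kill every dependence on indices outside $A_i$. Concretely, for each $\iota_i \in \iotaset$ and each $k \in V_i$, I would define the smooth composition
\begin{equation*}
    h^{(i,k)} \;:=\; \bigl[ g_k \circ f_k \bigr]_{\hat A_i^k}\,:\; \RR^N \to \RR^{|A_i|},
\end{equation*}
where $\hat A_i^k := \{j : \phi^{(i,k)}_j = 1\}$ is the selected index set~\pcref{defn:block_selectors}. Because $f_k$ is a diffeomorphism onto its image~\pcref{assmp:diffeomorphism} and $g_k$ is smooth~\pcref{defn:encoders}, $h^{(i,k)}$ is smooth on $\Zcal^k$, so all partial derivatives exist.

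Next, I would rewrite~\cref{cons:invariance} in terms of these reparametrized maps. For any pair $k, k' \in V_i$ and any pair of latents $(\zb^k, \zb^{k'})$ belonging to the same equivalence class $[\zb]_{\sim_{\iota_i}}$ (which is what~\cref{assmp:iota_observation_relation} guarantees is what $\xb_{V_i}$ indexes), the invariance constraint reads
\begin{equation*}
    \iota_i\bigl(h^{(i,k)}(\zb^k)\bigr) \;=\; \iota_i\bigl(h^{(i,k')}(\zb^{k'})\bigr).
\end{equation*}

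The key step is the contrapositive of~\cref{defn:invariance_projector} (ii): if $h^{(i,k)}$ had a nonzero partial derivative $\partial h^{(i,k)} / \partial \zb_q$ at \emph{any} point $\zb^* \in \RR^N$ for some $q \in [N] \setminus A_i$, then the invariance would necessarily break, contradicting the equation above. Hence, for every $q \in A_i^{\mathrm c}$ and every $\zb^* \in \Zcal^k$, we must have $\partial h^{(i,k)} / \partial \zb_q (\zb^*) = 0$. Since $h^{(i,k)}$ is smooth, vanishing of all partials with respect to $\zb_{A_i^{\mathrm c}}$ on the connected interior of $\Zcal^k$ implies that $h^{(i,k)}$ factors through the projection $\zb \mapsto \zb_{A_i}$, so we can write $h^{(i,k)}(\zb^k) = \tilde h^{(i,k)}(\zb^k_{A_i})$ for some smooth $\tilde h^{(i,k)}: \RR^{|A_i|} \to \RR^{|A_i|}$. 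Undoing the reparametrization, the selected representation $\phi^{(i,k)} \oslash g_k(\xb^k)$ depends only on $\zb^k_{A_i}$.

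The main obstacle I expect is the careful handling of the domain on which partials vanish: the clause in~\cref{defn:invariance_projector} (ii) is stated at a single point $\zb^* \in \RR^N$, whereas one really needs the vanishing on the open support $\Zcal^k$, and then needs connectedness (the non-degenerate support $\Zcal^k$ has non-empty open interior, but it is not a priori connected) to pass from ``all partials zero" to ``function factors through $\zb_{A_i}$''. If $\Zcal^k$ is not connected, the argument still works on each connected component, and the selector yields block-independence of $\zb^k_{A_i^{\mathrm c}}$ component-wise; this technicality is usually swept away by treating $\Zcal^k$ as an open connected set, as is standard in the CRL literature. The rest, i.e., combining this lemma with a sufficiency-based volume/measure argument to promote ``depends only on $\zb^k_{A_i}$" to ``diffeomorphic to $\zb^k_{A_i}$", is then the job of the subsequent steps of the proof of~\cref{thm:ID_from_multi_sets}.
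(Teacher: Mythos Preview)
Your proposal is correct and follows essentially the same approach as the paper: define the reparametrized map $h_k := \phi^{(i,k)} \oslash g_k \circ f_k$, rewrite the invariance constraint~\pcref{cons:invariance} as $\iota_i(h_k(\zb^k)) = \iota_i(h_{k'}(\zb^{k'}))$, and then invoke the contrapositive of~\cref{defn:invariance_projector}~(ii) to rule out dependence on any $\zb_q$ with $q \notin A_i$. Your version is in fact more carefully fleshed out than the paper's, which states the conclusion in a single sentence without making the vanishing-partials-to-factoring step or the connectedness caveat explicit.
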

\begin{proof}
This proof directly follows~\cref{defn:invariance_projector} (ii). Define \begin{equation}
        h_k^i := \phi^{(i, k)} \oslash g_k \circ f_k \quad k \in [K].
\end{equation}
By~\cref{cons:invariance}, for all $\iota_i \in \iotaset$, we have
\begin{equation}
\label{eq:2345}
    \iota_i(h_k^i(\zb^k)) = \iota_i(h_{k^\prime}^i(\zb^{k^\prime})) \quad a.s. \qquad \forall k \neq k^\prime \in [K].
\end{equation}
According to~\cref{defn:invariance_projector} (ii), for all $i \in [|\iotaset|], k \in V_i, h_k^i$ cannot depend on any other latent component $\zb_q$ with $q \notin A_i$. Therefore, we have shown that $h_k^i$ is a function of $\zb_{A_i}^k$, for all $i \in [|\iotaset|], k \in V_i$.
\end{proof}

\idmulti*
\begin{proof}
    ~\cref{lemma:exist_opt_enc_select} verifies that there exists such optimum which satisfies both invariance and sufficiency conditions~\pcref{cons:invariance,cons:sufficiency}. Following ~\cref{lemma:invariance_component_isolation}, the composition $\phi^{(i, k)} \oslash g_{k}$ can only encode information related to the invariant latent subset $A_i$ specified by the invariance property $\iota_i \in \iotaset$ for all $k \in V_i$. 
    As given by~\cref{cons:sufficiency}, $\phi^{(i, k)} \oslash g_{k}$ contain all information the ground truth invariant latents $\zb_{A_i}$ for $i$ with $k \in V_i$. 
    Therefore, the selected representation $\phi^{(i, k)} \oslash g_{k}(\xb^k)$ relates to the ground truth invariant partition $\zb_{A_i}$ through some diffeomorphism, i.e., $\zb_{A_i}$ is blocked-identified by $\phi^{(i, k)} \oslash g_{k}(\xb^k)$ for all invariance property $\iota_i \in \iotaset$ and observable $k \in V_i$, .
\end{proof}

\subsection{Proofs for Generalization of Variant Latents}
\varNonID*
\begin{proof}
    We provide a simple counter example with two latent variables $\zb = [\zb_1, \zb_2]$, with the mixing function $f$ being the identity map $\operatorname{id}$. W.l.o.g. we assume the invariant partition to be $A = \{1\}$. 
    According to~\cref{thm:ID_from_multi_sets}, the invariant latent variable can be identified up to a certain bijection $h: \RR \to \RR$. Let $\hat{\zb}$ be the estimated representation:
    \begin{equation}
        \hat{\zb} = [h(\zb_1), \zb_2 - \zb_1]
    \end{equation}
    with the estimated mixing function $\hat{f}: \RR^2 \to \RR^2$:
    \begin{equation}
        \hat{f}(\hat{\zb}) = [h^{-1}(\hat{\zb}_1), \hat{\zb}_2 + h^{-1}(\hat{\zb}_1)],
    \end{equation}
    then we obtain the same observations $\hat{f}(\hat{\zb}) = f(\zb)$ whereas $\hat{\zb}_2$ consists of a mixing of $\zb_1$ and $\zb_2$, showing the variant latent variable $\zb_2$ can not be identified. 
\end{proof}

\varIDindependence*

\begin{proof}
$(\Leftarrow)$: We start by showing the sufficiency of conditions (i) and (ii). The mutual information between the observation $\xb \in \Scal_{\xb}$ and the optimal encoder $g \in G^*$ from~\cref{thm:ID_from_multi_sets} writes:
\begin{equation*}
    I(\xb, g(\xb)) = H(\xb) - H(\xb ~|~ g(\xb)),
\end{equation*}
following condition (i) in~\cref{prop:id-variant-indep}, the second term (conditional entropy) must equal zero: $H(\xb ~|~ g(\xb)) = 0$.

Writing the $\xb = f(\zb_A, \zb_{A^\mathrm{c}})$, we have
\begin{equation*}
    \begin{aligned}
        H(\xb ~|~ g(\xb)) = H(f(\zb_A, \zb_{A^\mathrm{c}}) ~|~ g(\xb)) = H(\zb_A, \zb_{A^\mathrm{c}} ~|~ g(\xb)),
    \end{aligned}
\end{equation*}
because the mixing function $f$ is deterministic as given by ~\cref{defn:mixing_fn}.

Note that $g(\xb)$ can be decomposed into two separate partitions: $\phi \oslash g(\xb), (1-\phi) \oslash g(\xb)$; thus we can write the conditional entropy as

\begin{equation*}
    \begin{aligned}
        H(\xb ~|~ g(\xb)) 
        & = H(\zb_A, \zb_{A^\mathrm{c}} ~|~ \phi \oslash g(\xb), (1-\phi) \oslash g(\xb))\\
        & = H(\zb_{A^\mathrm{c}} ~|~ \zb_{A}, \phi \oslash g(\xb), (1-\phi) \oslash g(\xb)) + H(\zb_{A} ~|~  \phi \oslash g(\xb), (1-\phi) \oslash g(\xb))
    \end{aligned}
\end{equation*}

Given that $\phi \oslash g(\xb)$ block identifies $\zb_A$, $(1-\phi) \oslash g(\xb))$ cannot contain any information about $\zb_A$, hence we can simplify the second term as
\begin{equation*}
     H(\zb_{A} ~|~  \phi \oslash g(\xb))
\end{equation*}

Using the additional mutual independence assumption between $\zb_A$ and $\zb_{A^\mathrm{c}}$ (\cref{prop:id-variant-indep} (ii)), we can rewrite the first term as
\begin{equation*}
     H(\zb_{A^\mathrm{c}} ~|~ (1-\phi) \oslash g(\xb)).
\end{equation*}

As a result, the condition entropy $ H(\xb ~|~ g(\xb)) $ can be decomposed as
\begin{equation*}
     H(\xb ~|~ g(\xb)) =  H(\zb_{A} ~|~  \phi \oslash g(\xb)) + H(\zb_{A^\mathrm{c}} ~|~ (1-\phi) \oslash g(\xb)) = 0.
\end{equation*}
Since $H(\zb_{A} ~|~  \phi \oslash g(\xb)) = 0$ following~\cref{cons:sufficiency}, the second term also must be zero, i.e.,  
$H(\zb_{A^\mathrm{c}} ~|~ (1-\phi) \oslash g(\xb)) = 0$, which is satisfied only if $(1-\phi) \oslash g(\xb)$ is a invertible function of $\zb_{A^\mathrm{c}}$. That is, $(1-\phi) \oslash g(\xb)$ block-identifies $\zb_{A^\mathrm{c}}$.

$(\Rightarrow)$: We show that block-identifiability of $\zb_{A^{\mathrm{c}}}$ by $(1-\phi) \oslash g$ implies both conditions (i) and (ii). 

\textbf{For condition (i)}
To show $g$ is invertible in the sense that $I(\xb, g(\xb)) = H(\xb)$, it is equivalent to show that $H(\xb \mid g(\xb)) = 0$
because 
\[
I(\xb, g(\xb)) = H(\xb) - H(\xb \mid g(\xb))
\]
Writing $\xb = f(\zb_A, \zb_{A^\mathrm{c}})$, we have
\begin{equation*}
    \begin{aligned}
        H(\xb ~|~ g(\xb)) = H(f(\zb_A, \zb_{A^\mathrm{c}}) ~|~ g(\xb)) = H(\zb_A, \zb_{A^\mathrm{c}} ~|~ g(\xb)),
    \end{aligned}
\end{equation*}
because the mixing function $f$ is a diffeomorphism as given by ~\cref{defn:mixing_fn}.

Given that $\phi \oslash g(\mathbf{x}) = h(\mathbf{z}_A)$ (indicated by~\cref{thm:ID_from_multi_sets}) and $(1-\phi) \oslash g(\mathbf{x}) = h^{\mathrm{c}}(\mathbf{z}_{A^{\mathrm{c}}})$ for some diffeomorphisms $h: \Zcal_{A} \to \Zcal_A$ and $h^{\mathrm{c}}: \Zcal_{A^\mathrm{c}} \to \Zcal_{A^\mathrm{c}}$, we can further decompose the conditional entropy $ H(\xb ~|~ g(\xb))$ into two separate terms
\begin{equation*}
    \begin{aligned}
        H(\xb ~|~ g(\xb)) 
        & = H(\zb_A, \zb_{A^\mathrm{c}} ~|~ \phi \oslash g(\xb), (1-\phi) \oslash g(\xb))\\
        & = H(\zb_{A^\mathrm{c}} ~|~ \zb_{A}, \phi \oslash g(\xb), (1-\phi) \oslash g(\xb)) + H(\zb_{A} ~|~  \phi \oslash g(\xb), (1-\phi) \oslash g(\xb))\\
        & = H(\zb_{A^\mathrm{c}} ~|~ \zb_{A}, h(\zb_A), h^\mathrm{c}(\zb_{A^{\mathrm{c}}})) + H(\zb_{A} ~|~  h(\zb_A), h^\mathrm{c}(\zb_{A^{\mathrm{c}}})))
    \end{aligned}
\end{equation*}
Since both $h, h^\mathrm{c}$ are diffeomorphisms (meaning $h(\zb_A)$, $h^\mathrm{c}(\zb_{A^\mathrm{c}})$ fully determines $\zb_A, \zb_{A^\mathrm{c}}$, respectively), both conditional entropy terms are zero; thus $H(\xb ~|~ g(\xb)) = 0$. Hence, we have shown that $g$ is invertible in the sense that $I(\xb, g(\xb)) = H(\xb)$.

\textbf{For condition (ii)}
Given that $\phi \oslash g(\xb)$ block identifies $\zb_A$, $(1-\phi) \oslash g(\xb)$ cannot contain any information about $\zb_A$ (Defn. 3.1), i.e.,
\[
I(\zb_A \mid (1-\phi) \oslash g(\xb)) = 0
\]

Writing $(1-\phi) \oslash g(\xb) = h^\mathrm{c}(\zb_{A^\mathrm{c}})$, we have
\[
I(\zb_A \mid h^\mathrm{c}(\zb_{A^\mathrm{c}})) = I(\zb_A \mid \zb_{A^\mathrm{c}}) = 0,
\]
as mutual information is invariant under invertible transformations and $h^\mathrm{c}$ is invertible.
Therefore, we have shown that $\zb_{A^\mathrm{c}}$ is independent on $\zb_A$.

To this end, we have shown both condition (i) and (ii) and necessary and sufficient conditions for the block-identifiability of $\zb_{A^\mathrm{c}}$ which completes the proof.

\end{proof}

\subsection{Proofs for Granularity of Latent Variable Identification}
\label{app:proof_granularity}
\granularityID*
\begin{proof}
    The diagonal matrix $\Lambda$ in~\cref{eq:affineID} is invertible and thus also a diffeomorphism $h$~\pcref{eq:crlID}. Hence, affine-identifiability implies element-identifiability. Affine-identifiability provides identification results with block-size one thus implies block affine-identifiability. On the other hand, block affine-identifiability is block-identifiability with affine bijection $h$ and element-identifiability defines a special case of block-identifiability where each latent component $\zb_i$ is an individual block.
\end{proof}

\transID*

\begin{proof}
    The proof for (i) is trivial in the sense that identification of block with size one boils down to the identification on the element level. (ii) directly follows~\citet[Thm.~4.4]{ahuja2023interventional} and~\citet[Lem.~1]{zhang2024identifiability}, stating that when both ground truth mixing function and decoder are finite degree polynomials of the same degree, the \emph{invertible} encoder learns a representation that is affine linear to the ground truth latents, i.e., $\hat{\zb} = \Lb \cdot \zb + \bb$ with $\Lb \in \RR^{N \times N}$.

\end{proof}

\subsection{Proof for\texorpdfstring{~\cref{cor:single_node_id}}{CorRef}}
\label{app:proof_single_node_id}

\singleNodeID*

\begin{proof}
We consider a coarse-grained version of the underlying causal graph consisting of a block-node $\zb_{[N-1]} := \{\zb_1, \dots, \zb_{N-1}\}$ and the leaf node $\zb_N$ with $\zb_{[N-1]}$ causing $\zb_N$ (i.e., $\zb_{[N-1]} \to \zb_N$).
We first select a pair of environments $V = \{0, k_N\}$ consisting of the observational environment and the environment where the leaf node $\zb_N$ is intervened upon. According to~\cref{eq:marginal_invariance}, the \emph{marginal invariance} holds for the partition $A = [N-1]$, implying identification on $\zb_{[N-1]}$ from~\cref{thm:ID_from_multi_sets}. 
At the same time, when considering the set of environments $V^\prime = \{0, k_1, \dots, k_{N-1}\}$,
the leaf node $N$ is the only component that satisfy \emph{score} invariance across all environments $V^\prime$, because $N$ is not the parent of any intervened node (also see~\citep[Lemma 4]{varici2023score}).
So here we have another invariant partition $A^\prime = \{N\}$, implying identification on $\zb_N$~\pcref{thm:ID_from_multi_sets}. 
By jointly enforcing the marginal and score invariance on $A$ and $A^\prime$ under a sufficient encoder~\pcref{cons:sufficiency}, we identify both $\zb_{[N-1]}$ as a block and $\zb_N$ as a single element. 
Formally, for the parental block $\zb_{[N-1]}$, we have:
\begin{equation}
    \hat{\zb}^k_{[N-1]} = g_{:N-1}(\xb^k) \qquad \forall k \in \{0, k_1, \dots, k_N\}
\end{equation}
where $g_{:N-1}(\xb^k) := [g(\xb^k)]_{:N-1}$ relates to the ground truth $\zb_{[N-1]}$ through some diffeomorphism $h_{[N-1]}: \RR^{N-1} \to \RR^{N-1}$~\pcref{defn:blockID}. 
Now, we can remove the leaf node $N$ as follows:
For each environment $k \in \{0, k_1, \dots, k_{N-1}\}$, we compute the pushforward of $P_{\xb^k}$ using the learned encoder $g_{:N-1}: \Xcal^k \to \RR^{N-1}$:
\begin{equation*}
    P_{\hat{\zb}^k_{[N-1]}} = g_{:N-1 \#} (P_{\xb^k})
\end{equation*}
Note that the estimated representations $P_{\hat{\zb}^k_{[N-1]}}$ can be seen as a new observed data distribution for each environment $k$ that is generated from the subgraph $\Gcal_{-N}$ without the leaf node $N$.
Using an iterative argument, we can identify all latent variables element-wise~\pcref{defn:crlID}.
\end{proof}

\section{Implementation Details}
\label{app:experiments}
This section provides further details about the experiment settings of~\cref{sec:experiments}, including a formal introduction to the ISTAnt dataset, highlighted open challenges~\pcref{app:istant}, and additional training settings for reproducibility~\pcref{app:nintervention}.

\subsection{Case Study: ISTAnt}
\label{app:istant}
\begin{figure}[t]
    \centering
    \begin{adjustbox}{valign=c}
    \begin{minipage}[t]{0.49\textwidth}  
        \centering
        \includegraphics[width=0.9\textwidth]{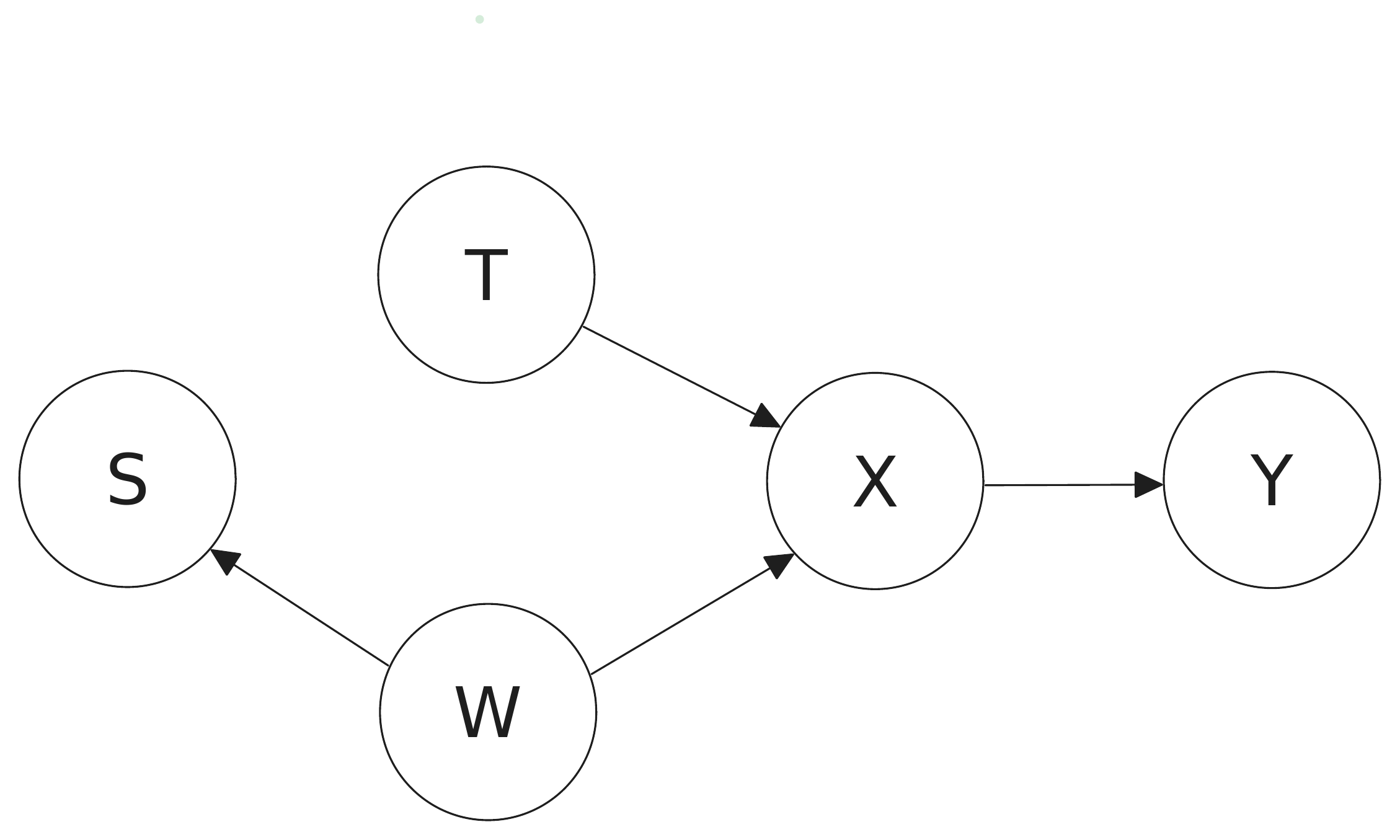}  
        \caption{Causal Model for generic partially annotated scientific experiment: $T$ treatment, $\bm{W}$ experimental settings, $\bm{X}$ high-dimensional observation, $Y$ outcome, $S$ annotation flag. Figure and caption adapted from~\citep[Fig.~1]{cadei2024smoke}}
        \label{fig:causalmodel}
    \end{minipage}
    \end{adjustbox}
    \hfill
    \begin{adjustbox}{valign=c}
    \begin{minipage}[t]{0.49\textwidth}  
        \begin{subfigure}[b]{0.49\textwidth}
            \centering
            \includegraphics[width=\textwidth]{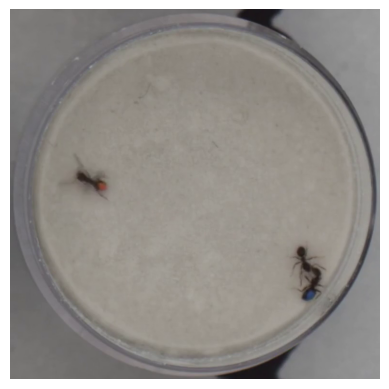}  
            \caption{Grooming \scriptsize{(blue to focal)}}
            \label{fig:grooming}
        \end{subfigure}
        \hfill
        \begin{subfigure}[b]{0.49\textwidth}
            \centering
            \includegraphics[width=\textwidth]{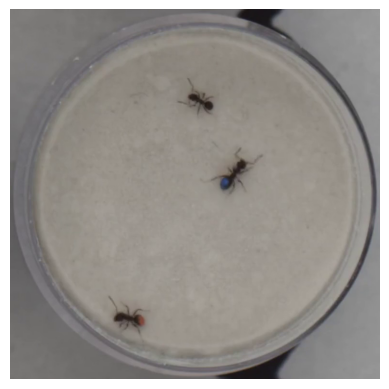}  
            \caption{No Action}
            \label{fig:nogrooming}
        \end{subfigure}
    \caption{Examples of high-dimensional observations $\bm{X}$ with corresponding annotated social behaviour $Y$ (grooming). Figure and caption adapted from~\citep[Fig.~2]{cadei2024smoke}}
    \label{fig:exampleistant}
    \end{minipage}
    \end{adjustbox}
\end{figure}

\textbf{Problem.} Despite the majority of CRL algorithms being designed to enforce the identifiability of some latent factors and tested on controlled synthetic benchmarks, there are a plethora of real-world applications across scientific disciplines requiring representation learning to answer causal questions \citep{robins2000marginal,samet2000fine,van2015causal,runge2023modern}. Recently, \citet{cadei2024smoke} introduced ISTAnt, the first real-world representation learning benchmark with a real causal downstream task (treatment effect estimation). This benchmark highlights different challenges (sources of biases) that could arise from machine learning pipelines even in the simplest possible setting of a randomized controlled trial. Videos of ants triplets are recorded, and a per-frame representation has to be extracted for supervised behavior classification to estimate the Average Treatment Effect of an intervention (exposure to a chemical substance). 
Beyond desirable identification result on the latent factors (implying that the causal variables are recovered without bias), no clear algorithm has been proposed yet on minimizing the Treatment Effect Bias (TEB)~\citep{cadei2024smoke}. One of the challenges highlighted by~\citet{cadei2024smoke} is that in practice, there is both covariate and concept shifts due to the effect modification from training on a non-random subset of the RCT because, for example, ecologists do not label individual frames but whole video recordings. ~\Cref{fig:causalmodel,fig:exampleistant} shows the underlying causal graph and example input.

\looseness=-1 \textbf{Solution.} Relying on our framework, we can explicitly aim for low TEB by leveraging \emph{known data symmetries} from the experimental protocol. In fact, the causal mechanism ($P(Y^e|do(\bm{X}^e=\bm{x})$) stays invariant among the different experiment settings (i.e., individual videos or position of the petri dish). 
This condition can be easily enforced by existing domain generalization algorithms. For exemplary purposes, we choose Variance Risk Extrapolation~\citep[V-REx]{krueger2021out}, which directly enforces both the invariance sufficiency constraints~\pcref{cons:invariance,cons:sufficiency} by minimizing the Empirical Risk together with the risk variance inter-environments. 

\textbf{Implementation details} All training settings follow the best-performing settings from~\citep{cadei2024smoke}, which we restate in~\cref{tab:istant_hyperparams} for reference.

\begin{table}
\centering
\begin{tabular}{ll}
    \toprule
    \rowcolor{Gray!20} \textbf{Model/Hyper-parameters} & \textbf{Value(s)} \\ 
    \midrule
    Encoder & DINOv2~\citep{oquab2023dinov2}\\
    Encoder (token) & class \\
    MLP (head):  hidden layers & 1 \\
    MLP (head):  hidden nodes & 256 \\
    MLP (head):  activation function & ReLU + Sigmoid output \\
    Tass & or \\
    Dropout & No \\ 
    Regularization & No \\ 
    Loss & BCELoss (with positive weighting) \\
    Loss: Positive Weight & $\frac{\sum_{i=1}^{n_s} 1-Y_i}{\sum_{i=1}^{n_s} Y_i}$ \\
    Learning Rate & 0.0005 \\ 
    Optimizer & Adam ($\beta_1=0.9, \beta_2=0.9,  \epsilon=10^{-8}$) \\ 
    Batch Size & 128 \\ 
    Epochs & 15 \\ 
    Seeds & \texttt{range(20)} \\
    \bottomrule
    \smallskip
\end{tabular}
\caption{Model and training details for the case study on ISTAnt~\pcref{subsec:istant}. Table adapted from~\citep[Tab.~4]{cadei2024smoke}}
\label{tab:istant_hyperparams}
\end{table}

\textbf{Discussion.} Interestingly, \citet{gulrajani2020search} empirically demonstrated that no domain generalization algorithm consistently outperforms Empirical Risk Minimization in \textit{out-of-distribution} prediction. However, in this application, our goal is not to achieve high out-of-distribution accuracy but rather to identify a representation that is invariant to the effect modifiers introduced by the data labeling process.
This experiment serves as a clear example of the paradigm shift of CRL via the invariance principle. 
While existing CRL approaches design algorithms based on specific assumptions that are often challenging to align with real-world applications, our approach begins from the application perspective. It allows for the specification of known data symmetries and desired properties of the learned representation, followed by selecting an appropriate implementation for the distance function (potentially from existing methods). 
Ultimately, identifiability hinges on the guarantee of asymptotic consistency in the estimates.

\subsection{Synthetic Ablation with ``Ninterventions"}
\label{app:nintervention}
The numerical data is generated using a linear Gaussian additive noise model as follows:
\begin{equation}
    \begin{aligned}
        &p(\zb_1) = \Ncal(\mu_1, \sigma_1^2)\\
        &p(\zb_2~|~\zb_1) = \Ncal(\alpha_1 \cdot \zb_1 + \beta_1, \sigma^2_2)\\
        &p(\zb_3~|~\zb_2) = \Ncal(\alpha_2 \cdot \zb_2 + \beta_2, \sigma_3^2)\\
        &\tilde{p}(\zb_2) = \Ncal(\tilde{\mu}_2, \tilde{\sigma}_2^2)   
    \end{aligned}
\end{equation}
 We choose $\mu_1 = 10.5, \sigma_1 = 0.8, \alpha_1 = 0.02, \beta_1 = 0, \sigma_2 = 0.5, \alpha_2 = 1, \beta_2 = 3, \sigma_3 = 1, \tilde{\sigma}_2 = 0.02$. We sample three independent $\tilde{\mu}_2$ according to a uniform distribution $\operatorname{Unif}[2, 5]$ to validate the consistency of the identification results.

For the training, we employ a simple auto-encoder architecture implementing both encoder and decoder as 3-Layer MLP. We enforce the marginal invariance using the Max Mean Discrepancy loss (MMD) on the first and last component $\hat{\zb}_1, \hat{\zb}_3$. Formally, the objective function writes
\begin{equation*}
    \Lcal(g, \hat{f}) = \EE_{\xb, \tilde{\xb}} \left[ \norm{\hat{f}(g(\xb)) - \xb}_2^2 + \norm{\hat{f}(g(\tilde{\xb})) - \xb}_2^2  \right] + \text{MMD}(g(\xb)_{[1, 3]}, g(\tilde{\xb})_{[1, 3]}),
\end{equation*}
where $\xb, \tilde{\xb}$ denote the observational and ninterventional data, respectively.

Further training details are summarized in~\cref{tab:train_params_synthetic}

 \begin{table}
    \centering
    \begin{minipage}[b]{\textwidth}
    \centering
        \caption{Training setup for synthetic ablations in~\cref{subsec:synthetic_ablation}.}
        \label{tab:train_params_synthetic}
        \begin{tabular}{ll}
        \toprule
         \rowcolor{Gray!20} \textbf{Parameter}   & \textbf{Value} \\
         \midrule
          Mixing function   & 3-layer MLP\\
          Encoder & 3-layer MLP\\
          Decoder & 3-layer MLP\\
          Hidden dim& 128 \\
          Activation & Leaky-ReLU\\
          Optimizer & Adam\\
          Adam: learning rate& 1e-4\\
          Adam: beta1 & 0.9\\
          Adam: beta2 & 0.999\\
          Adam: epsilon & 1e-8\\
          Batch size& 4000\\
          Sample size& 200,000\\
          \# Epochs & 500\\
          \bottomrule
          \end{tabular}
    \end{minipage}
\end{table}

\section{Further Discussions and Connections to Other Fields}
In this paper, we take a closer look at the wide range of CRL methods.
Interestingly, we find that the differences between them may often be more related to ``semantics" than to fundamental methodological distinctions.
We identified two components involved in identifiability results: preserving information of the data and a set of known invariances. 
Our results have two immediate implications. First, they provide new insights into the ``CRL problem," particularly clarifying the role of causal assumptions. 
We have shown that while learning the graph requires traditional causal assumptions such as additive noise models or access to interventions, identifying the causal variables may not. 
This is an important result, as access to causal variables is standalone useful for downstream tasks, e.g., for training robust downstream predictors or even extracting pre-treatment covariates for treatment effect estimation~\citep{yao2024marrying}, even without knowledge of the full causal graph.
Second, we have exemplified how causal representation can lead to successful applications in practice. 
We moved the goal post from a characterization of specific assumptions that lead to identifiability, which often do not align with real-world data, to a general recipe that allow practitioners to specify known invariances in their problem and learn representations that align with them. 
In the domain generalization literature, it has been widely observed that invariant training methods often do not consistently outperform empirical risk minimization (ERM).
In our experiments, instead, we have demonstrated that the specific invariance enforced by V-REx~\citep{krueger2021out} entails good performance in our causal downstream task~\pcref{subsec:istant}. 
Our paper leaves out certain settings concerning identifiability that may be interesting for future work, such as discrete variables and finite samples guarantees. 

One question the reader may ask, then, is ``\textit{so what is exactly \underline{causal} in CRL?}''. 
We have shown that the identifiability results in typical CRL are primarily based on invariance assumptions, which do not necessarily pertain to causality. We hope this insight will broaden the applicability of these methods. 
At the same time, we used causality as a language describing the ``parameterization'' of the system in terms of latent causal variables with associated known symmetries. Defining the symmetries at the level of these causal variables gives the identified representation a causal meaning, important when incorporating a graph discovery step or some other causal downstream task like treatment effect estimation. Ultimately, our representations and latent causal models can be ``true'' in the sense of~\citep{peters2014causal} when they allow us to predict ``causal effects that one observes in practice''.  
Overall, our view also aligns with ``phenomenological'' accounts of causality~\citep{janzing2024phenomenological}, that define causal variables from a set of elementary interventions. In our setting too, the identified latent variables or blocks thereof are directly defined by the invariances at hand.  From the methodological perspective, all is needed to learn causal variables is for the symmetries defined over the causal latent variables to entail some statistical footprint across pockets of data. If variables are available, learning the graph has a rich literature~\citep{peters2017elements}, with assumptions that are often compatible with learning the variables themselves.
Our general characterization of the variable learning problem opens new frontiers for research in representation learning:

\subsection{Representational Alignment}
Several works (\cite{li2015convergent,moschella2022relative,kornblith2019similarity,huh2024platonic}) have highlighted the emergence of similar representations in neural models trained independently. In \cite{huh2024platonic} is hypothesized that neural networks, trained with different objectives on various data and modalities, are converging toward a \emph{shared} statistical model of reality within their representation spaces.  To support this hypothesis, they measure the alignment of representations proposing to use a mutual nearest-neighbor metric, which measures the mean intersection of the k-nearest neighbor sets induced by two kernels defined on the two spaces, normalized by k.
This metric can be an instance to the distance function in our formulation in \cref{thm:ID_from_multi_sets}.
Despite not being optimized directly,  several models in multiple settings (different objectives, data and modalities) seem to be aligned, hinting at the fact that their individual training objectives may be respecting some unknwon symmetries. 
A precise formalization of the latent causal model and identifiability in the context of foundational models remains open and will be objective for future research.

\subsection{Environment Discovery} Domain generalization methods generalize to distributions potentially far away from the training, distribution, via learning representations invariant across distinct environments.
However this can be costly as it requires to have label information informing on the partition of the data into environments. Automatic environment discovery (\cite{pmlr-v139-creager21a,arefin2024unsupervised,xrm_Pezeshki2024}) attempts to solve this problem by learning to recover the environment partition. This is an interesting new frontier for CRL, discovering data symmetries as opposed to only enforcing them. For example, this would correspond to having access to multiple interventional distributions but without knowing which samples belong to the same interventional or observational distribution. Discovering that a data set is a mixture of distributions, each being a different intervention on the same causal model, could help increase applicability of causal representations to large obeservational data sets. We expect this to be particularly relevant to downstream tasks were biases to certain experimental settings are undesirable, as in our case study on treatment effect estimation from high-dimensional recordings of a randomized controlled trial. 

\subsection{Geometric Deep Learning}
Geometric deep learning (GDL)  (\cite{bronstein2017geometric,bronstein2104geometric}) is a well estabilished learning paradigm which involves encoding a geometric understanding of data as an inductive bias in deep learning models, in order to obtain more robust models and improve performance. One fundamental direction for these priors is to encode symmetries and invariances to different types of transformations of the input data, e.g. rotations or group actions (\cite{cohen2016group,cohen2018spherical}), in representational space. Our work can be fundamentally related with this direction, with the difference that we don't aim to model \emph{explicitly} the transformations of the input space, but the invariances defined at the latent level. While an initial connection has been developed for disentanglement \cite{fumero2021learning,higgins2018towards}, a precise connection between GDL and CRL remains a open direction. We expect this to benefit the two communities in both directions: (i) by injecting geometric priors in order to craft better CRL algorithms and (ii) by incorporating causality into successful GDL  frameworks, which have been fundamentally advancing challenging real-world problems, such as protein folding (\cite{jumper2021highly}).

\newcommand{\headdef}{
\toprule
\rowcolor{Gray!20}
{\footnotesize \bf Work} & 
{\footnotesize \bf Causal Model} & 
{\footnotesize \bf Mixing Function} & 
{\footnotesize \bf Invariance} & 
{\footnotesize \bf Source of invariance, Inv.\ subset $A$} & 
{\footnotesize \bf Invariance reg.} & 
{\footnotesize \bf Sufficiency reg.} & 
{\footnotesize \bf Identifiability} &
{\footnotesize \bf Expl.}\\
\midrule
}

\noindent
  \NetTableWidth=\dimexpr
    \linewidth
    - 8\tabcolsep
    - 10\arrayrulewidth 
  \relax
\renewcommand{\arraystretch}{1.1}%

{\everymath{\scriptstyle} 
\everydisplay{\scriptstyle}
\begin{landscape}
{\footnotesize \tabcolsep=8pt  
\begin{xltabular}{\textwidth}{
    m{.1\NetTableWidth}%
    m{.1\NetTableWidth}%
    m{.1\NetTableWidth}%
    m{.2\NetTableWidth}%
    m{.20\NetTableWidth}%
    m{.26\NetTableWidth}%
    m{.26\NetTableWidth}%
    m{.14\NetTableWidth}%
    m{.04\NetTableWidth}%
}
\caption{\looseness-1  \textbf{A non-exhaustive summary of existing identifiability results for Causal Representation Learning.}
All of the listed works assume injectivity of the mixing function and causal sufficiency (Markovianity) for the causal latent variables. Many listed papers depend on further technical assumptions and could yield additional results which we omitted for clarity; see references for details. In the table, {``not assigned"} means that the practical method did not directly enforce the invariance principle but considered other algorithmic designs that still implicitly preserve the data symmetries.}
\vspace{1em}
\label{tab:related_work}\\
\headdef\endfirsthead
\headdef\endhead
\bottomrule\endfoot
\multicolumn{9}{c}{\textsc{Interventional/Multi-environment CRL}}\\[1.2em]
\citet[][Thms.~1 \& 2]{squires2023linear} 
& linear 
& linear 
& distributional
& {perfect} intervention per node
&  $\operatorname{rank}(H^{\intercal}\Delta_k H) \overset{!}{=} 1 $ for source nodes; linear encoder $g(\xb) = H\xb$, where $\Delta_k:= B_k^{\intercal} B_k - B_0^\intercal B_0,
    \zb = B_k^{-1} \epsilon$
& $g$ invertible by assumption 
& affine-id.\ and  partial order preserving graph-id. 
& \hyperlink{single_node}{(a)}
\\
\midrule
\citet[][Thm.~2]{ahuja2024multi} 
& nonparam.\ 
& finite-deg. poly. 
& marginal
& single-node imperfect interventions on variant latents
& $\sum_{k, k^\prime}\sum_{j \in A}$ \[\text{MMD}(p^k_{[g(\xb)]_j}, p^{k^\prime}_{[g(\xb)]_j})\] 
&  $\sum_k \EE_{\xb^k} \norm{\hat{f}(g(\xb^k)) - \xb^k}_2^2$ 
& block affine-id. 
& - 
\\
\midrule
 \citet[][Thm.~3]{ahuja2024multi} 
& nonparam.\ 
& finite-deg. poly. 
& marginal
& multi-node imperfect interventions on variant latents
& $\sum_{k, k^\prime}\sum_{j \in A}$ \[\text{MMD}(p^k_{[g(\xb)]_j}, p^{k^\prime}_{[g(\xb)]_j})\] 
&  $\sum_k \EE_{\xb^k} \norm{\hat{f}(g(\xb^k)) - \xb^k}_2^2$ 
& block affine-id. 
& - 
\\
\midrule
\citet[][Thm.~4]{ahuja2024multi} 
& nonparam.\ 
& finite-deg. poly. 
&  marginal support
& imperfect interventions on variant latents
& $\sum_{k, k^\prime} \sum_{j \in A}$ \[\norm{\text{bnd}(\hat{\Zcal}^{k}_j) - \text{bnd}(\hat{\Zcal}^{k^\prime}_j)}_2^2\]
& $\sum_k \EE_{\xb^k} \norm{\hat{f}(g(\xb^k)) - \xb^k}_2^2$ 
& block affine-id.
& -
\\
\midrule
\citet{buchholz2023learning} 
& linear Gaussian 
& nonparam.\ 
& marginal
& {perfect} intervention per node
& $- \mathbb{E}_{l \sim \mathcal{U}(\{0, k\})} \mathbb{E}_{\xb^{l}}$ $ \ln\left( e^{\mathbf{1}_{l=k} g_k(\xb^l)} \right)$
& $ \mathbb{E}_{l \sim \mathcal{U}(\{0, k\})} \mathbb{E}_{\xb^{l}} $ $ \ln\left( e^{g_k(\xb^l)} + 1 \right)
$
& affine id.\ + graph id. 
& \hyperlink{single_node}{(a)}
\\
\midrule
\citet[][Thm.~16]{varici2023score} 
& nonparam.\ 
& linear 
&  distributional
& {perfect} intervention per node
& $\norm{\Delta_{\xb}^s(U^\intercal)}_0$. For all $j, k \in [N]$, its element $[\Delta_{\xb}^s(U^\intercal)]_{j, k} = $ $$ \mathbf{1}([U^\intercal S({\xb^0})]_j \overset{P_{\xb^{0,k}}}{\neq} [U^\intercal S({\xb^k})]_j),$$ $g(\xb) := U^{+} \xb$  
& $g$ invertible by assumption 
& affine-id.\ + graph-id. 
& \hyperlink{single_node}{(a)}
\\
\midrule
\citet[][Thm.~13]{varici2023score} 
&  nonparam.\ 
& linear 
& distributional
& {imperfect} intervention per node
& $\norm{\Delta_{\xb}^s(U^\intercal)}_0$. For all $j, k \in [N]$, its element $[\Delta_{\xb}^s(U^\intercal)]_{j, k} = $ $$ \mathbf{1}([U^\intercal S({\xb^0})]_j \overset{P_{\xb^{0,k}}}{\neq} [U^\intercal S({\xb^k})]_j),$$ $g(\xb) := U^{+} \xb$   
& $g$ invertible by assumption 
& block affine-id.\ + graph-id. 
& \hyperlink{single_node}{(a)}
\\
\midrule
\citet[][Thm.~3]{varici2024general} 
& nonparam.\ 
& nonparam.\ 
&  interventional target 
& {paired perfect} intervention per node
& $\min \norm{\Delta^s(g)}_0$ s.t.\ it is diagonal. $\Delta^s(g)_{j, k} = \EE \left[ \vert [S({g(\xb^k)}) - S({g(\xb^{k^\prime})})]_j \vert \right]$ 
& $g$ invertible by assumption 
& element-id.\ + graph-id.\
& \hyperlink{paired_single_node}{(c)}
\\
\midrule
\citet[][Thm.~1]{varici2024linear}
& nonparam.\ 
& linear 
&  distributional
& {linearly independent multi-node perfect} intervention
&  Linear encoder $g(\xb) = H \xb$, \[H^*_i \in \operatorname{im}(\Delta s_{\xb} \wb_i) \setminus \text{span}(H^*_{[i-1]})\] such that the $\dim$ of \[\operatorname{proj}_{\operatorname{null}\left(H^*_{[i-1]}\right)}\operatorname{im}\left(\Delta S_{\xb}\wb_i\right)\] equals one. 
& $g$ invertible by assumption 
& affine id.\ + graph id. 
& \hyperlink{multi_node}{(b)}
\\
\midrule
\citet[][Thm.~2]{varici2024linear}
& nonparam.\ 
& linear 
&  distributional
& {linearly independent multinode imperfect} intervention
& Linear encoder $g(\xb) = H \xb$, \[H^*_i \in \operatorname{im}(\Delta s_{\xb} \wb_i) \setminus \text{span}(H^*_{[i-1]})\] such that the $\dim$ of \[\operatorname{proj}_{\operatorname{null}\left(H^*_{[i-1]}\right)}\operatorname{im}\left(\Delta S_{\xb}\wb_i\right)\] equals one. 
& $g$ invertible by assumption 
& block affine-id.\ + graph id. 
& \hyperlink{multi_node}{(b)}
\\
\midrule
\citet{zhang2024identifiability} 
& nonparam.\ 
& finite-deg. poly. 
&  distributional 
& {imperfect} intervention per node
& $-\sum_{k}\text{MMD}(q_{\tilde{\xb}^k}, p_{\xb^k})$ where $\tilde{\xb}^k$ the generated ``counterfactual" pair through VAE
& $- \sum_k \EE_{\xb^k} \log p(\xb^k | g(\xb^k))$ 
& affine-id.\ + graph id.
& \hyperlink{single_node}{(a)}
\\
\midrule
\citet[Thm.~4.5]{liang2023causal} 
& nonparam.\ 
& nonparam.\  
&  marginal
&  marginal invariance from multiple fat-hand interventions on the same set of interventional targets $I$, invariant partition $A:=[N]\setminus I$
&  model selection
&  $- \sum_{k} \log p_{\thetab}^k(\xb^k)$
&  block-id.\ (known graph)
& \hyperlink{cauca}{(d)}
\\
\midrule
\citet[][Thm.~4.1]{von2024nonparametric}
& nonparam.\ 
& nonparam.\  
& interventional target
& paired perfect intervention per node
& model selection
& $- \sum_{k} \log p_{\thetab}^k(\xb^k))$
& element-id.\ + graph-id
& \hyperlink{paired_single_node}{(c)}
\\
\midrule
\multicolumn{9}{c}{\textsc{Multiview CRL}}\\[1.2em]
\citet{von2021self}
& nonparam.\ 
& nonparam.\  
& sample level on all realizations of $z^k_A$
& {one imperfect} fat-hand intervention
& $\norm{g(\xb^1)_{\hat{A}} - g(\xb^2)_{\hat{A}}}_2$ 
& $- \sum_k H(g(\xb^k)_{\hat{A}})$, $k\in\{1, 2\}$ 
&  block-id.\
& -
\\
\midrule
\citet{daunhawer2023identifiability}
& nonparam.\ 
& nonparam.\  
& sample level on all realizations of $z^k_A$
& {one imperfect} fat-hand intervention,  
& $\norm{g_1(\xb^1)_{\hat{A}} - g_2({\xb}^2)_{\hat{A}}}_2$ 
& $- \sum_k H(g_{k}(\xb^k)_{\hat{A}})$, $k\in\{1, 2\}$ 
&  block-id.\
& -
\\
\midrule
\citet{ahuja2022weakly}
& nonparam.\ 
& nonparam.\  
& sample level on all realizations of $z^k_A$
& {one imperfect} fat-hand intervention
& $\norm{g(\xb^1)_{\hat{A}} - g({\xb}^2)_{\hat{A}} + \delta}_2$ 
& $- \sum_k \EE_{\xb^k}\log p(\xb^k | g(\xb^k))$, $k\in\{1, 2\}$ 
&  block-id.\
& -
\\
\midrule
\citet{locatello2020weakly}
& nonparam.\ 
& nonparam.\  
&   sample level 
& {one imperfect} fat-hand intervention
& avg.\ encoding 
& $- \sum_k \EE_{\xb^k}\log p(\xb^k | g(\xb^k))$
, $k\in\{1, 2\}$ 
&  block-id.\
& -
\\
\midrule
\citet[Thm.~3.2]{yao2023multi}
& nonparam.\ 
& nonparam.\  
& sample level on all realizations of $z^k_{A}$
& partial observability
& $\sum_{k, k^\prime \in [K]}$ $\norm{g_k(\xb)_{\hat{A}} - g_{k^\prime}(\tilde{\xb})_{\hat{A}}}_2$ 
& $- \sum_{k\in [K]} H(g_{k}(\xb)_{\hat{A}})$
&  block-id.\
& - 
\\
\midrule
\citet[Thm.~3.8]{yao2023multi}
& nonparam.\ 
& nonparam.\  
& sample level on all realizations of $z^k_{A_i}$
& partial observability, $k \in V_i$
& $\sum_{k, k^\prime \in V_i}$ \[\norm{g_k(\xb)_{\hat{A}^{(i, k)}} - g_{k^\prime}(\tilde{\xb})_{\hat{A}^{(i, k^\prime)}}}_2\] 
& $- \sum_{k \in [K]} H(t_k \circ g_k(\xb))$
&  block-id
& - 
\\
\midrule
\citet{brehmer2022weakly}
& nonparam.\ 
& nonparam.\  
& sample level  
& {perfect} intervention per node
& $\kld \left(q(\Ical, \hat{\zb}^{1, 2}~|~\xb^{1,2}) \| p(\Ical, \hat{\zb}^{1, 2})\right)$ where $\hat{\zb}^k := g(\xb^k), k \in \{1, 2\}$ 
& $- \sum_k \EE_{\xb^k}\log p(\xb^k | g(\xb^k))$, $k \in \{1, 2\}$ 
& element-id.\
& -
\\
\midrule
\multicolumn{9}{c}{\textsc{Temporal CRL}}\\[1.2em]
\citet{lippe2022citris}
& nonparam.\ 
& nonparam.\  
& transitional invariance on a distributional level 
& {known-target} interventions $\Ical_t$, invariant partition $A:= [N]\setminus \Ical_t$ 
& $- H(\hat{\zb}^t_{\hat{A}^t}~|~\hat{\zb}^{t-1})$ where $\hat{\zb}^t := g(\xb^t)$
& $ - p(\xb^{t} | \xb^{t-1}, \Ical_t)$ 
& block-id.\
& -
\\
\midrule
\citet{lippe2022causal}
& nonparam.\ 
& nonparam.\  
&  transitional invariance on a distributional level 
& {known-target}, partially perfect interventions $\Ical_t$, invariant partition $A:= [N]\setminus \Ical_t$ 
& $- H(\hat{\zb}^t_{\hat{A}^t}~|~\hat{\zb}^{t-1})$ where $\hat{\zb}^t := g(\xb^t)$
& $ - p(\xb^{t} | \xb^{t-1}, \Ical_t)$ 
& block-id.\
& -
\\
\midrule
\citet{lippe2023biscuit} 
& nonparam. 
& nonparam. 
& transitional invariance on a distributional level   
& binary interventions (interventional target unknown)
& $\kld(q(\hat{\zb}^t~|~\xb^t)~\|~p(\hat{\zb}^t~|~\hat{\zb}^{t-1}, \rb^t))$, $\rb^t$ observed regime variable
& $- \log p (\xb^{t}| \hat{\zb}^t)$ 
& block-id.\
& -
\\
\midrule
\multicolumn{9}{c}{\textsc{Multitask CRL}}\\[1.2em]
\citet{lachapelle2022synergies}
& nonparam.\ 
& nonparam.\ 
& task support
& task distribution, overlapping task supports, number of causal variables known
& \[\sum_t \norm{\hat{\wb}^{(t)}}_{2,1} \]
& \[\sum_t \Rcal (\hat{\wb}^{(t)} \circ g)\]
& affine-id.\
& \hyperlink{multitask}{(e)}
\\
\midrule
\citet{fumero2024leveraging}
& nonparam.\ 
& nonparam.\ 
& task support
& task distribution, overlapping task supports
& \[H(\tilde{\wb}) + \sum_t \norm{\hat{\wb}^{(t)}}_{1}\]
& \[\sum_t \Rcal (\hat{\wb}^{(t)} \circ g)\]
& element-id.\
& \hyperlink{multitask}{(e)}
\\
\midrule
\multicolumn{9}{c}{\textsc{Domain Generalization}}\\[1.2em]
\citet{sagawa2019distributionally}
& nonparam.\ 
& nonparam.\ 
& risk
& invariant relationship between label and invariant features, preserved under covariate shift
& \[\max_{k \in [K]} \Rcal^k(\wb \circ g)\]
& \[\max_{k \in [K]} \Rcal^k(\wb \circ g)\]
& NA
& \hyperlink{domain_generalization}{(f)}
\\
\midrule
\citet{arjovsky2020invariantriskminimization}
& nonparam.\ 
& nonparam.\ 
& risk
& invariant relationship between label and invariant features, preserved under covariate shift
& \[\norm{\nabla_{\wb,\wb=1} \Rcal^k(\wb \circ g)}^2\]
& \[\sum_{k \in [K]} \Rcal^k(\wb \circ g)\]
& NA
& -
\\
\midrule
\citet{krueger2021out} 
& nonparam.\ 
& nonparam.\ 
& risk
& invariant relationship between label and invariant features, preserved under covariate shift 
& \[\operatorname{Var}(\{\Rcal^k(\wb \circ g)\}_{k \in [K]})\] 
& \[\sum_{k \in [K]} \Rcal^k(\wb \circ g)\]
& NA
& \hyperlink{domain_generalization}{(f)}
\\
\midrule
\citet{ahuja2022invarianceprinciplemeetsinformation}
& nonparam.\ 
& nonparam.\ 
& risk
& invariant relationship between label and invariant features, preserved under covariate shift
& \[\norm{\nabla_{\wb,\wb=1} \Rcal^k(\wb\circ g)}^2 \]
& \[\sum_{k \in [K]} \Rcal^k(\wb \circ g) + \operatorname{Var}(\Rcal)\]
& NA
& -
\\
\bottomrule
\end{xltabular}
}
\end{landscape}
} 
\end{document}